\documentclass[final,12pt]{colt2026} 

\title[On the Power of Adaptivity for $\varepsilon$-Best Arm Identification in Linear Bandits]{On the Power  of Adaptivity for $\varepsilon$-Best Arm Identification in Linear Bandits}\usepackage{times}

\coltauthor{%
 \Name{Arnab Maiti} \Email{arnabm2@uw.edu}\\
 \addr University of Washington
 \AND
 \Name{Yunbei Xu} \Email{yunbei@nus.edu.sg}\\
 \addr National University of Singapore%
 \AND
 \Name{Kevin Jamieson} \Email{jamieson@cs.washington.edu}\\
 \addr University of Washington
}

\usepackage{url}
\usepackage[margin=1in]{geometry}
\usepackage{amsfonts,amsopn,bm,color,mathtools,booktabs,multirow,appendix,caption}
\PassOptionsToPackage{ruled,vlined,linesnumbered}{algorithm2e}
\usepackage{natbib}
\usepackage{bbm}
\usepackage{enumitem}
\setlist[itemize]{noitemsep, topsep=0pt, leftmargin=11pt}

\def\E{\mathbb{E}}
\def\P{\mathbb{P}}
\def\R{\mathbb{R}}
\def\1{\bm{1}}

\newcommand{\mc}[1]{\mathcal{#1}}

\newcommand{\ip}[2]{\left\langle #1,\,#2 \right\rangle}

\allowdisplaybreaks
\begin{document}

\maketitle

\begin{abstract}

We study the minimax sample complexity of $\varepsilon$-best arm identification in linear bandits, a classical pure-exploration problem. Given a compact action set $\mathcal{X}$ that spans $\mathbb{R}^d$ and an unknown reward vector $\theta\in\mathbb{R}^d$, the goal is to output an arm $\widehat{x}\in\mathcal{X}$ such that $\langle \widehat{x},\theta\rangle \ge \max_{x\in\mathcal{X}} \langle x,\theta\rangle - \varepsilon$ with probability at least $1-\delta$, using as few samples as possible. Our aim is to better understand the power and limitations of adaptivity in this setting.

We begin with non-adaptive algorithms. We present a non-adaptive fixed-design method with sample complexity $\mathcal{O}\!\left(\frac{d\log(1/\delta)}{\varepsilon^2}+\frac{w(\mathcal{X})^2}{\varepsilon^2}\right)$, where $w(\mathcal{X})$ is a Gaussian width term dependent on $\mathcal{X}$, and we prove a matching lower bound $\Omega\!\left(\frac{d\log(1/\delta)}{\varepsilon^2}+\frac{w(\mathcal{X})^2}{\varepsilon^2}\right)$ for all non-adaptive fixed-design methods. Moreover, $w(\mathcal{X})\le \mathcal{O}(d)$ for general $\mathcal{X}$, which is tight for sets such as the unit $\ell_2$ ball, and $w(\mathcal{X})\le \mathcal{O}(\sqrt{d\log|\mathcal{X}|})$ when $\mathcal{X}$ is finite, which is tight for the canonical basis $\{e_1,\ldots,e_d\}$.

We then turn to adaptive sampling. For any finite action set $\mathcal{X}$, we prove the existence of an adaptive algorithm with sample complexity $\mathcal{O}\!\left(\frac{d\log(1/\delta)}{\varepsilon^2}+\frac{d\log(|\mathcal{X}|/d)}{\varepsilon^2}\right)$ via a generalization of Median Elimination, which is known to yield a $\log d$ improvement for the canonical basis. This raises a structural question: beyond the canonical basis, are there structured action sets for which adaptivity yields only logarithmic-factor improvements over the optimal non-adaptive rate? We answer in the affirmative for several natural action sets, namely the hypercube, the $\ell_2$ ball, $m$-sets, and multi-task multi-armed bandits.

Finally, we show that logarithmic improvements are not the whole story. To our knowledge, we provide the first construction of an action set $\mathcal{X}$ for which adaptivity yields a \emph{polynomial-factor improvement} over every non-adaptive algorithm. A key ingredient behind this separation is an $\ell_2$-norm estimation subroutine: we design an adaptive algorithm that uses $\mathcal{O}\!\left(\frac{d\log(1/\delta)}{\varepsilon^2}\right)$ samples from the unit $\ell_2$ ball in $\mathbb{R}^d$ and outputs an estimate $\widehat r$ satisfying $|\widehat r-\|\theta\|_2|\le \varepsilon$ with probability at least $1-\delta$, where $\theta$ is the unknown reward vector.

Taken together, these results illustrate when adaptivity can offer only modest savings and when it can enable genuine polynomial gains, sharpening our understanding of the role of adaptivity and geometry in pure exploration and experimental design.

\end{abstract}

\begin{keywords}%
  linear bandits, $\varepsilon$-best arm identification, adaptive vs non-adaptive designs, minimax sample complexity, polynomial gap, action set geometry, experimental design and pure exploration %
\end{keywords}

\section{Introduction}
Experimental design is a classical and widely studied topic dating back to Fisher’s foundational work \citep{fisher1935design}, with applications spanning clinical trials, A/B testing, engineering, and scientific discovery. A major part of its appeal is that it allows one to commit in advance to a fixed set of measurements to collect, that is, a non-adaptive design; in many linear settings, there is a rich theory for constructing such designs \citep{pukelsheim2006optimal}.  Such non-adaptive designs also offer practical advantages over adaptive approaches, including computational simplicity and the ability to parallelize or batch data collection. Moreover, in certain linear settings, such non-adaptive approaches are known to be near-minimax optimal \citep{arias2012fundamental,even2002pac,fan2025universal}. Motivated by these advantages, we revisit a fundamental sequential decision problem in a linear setting, namely {pure exploration in} linear bandits, and ask how much adaptivity can help beyond the best non-adaptive design. In particular, we study the minimax complexity of non-adaptive designs and ask whether the structure of the action set confines adaptive sampling to logarithmic gains, or allows polynomial improvements over any non-adaptive design.

In this paper, we study the $\varepsilon$-best arm identification  problem {(i.e., pure exploration)} in linear bandits. We are given a compact arm set $\mc{X}\subset \R^d$ with $\mathrm{span}(\mc{X})=\R^d$ and an unknown reward vector $\theta\in\R^d$. At each round $t=1,2,\dots$, an algorithm selects an arm $x_t\in\mc{X}$ and observes $y_t=\langle x_t,\theta\rangle+\eta_t$, where $\eta_t\sim\mc{N}(0,1)$\footnote{All our algorithmic results extend to independent mean-zero subgaussian noise with bounded variance proxy. We restrict attention to Gaussian noise for simplicity of presentation.}.
 Given $\varepsilon\in(0,1)$ and $\delta\in(0,1)$, the objective is to design an algorithm, consisting of a sampling rule and a stopping time, such that upon stopping it outputs $\widehat{x}\in\mc{X}$ satisfying $\P\!\left(\max_{x\in\mc{X}}\langle x-\widehat{x},\theta\rangle\le \varepsilon\right)\ge 1-\delta$. We call an algorithm non-adaptive if its sampling rule does not depend on the observed rewards $y_t$, and adaptive otherwise. Our goal is to characterize the power and limitations of adaptivity in this problem setting.

When $\mathcal{X}=\{e_1,e_2,\ldots,e_d\}$, the problem reduces to the classical stochastic multi-armed bandit setting. A simple non-adaptive algorithm samples each arm $\mathcal{O}\!\left(\frac{\log(d/\delta)}{\varepsilon^2}\right)$ times and outputs the arm with the largest empirical mean. A union bound shows that, with probability at least $1-\delta$, the returned arm is $\varepsilon$-best. One way to extend this approach to linear bandits is via a fixed design. In optimal experimental design, a design distribution $\lambda$ is a probability distribution over $\mathcal{X}$. Using a finitely supported $\lambda$, one can construct a fixed design, that is, a deterministic sequence of arms, in various ways. Typically, the number of times an arm $x\in\mathcal{X}$ appears in the sequence is chosen proportional to $\lambda(x)$, though other constructions are also possible.
The sampling rule then pulls the arms in this predetermined order and observes the corresponding rewards. Using these observations, one typically forms an unbiased estimator $\widehat{\theta}$ and outputs the empirically best arm. We refer to such procedures as non-adaptive fixed-design algorithms.

One can view the non-adaptive multi-armed bandit algorithm discussed above as a fixed-design method induced by the uniform design distribution over the arms. This procedure is already near-optimal for multi-armed bandits: adaptive algorithms such as Median Elimination \citep{even2002pac} achieve the optimal minimax sample complexity $\mathcal{O}\!\left(\frac{d\log(1/\delta)}{\varepsilon^2}\right)$, improving over the optimal non-adaptive rate only by a logarithmic factor. This motivates the following question for the more general linear bandit setting:
\begin{quote}
\emph{What is the minimax sample complexity of non-adaptive fixed-design algorithms for $\varepsilon$-best arm identification in linear bandits? {Moreover, can adaptivity improve over the optimal non-adaptive rate only by logarithmic factors, as in multi-armed bandits, or are polynomial-factor improvements possible?}}
\end{quote}

\subsection{Notations}
Before we address the question raised above, we would like to mention a few notations. The unit $\ell_2$ ball refers to the set $\mathbb{B}_d := \{x \in \mathbb{R}^d : \|x\|_2 \le 1\}$.  If $\lambda$ is a distribution over the set $\mathcal{X}$, we say $\lambda \in \triangle_{\mathcal{X}}$ and will denote $A(\lambda; \mc{X}) := \mathbb{E}_{x\sim\lambda}\left[x x^\top\right]$. We define the gaussian width term $w(\mathcal{X})$ depending on $\mathcal{X}$ as 
\begin{align}
    w(\mathcal{X}):=\inf_{\lambda \in \triangle_{\mathcal{X}}}\mathbb{E}_{\eta\sim\mathcal{N}(0,I_d)}\left[\max_{x\in\mathcal{X}}\langle x,A(\lambda,\mc{X})^{-1/2}\eta \rangle\right]. \label{eqn:gauss_width}
\end{align}
For any matrix $W\in \mathbb{R}^{d\times d}$ and a vector $x\in\mathbb{R}^d$, we will denote $\|x\|_W^2 := x^\top W x$. 
For any positive scalar $x$ define $(x)_+ = \max\{1,x\}$ as the maximum of $x$ and one, \emph{not} zero.
Define $x_* = \arg\max_{x \in \mc{X}} \langle x ,\theta \rangle$.

In this paper, we focus on $(\varepsilon,\delta)$-PAC algorithms that, for any $\varepsilon\in(0,1]$ and $\delta\in(0,1)$, use $\frac{H_1\log(1/\delta)+H_2}{\varepsilon^2}$ samples, where $0<H_1\leq H_2$, and output $\hat{x}\in\mathcal{X}$ such that $\mathbb{P}(\langle x_*-\hat{x},\theta\rangle \le \varepsilon) \ge 1-\delta$.
\subsection{Our Contributions and Techniques}
We now address the question posed above by summarizing our main contributions and techniques. We first present a non-adaptive algorithm based on a fixed design with sample complexity $\mathcal{O}\!\left(\frac{d\log(1/\delta)}{\varepsilon^2}+\frac{w(\mathcal{X})^2}{\varepsilon^2}\right)$, where $w(\mathcal{X})$ is the Gaussian width term defined by \eqref{eqn:gauss_width} depending on $\mathcal{X}$. Our fixed design has length $\mathcal{O}\!\left(\frac{d\log(1/\delta)}{\varepsilon^2}+\frac{w(\mathcal{X})^2}{\varepsilon^2}\right)$ and is constructed using two design distributions, $\lambda_1$ and $\lambda_2$. Here, $\lambda_1$ minimizes $\max_{x\in\mathcal{X}}\|x\|^2_{A(\lambda;\mathcal{X})^{-1}}$, while $\lambda_2$ minimizes $\mathbb{E}_{\eta\sim\mathcal{N}(0,I_d)}\!\left[\max_{x\in\mathcal{X}}\big\langle x, A(\lambda;\mathcal{X})^{-1/2}\eta\big\rangle\right]$. Given the resulting samples, we form the unbiased least-squares estimator $\widehat{\theta}$ and output an arm $\widehat{x}\in\arg\max_{x\in\mathcal{X}}\langle x,\widehat{\theta}\rangle$. Using the Borell-TIS inequality, we show that $\widehat{x}$ is an $\varepsilon$-best arm with probability at least $1-\delta$.

We then prove a matching minimax lower bound of $\Omega\!\left(\frac{d\log(1/\delta)}{\varepsilon^2}+\frac{w(\mathcal{X})^2}{\varepsilon^2}\right)$ for any non-adaptive fixed-design algorithm. We first show a lower bound of $\Omega\!\left(\frac{d\log(1/\delta)}{\varepsilon^2}\right)$ for any algorithm (possibly adaptive). For the $\Omega\!\left(\frac{w(\mathcal{X})^2}{\varepsilon^2}\right)$ lower bound, our proof reduces $\varepsilon$-best arm identification to bounding the minimax simple regret after $T$ rounds. We then select a Gaussian prior over the reward vector $\theta$ tailored to the fixed design $(x_t)_{t\in[T]}$, more precisely to the design matrix $\sum_{t=1}^T x_t x_t^\top$, and use it to derive a lower bound on the simple regret. 

We can make our algorithm adaptive as follows. We partition the arm set into $d$ regions and attempt to identify a candidate good arm from each region. We then run Median Elimination on these $d$ candidate arms to obtain an $\varepsilon$-best arm. In certain cases, this yields a $\log d$ improvement, since the analysis can be localized to the region containing the best arm and the Borell--TIS inequality inequality is applied only over that region.

This observation also motivates us to ask whether there are structured action sets beyond the multi-armed bandit case for which adaptivity yields at most logarithmic factors improvement over our non-adaptive algorithm. Towards that we study several structured action sets of interest, namely the hypercube, the unit $\ell_2$-ball, $m$-sets, and multi-task multi-armed bandits (MAB), and characterize the corresponding Gaussian width terms $w(\mathcal{X})$. For each of these sets, we show, by establishing an adaptive lower bound, that adaptive algorithms can improve over our non-adaptive fixed-design sample complexity by at most logarithmic factors only. Our main technical contributions in this part are the lower bound arguments for $m$-sets and multi-task MAB. We begin by constructing a family of hard instances that is oblivious to the algorithm: each instance specifies a reward vector whose coordinates are chosen as $\varepsilon$ times coordinate-specific scaling factor. We then show that, unless the algorithm collects sufficiently many samples, it must incur simple regret $\Omega(\varepsilon)$. To analyze the simple regret, we localize the KL-divergence-based argument. Concretely, we restrict attention to a structured subset of instances in which a portion of the reward vector is fixed and any two instances differ in exactly two coordinates within the remaining portion. This reduction enables a KL-divergence analysis closely analogous to the classical multi-armed bandit setting. Finally, a global averaging argument lifts the localized bound to the full family, yielding simple regret $\Omega(\varepsilon)$ and, consequently, a minimax sample complexity lower bound.

The above result leads us to our main question: does adaptivity in linear bandits improve over non-adaptive approaches by only logarithmic factors? We answer this in the negative by constructing an action set $\mathcal{X}$ for which adaptivity yields a {\it polynomial-factor} improvement over all non-adaptive algorithms. We take $\mathcal{X}$ to be the union of $k=\mathrm{poly}(d)$ unit $\ell_2$-balls, each contained in a distinct $d$-dimensional subspace. A naive non-adaptive approach allocates on the order of $d^2/\varepsilon^2$ samples to each ball to form an unbiased estimator and identify an $\varepsilon$-best arm, leading to total sample complexity $kd^2/\varepsilon^2$. In contrast, an adaptive strategy can save a factor of $d$ by first identifying a ball that contains a near-optimal arm and then allocating additional samples to that ball to recover a near-optimal arm using just on the order of $\frac{kd + d^2}{\varepsilon^2}$ samples.

A key step in identifying such a “good” unit ball is an $\ell_2$-norm estimation subroutine, which aims to estimate the best value over the unit ball, namely the $\ell_2$-norm of the reward vector. Specifically, we design an adaptive algorithm that takes $\mathcal{O}\!\left(\frac{d\log(1/\delta)}{\varepsilon^2}\right)$ samples from the unit $\ell_2$-ball in $\mathbb{R}^d$, observes noisy rewards of the form $\langle x,\theta\rangle+\eta$ for an unknown vector $\theta\in\mathbb{R}^d$, and outputs an estimate $\widehat r$ such that, with probability at least $1-\delta$, $\widehat r\in[\|\theta\|_2-\varepsilon,\|\theta\|_2+\varepsilon]$. A key technical component of our algorithm is to sample multiple Rademacher unit vectors uniformly at random and take multiple observations along each sampled direction. We then construct an $\ell_2$-norm estimator by squaring the empirical mean reward along each direction and aggregating these squared estimates. This squaring step leads to a subexponential-tail analysis, which is the main non-trivial part of our proof required to establish the stated sample complexity. 

By constructing such a set $\mathcal{X}$, we highlight a key technical insight: accurately estimating the best value over a region $\mathcal{Z}\subset\mathcal{X}$, namely $\max_{x\in\mathcal{Z}}\langle x,\theta\rangle$, plays a crucial role in achieving improved performance. In the special case where $\mathcal{Z}$ is a unit $\ell_2$-ball, this task reduces to $\ell_2$-norm estimation. More generally, this observation suggests that region-wise value estimation is a fundamental ingredient in the design of near-optimal policies in related settings.

\subsection{Related Works}
Our work is closely related to a line of research on instance-dependent sample complexity for best arm identification \citep{soare2014best,karnin2016verification,xu2018fully,tao2018best,fiez2019sequential,jedra2020optimal,katz2020empirical,degenne2020gamification}. A key distinction between instance-dependent and minimax guarantees is the role of $\theta$. In the instance-dependent regime, one seeks sample complexity bounds of the form $\mathbf{H}_1\log(1/\delta)+\mathbf{H}_2$, where $\mathbf{H}_1$ and $\mathbf{H}_2$ depend on the arm set $\mathcal{X}$ and the reward vector $\theta$. In contrast, minimax characterizations aim for bounds of the same form with $\mathbf{H}_1$ and $\mathbf{H}_2$ depending on the arm set $\mathcal{X}$ and the accuracy parameter $\varepsilon$, but not on $\theta$, since the guarantee is worst-case over $\theta$. While several instance-dependent algorithmic ideas extend to the minimax setting, lower bounds are less transferable since they typically hinge on simple-regret arguments, and minimax complexity characterizations remain relatively sparse \citep{even2002pac,shamir2015complexity,chen2024assouad}.

One approach to designing algorithms in the instance-dependent regime is through experimental design. Early works such as \cite{soare2014best} and \cite{karnin2016verification} used G-optimal designs. Later, \cite{fiez2019sequential} proposed an algorithm based on sequential experimental design that is near-optimal for the first term $\mathbf{H}_1$, but can incur a large second term $\mathbf{H}_2$. This motivated \cite{katz2020empirical} to design an algorithm whose design distribution minimizes certain Gaussian width terms, thereby reducing $\mathbf{H}_2$. Such a Gaussian-width based approach to experimental design is also used in a related regret minimization setting by \cite{wagenmaker2021experimental}. For a detailed discussion of experimental design and its connection to linear bandits, we refer the reader to \cite{lattimore2020bandit}.

Our work is also related to the literature on norm estimation and value estimation. \cite{kong2020sublinear} studied optimal policy value estimation in contextual bandits. In a different direction, \cite{cai2011testing} studied estimating $\frac{1}{n}\sum_i |\theta_i|$ from an observation $Y\sim\mathcal{N}(\theta,I_n)$, and \cite{collier2020estimation} later generalized this to estimating $\frac{1}{n}\sum_i |\theta_i|^\gamma$ for $\gamma>0$. \cite{han2020estimation} studied $L_r$-norm estimation in Gaussian white noise models. More recently, \cite{cleanthous2025adaptive} studied adaptive estimation of the $L_2$-norm of a probability density on $\mathbb{R}^d$.


\section{Results on the Power of Adaptivity}\label{sec:non-adaptive}
In this section, we analyze the power of adaptivity for $\varepsilon$-best arm identification in linear bandits. In Section \ref{sec:upper}, we present a non-adaptive fixed design algorithm with sample complexity $\mathcal{O}\!\left(\frac{d\log(1/\delta)}{\varepsilon^2}+\frac{w(\mathcal{X})^2}{\varepsilon^2}\right)$, where $w(\mathcal{X})$ is a Gaussian width term depending on $\mathcal{X}$. We then prove a matching minimax lower bound of $\Omega\!\left(\frac{d\log(1/\delta)}{\varepsilon^2}+\frac{w(\mathcal{X})^2}{\varepsilon^2}\right)$ for non-adaptive fixed-design algorithms in Section \ref{sec:lower}. In Section \ref{sec:properties-gaussian}, we study the Gaussian width term $w(\mathcal{X})$ and characterize it for the structured sets of interest: the hypercube, the unit $\ell_2$-ball, $m$-sets, and multi-task multi-armed bandits. For each of these sets, we show in Section \ref{sec:log-adaptive} that adaptivity improves upon the minimax sample complexity of our non-adaptive algorithm by at most logarithmic factors only. Finally, in Section \ref{sec:poly-adaptive}, we construct a set $\mathcal{X}$ for which an adaptive algorithm yields a polynomial-factor improvement over all non-adaptive algorithms.

\subsection{Non-Adaptive Fixed Design Algorithm}\label{sec:upper}

In this section, we present a non-adaptive  fixed design algorithm that uses
$\mathcal{O}\!\left(\frac{d\log(1/\delta)}{\varepsilon^2}+\frac{w(\mathcal{X})^2}{\varepsilon^2}\right)$
samples and returns an $\varepsilon$-best arm with probability at least $1-\delta$.
Our analysis relies on several standard technical results, which are stated in
Appendix~\ref{appendix:technical-lemmas}.


Let $\lambda_1$ be a distribution over $\mathcal{X}$ that minimizes the quantity
$\mathbb{E}_{\eta\sim\mathcal{N}(0,I_d)}\!\left[\max_{x\in\mathcal{X}}\langle x, A(\lambda;\mathcal{X})^{-1/2}\eta\rangle\right]$.
Similarly, let $\lambda_2$ be a distribution over $\mathcal{X}$ that minimizes
$\max_{x\in\mathcal{X}} \|x\|^2_{A(\lambda;\mathcal{X})^{-1}}$.
Assume that both $\lambda_1$ and $\lambda_2$ have finite support (such minimizers always exist due to Carath\'eodory's theorem ).
Finally, define $\lambda_0$ to be the mixture distribution that samples from $\lambda_1$ with probability $1/2$ and from $\lambda_2$ with probability $1/2$.


Since $\tfrac{1}{2}A(\lambda_2;\mathcal{X}) \preceq A(\lambda_0;\mathcal{X})$, it follows that for every $x\in\mathcal{X}$,
$x^\top A(\lambda_0;\mathcal{X})^{-1}x \le 2\,x^\top A(\lambda_2;\mathcal{X})^{-1}x$.
Because $\lambda_2$ is $G$-optimal, we obtain $\|x\|^2_{A(\lambda_0;\mathcal{X})^{-1}} \le 2d$ for all $x\in\mathcal{X}$.

Similarly, since $\tfrac{1}{2}A(\lambda_1;\mathcal{X}) \preceq A(\lambda_0;\mathcal{X})$, the Sudakov--Fernique inequality yields
\[
\mathbb{E}_{\eta\sim\mathcal{N}(0,I_d)}\!\left[\max_{x\in\mathcal{X}}\langle x, A(\lambda_0;\mathcal{X})^{-1/2}\eta\rangle\right]
\le \sqrt{2}\,
\mathbb{E}_{\eta\sim\mathcal{N}(0,I_d)}\!\left[\max_{x\in\mathcal{X}}\langle x, A(\lambda_1;\mathcal{X})^{-1/2}\eta\rangle\right]
= \sqrt{2}\, w(\mathcal{X}).
\]

Now consider a fixed design $x_1,x_2,\ldots,x_T\in\mathcal{X}$ such that $\tau(A_T)\le 2\,\tau(A(\lambda_0;\mathcal{X}))$, where
$$A_T := \frac{1}{T}\sum_{i=1}^T x_i x_i^\top \text{ and }
\tau(A) := \mathbb{E}_{\eta\sim\mathcal{N}(0,I_d)}\!\left[\max_{x\in\mathcal{X}} x^\top A^{-1/2}\eta\right]^2 + 2\max_{x\in\mathcal{X}}\|x\|_{A^{-1}}^2 \log(2/\delta).$$
Such a fixed design exists for any $T\ge 180d$ \citep{katz2020empirical,allen2021near}.
By the bounds established above, it follows that $\tau(A_T)\le 4\,w(\mathcal{X})^2 + 8d\log(2/\delta)$.



Let $y_t=\langle x_t,\theta\rangle+\eta_t$ denote the noisy rewards under the fixed design, where $\eta_t\sim\mathcal{N}(0,1)$ are i.i.d.
Define the least-squares estimator
$\widehat{\theta}=\bigl(\sum_{t=1}^T x_t x_t^\top\bigr)^{-1}\sum_{t=1}^T x_t y_t$.
Note that $\widehat{\theta}$ is distributionally equivalent to $\theta+\bigl(\sum_{t=1}^T x_t x_t^\top\bigr)^{-1/2}\eta$, where $\eta\sim\mathcal{N}(0,I_d)$.
Consequently, applying the Borell--TIS inequality to the Gaussian process $V_x = x^\top(\widehat{\theta}-\theta)$, we obtain the following with probability at least $1-\delta$:
\begin{equation*}
\Big|\max_{x\in\mathcal{X}}\langle x,\widehat{\theta}-\theta\rangle\Big|
\le \frac{1}{\sqrt{T}}\cdot \mathbb{E}_{\eta\sim\mathcal{N}(0,I_d)}\!\left[\max_{x\in\mathcal{X}} x^\top A_T^{-1/2}\eta\right]
+ \frac{1}{\sqrt{T}}\cdot \sqrt{2\max_{x\in\mathcal{X}}\|x\|_{A_T^{-1}}^2\log(2/\delta)}
\le \sqrt{\frac{2\,\tau(A_T)}{T}}.
\end{equation*}

We select $\widehat{x}\in\arg\max_{x\in\mathcal{X}}\langle x,\widehat{\theta}\rangle$ and output it as our candidate $\varepsilon$-best arm.
If we set $T=360\bigl(\tfrac{d}{\varepsilon^2}\log(2/\delta)+\tfrac{w(\mathcal{X})^2}{\varepsilon^2}\bigr)$, then with probability at least $1-\delta$ we have
$\bigl|\max_{x\in\mathcal{X}}\langle x,\widehat{\theta}-\theta\rangle\bigr|\le \varepsilon/2$.
It follows that, with probability at least $1-\delta$,
$\langle \widehat{x},\theta\rangle \ge \max_{x\in\mathcal{X}}\langle x,\theta\rangle - \varepsilon$.

Hence, we have the following theorem.
\begin{theorem}\label{thm:adaptive-upper-bound-algo}
There exists a non-adaptive fixed-design algorithm that first selects a deterministic sequence of arms $x_1,x_2,\ldots,x_T\in\mathcal{X}$ with $T=360\!\left(\frac{d\log(1/\delta)}{\varepsilon^2}+\frac{w(\mathcal{X})^2}{\varepsilon^2}\right)$, then observes the corresponding noisy rewards $y_t=\langle x_t,\theta\rangle+\eta_t$, where $\eta_t\sim\mathcal{N}(0,1)$ are i.i.d., and outputs an arm $\widehat{x}\in\mathcal{X}$ such that, with probability at least $1-\delta$, $\max_{x\in\mathcal{X}} \langle x-\widehat{x},\theta\rangle \le \varepsilon$.
\end{theorem}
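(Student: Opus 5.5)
The plan is to assemble the argument sketched just before the statement into three steps: build a good design, control the estimation error uniformly over $\mathcal{X}$, then translate that into $\varepsilon$-optimality of the empirical maximizer.

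\textbf{Step 1: the design.} Take $\lambda_1$ minimizing the Gaussian width functional and $\lambda_2$ the $G$-optimal design. Both can be chosen finitely supported by Carath\'eodory, and the Kiefer--Wolfowitz theorem gives $\max_x\|x\|^2_{A(\lambda_2)^{-1}}=d$. Let $\lambda_0$ be their equal mixture. Since $A(\lambda_0)\succeq \tfrac12 A(\lambda_i)$ for $i=1,2$, we get
\[
\max_x\|x\|^2_{A(\lambda_0)^{-1}}\le 2d .
\]
For the width term, the process $x\mapsto\langle x,A(\lambda_0)^{-1/2}\eta\rangle$ has increments with variance $\|x-x'\|^2_{A(\lambda_0)^{-1}}\le 2\|x-x'\|^2_{A(\lambda_1)^{-1}}$. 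Sudakov--Fernique then gives
\[
\E\max_x\langle x,A(\lambda_0)^{-1/2}\eta\rangle\le\sqrt2\,w(\mathcal{X}).
\]

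\textbf{Step 2: rounding to a sequence.} I would invoke the rounding result of \citet{katz2020empirical,allen2021near}. It produces, for $T\ge 180d$, a deterministic sequence whose empirical matrix $A_T$ satisfies $\tau(A_T)\le 2\tau(A(\lambda_0))\le 4w(\mathcal{X})^2+8d\log(2/\delta)$. This is the step I expect to be the main obstacle if one had to prove it from scratch. The objective $\tau$ involves an expected supremum and is not of the standard smooth-convex form treated by the rounding results. So I would verify that $\tau$ is convex and monotone under $\preceq$, which follows from operator monotonicity together with Sudakov--Fernique, so that the proportional-rounding guarantee of \citet{allen2021near} applies. Here I take it as given.

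\textbf{Step 3: concentration and conclusion.} The least-squares estimate satisfies $\widehat\theta-\theta\sim\mathcal{N}(0,(TA_T)^{-1})$. The process $V_x=\langle x,\widehat\theta-\theta\rangle$ has maximal variance $\max_x\|x\|^2_{A_T^{-1}}/T$. Apply Borell--TIS to $\sup_x V_x$ and to $\sup_x(-V_x)$, with $\delta/2$ each. With probability $1-\delta$ this gives
\[
\sup_x|V_x|\le \sqrt{2\tau(A_T)/T}.
\]
Plugging in the bound from Step 2 and choosing $T=360\bigl(d\log(1/\delta)+w(\mathcal{X})^2\bigr)/\varepsilon^2$ makes this at most $\varepsilon/2$. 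One should absorb $\log(2/\delta)$ versus $\log(1/\delta)$ into the constant, say for $\delta\le 1/2$ or with slack from $T\ge180d$. Finally, for $\widehat x\in\arg\max_x\langle x,\widehat\theta\rangle$:
\[
\langle x_*-\widehat x,\theta\rangle=\langle x_*,\theta-\widehat\theta\rangle+\langle x_*-\widehat x,\widehat\theta\rangle+\langle\widehat x,\widehat\theta-\theta\rangle\le \varepsilon/2+0+\varepsilon/2=\varepsilon,
\]
which completes the proof.
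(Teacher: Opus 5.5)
Your proposal is correct and follows the paper's proof essentially step for step: the equal mixture of the width-optimal and $G$-optimal designs, Sudakov--Fernique, rounding to a fixed sequence with $\tau(A_T)\le 2\tau(A(\lambda_0))$, Borell--TIS, and the empirical maximizer. Your final step is in fact more rigorous than the paper's, because you apply Borell--TIS to both $\sup_x V_x$ and $\sup_x(-V_x)$ and then use the three-term decomposition, whereas the paper only bounds $|\max_x\langle x,\widehat\theta-\theta\rangle|$ and so does not by itself control $\langle x_*,\widehat\theta-\theta\rangle$ from below. One small correction to your Step~2 remark: the rounding lemma the paper invokes requires monotonicity under $\preceq$ and the homogeneity $F(tA)=t^{-1}F(A)$ (both of which $\tau$ satisfies), not convexity.
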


\subsection{Lower Bound for Non-Adaptive Fixed Design Algorithms}\label{sec:lower}
In this section, we prove a lower bound of $\Omega\!\left(\frac{d\log(1/\delta)}{\varepsilon^2}+\frac{w(\mathcal{X})^2}{\varepsilon^2}\right)$ on the sample complexity of any non-adaptive algorithm that commits to a fixed design $x_1,x_2,\ldots,x_T\in\mathcal{X}$, then observes the corresponding noisy rewards $y_1,y_2,\ldots,y_T$, and finally outputs an arm $\widehat{x}\in\mathcal{X}$ that is $\varepsilon$-best with probability at least $1-\delta$.

We begin by stating the following theorem, which follows from a hypothesis-testing reduction and standard KL-divergence arguments. We refer the reader to Appendix \ref{appendix:adaptive-lower-bound} for the complete proof.
\begin{theorem}\label{thm:adaptive-lower-bound}
Any algorithm (possibly adaptive) that outputs $\widehat x\in\mathcal{X}$ satisfying $\P\!\left(\max_{x\in\mc{X}}\langle x-\widehat{x},\theta\rangle\le \varepsilon\right)\ge 1-\delta$ for all $\theta\in\mathbb{R}^d$ must, for some worst-case $\theta$, use $\Omega\!\left(\frac{d\log(1/\delta)}{\varepsilon^2}\right)$ samples.
\end{theorem}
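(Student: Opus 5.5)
The plan is a change-of-measure argument over a family of instances that differ along directions of an approximate John ellipsoid of $\mathcal{X}$. First I would reduce to a well-conditioned geometry. Let $\lambda$ be a $G$-optimal design and set $A=A(\lambda;\mathcal{X})$. By Kiefer--Wolfowitz, $\max_{x\in\mathcal{X}}\|x\|^2_{A^{-1}}=d$. After the linear change of variables $x\mapsto A^{-1/2}x$ and $\theta\mapsto A^{1/2}\theta$, the inner products $\langle x,\theta\rangle$ and the observation model are unchanged. We may therefore assume $\mathbb{E}_{x\sim\lambda}[xx^\top]=I_d$ and $\|x\|_2\le\sqrt d$ for all $x\in\mathcal{X}$.

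Next I would construct the instances. Take a reference vector $\theta_0$ and perturbations $\theta_i=\theta_0+c\,\varepsilon\,v_i$ for $i=1,\ldots,d$. The directions $v_i$ are chosen so that on $\theta_i$ the set of $\varepsilon$-best arms is disjoint from that of $\theta_0$, or at least from those of the other $\theta_j$. The perturbation should cost little KL in an average direction.

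Here is the concrete choice. Since $\mathbb{E}_\lambda[xx^\top]=I_d$, the vectors in $\mathrm{supp}(\lambda)$ span all directions with total energy $d$. I would pick $d$ orthonormal directions $u_1,\ldots,u_d$ and a scale $\alpha$, and take $\theta_0=0$-like instances $\theta_{\pm i}=\pm\alpha u_i$ (or sign patterns over them). For an arbitrary algorithm with $N_x$ pulls of arm $x$, the KL divergence between $\theta$ and $\theta'$ equals $\tfrac12\sum_x \mathbb{E}[N_x]\langle x,\theta-\theta'\rangle^2$. Since $\sum_i \langle x,u_i\rangle^2=\|x\|_2^2\le d$, the sum over $i$ of these KL divergences is at most $\tfrac12 \alpha^2 d\,\mathbb{E}[T]$. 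Some index $i$ therefore has KL at most $\alpha^2\mathbb{E}[T]/2$.

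I then apply the Bretagnolle--Huber / high-probability change-of-measure lemma (Kaufmann et al.). If the event $\{\widehat x \text{ is }\varepsilon\text{-best for }\theta\}$ has probability at least $1-\delta$ under $\theta$ and at most $\delta$ under $\theta'$, the KL divergence must be at least $\log(1/(2.4\delta))$. Combining this with the averaging bound gives $\mathbb{E}[T]\gtrsim \log(1/\delta)/\alpha^2\cdot$ (the number of alternatives). That is the desired $d\log(1/\delta)/\varepsilon^2$ once $\alpha\asymp\varepsilon/\max_{x\in\mathcal{X}}\langle x,u_i\rangle$ is normalized appropriately.

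The main obstacle is guaranteeing the separation condition: the $\varepsilon$-best sets under $\theta$ and $\theta'$ must be disjoint while $\|\theta-\theta'\|$ in the relevant norm stays $O(\varepsilon)$. This depends on the geometry of $\mathcal{X}$. What I would try first is the following. Take the base instance $\theta_0=3\varepsilon\, w$ for a direction $w$ with a unique maximizer gap. Define alternatives by adding $\pm 6\varepsilon$ along $u_i$, normalized so that $\max_x\langle x,u_i\rangle-\min_x\langle x,u_i\rangle\ge$ some constant. The normalization $\mathrm{span}(\mathcal{X})=\mathbb{R}^d$ after the John-type rescaling ensures the width of $\mathcal{X}$ in every direction $u_i$ is $\Omega(1)$ (via $\mathbb{E}_\lambda\langle x,u\rangle^2=1$). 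This forces the optimal arm under $\theta_0+ 6\varepsilon u_i$ to lie on the far side, separated by more than $2\varepsilon$ from the optimal arm under $\theta_0-6\varepsilon u_i$.

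Taking $d$ such pairwise tests, the energy bound $\sum_i\langle x,u_i\rangle^2\le d$ distributes the sample budget. So every test needs $\Omega(\log(1/\delta)/\varepsilon^2)$ effective samples, and the total is $\Omega(d\log(1/\delta)/\varepsilon^2)$. If disjointness fails for some direction, I would fall back to the fact that the width in that direction is at least $1$. I would then choose $\theta$ purely along $u_i$, so the best arms under $\pm\theta$ are antipodal extremes with gap at least $2\cdot 3\varepsilon$.
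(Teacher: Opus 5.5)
Your proposal has two gaps: the scaling loses the factor $d$, and the separation step you call the main obstacle is left open.

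\textbf{Scaling.} Your accounting does not produce the factor $d$. In the $G$-normalized coordinates you have $\|x\|_2^2\le d$, so $\sum_i\langle x,u_i\rangle^2\le d$ lets a single pull inform all $d$ alternatives at once. The averaging step gives an index with $\mathrm{KL}\le\alpha^2\mathbb{E}[T]/2$. Combined with $\mathrm{KL}\ge\log(1/(2.4\delta))$, this yields only $\mathbb{E}[T]\gtrsim\log(1/\delta)/\alpha^2$. Multiplying again by ``the number of alternatives'' double-counts the $d$ already used in the averaging.

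To get $d\log(1/\delta)/\varepsilon^2$ you therefore need $\alpha\asymp\varepsilon/\sqrt d$. That means every $u_i$ must be a direction along which $\mathcal{X}$ reaches distance $\asymp\sqrt d$. The identity $\mathbb{E}_\lambda\langle x,u\rangle^2=1$ only gives $\max_x|\langle x,u\rangle|\ge 1$. It does not even give a width bound: for $\mathcal{X}=\{e_1,\dots,e_d\}$, normalized to $\{\sqrt d\,e_i\}$, the direction $\mathbf{1}/\sqrt d$ has width $0$. With your concrete instances $\theta_0\pm 6\varepsilon u_i$ at width $\Omega(1)$, the argument gives only $\Omega(\log(1/\delta)/\varepsilon^2)$, which is just the paper's two-point baseline lemma.

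The missing ingredient is to use extreme arms in isotropic position rather than an arbitrary orthonormal basis. After the L\"owner/John normalization, the paper obtains contact points $v_j\in\mathcal{X}\cap\mathbb{S}^{d-1}$ with $\sum_j c_jv_jv_j^\top=I_d$ and $\sum_j c_j=d$. The support points of your $G$-optimal design would serve equally well, since they satisfy $\|x\|_2^2=d$ exactly. The paper then uses alternatives $3\varepsilon v_J$ with $J\sim c_j/d$, so that $\mathbb{E}_J\langle x_t,3\varepsilon v_J\rangle^2\le 9\varepsilon^2/d$ per pull.

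\textbf{Separation and common reference.} The averaging over $i$ only works if every KL term is taken relative to one reference law, so that the same expected pull counts appear in each term. Your fallback to pairs $\pm\alpha u_i$ breaks this, because $\mathrm{KL}(P_{+i}\|P_{-i})$ involves pull counts under $P_{+i}$, which change with $i$. If you instead take $\theta=0$ as the common reference, every arm is $\varepsilon$-best there. The PAC guarantee then yields no event with probability $\ge 1-\delta$ under the reference and $\le\delta$ under the alternatives. For a base $3\varepsilon w$ and general $\mathcal{X}$, you have not shown that its $\varepsilon$-best region is disjoint from those of $\Omega(d)$ perturbations of the required small size.

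The paper avoids geometric separation altogether:
\begin{enumerate}
\item It tests $H_0:\theta=0$ against the mixture $H_1:\theta=3\varepsilon v_J$, and convexity of KL gives $\mathrm{KL}(\mathbb{P}_0\|\mathbb{P}_1)\le 9N\varepsilon^2/(2d)$.
\item It turns the PAC algorithm into a test by appending $8\log(2/\delta)/\varepsilon^2$ pulls of $\widehat x$ and thresholding the empirical mean at $\varepsilon$; the value of $\widehat x$ is $0$ under $H_0$ and at least $2\varepsilon$ under $H_1$.
\item It absorbs these extra pulls using the baseline bound $n\gtrsim\log(1/\delta)/\varepsilon^2$.
\item Bretagnolle--Huber then gives $N\gtrsim d\log(1/\delta)/\varepsilon^2$.
\end{enumerate}
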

We next present our proof idea for establishing a lower bound of $\Omega\left(\frac{w(\mathcal{X})^2}{\varepsilon^2}\right)$. Towards that we look at a simple regret minimization problem. Consider a deterministic sequence
$(x_t)_{t=1}^T$ with $x_t\in \mathcal{X}$. Recall that we observe
\[
y_t \;=\; \ip{x_t}{\theta}+\eta_t,\qquad \eta_t\stackrel{\text{i.i.d.}}{\sim}\mathcal N(0,1),\quad t=1,\dots,T.
\]
Define the matrix
\[
A \;:=\; \sum_{t=1}^T x_t x_t^\top \;=\; X^\top X,
\]
where $X\in\R^{T\times d}$ has rows $x_t^\top$. Assume that $A$ is invertible (the other case is considered in Appendix \ref{appendix:singular-case-lower-bound}).
An algorithm $\mathcal{A}$ observes the full sequence $\mathcal{D}=\{(x_t,y_t)\}_{t=1}^T$ and outputs $\hat x_{\mathcal{A}}\in \mathcal{X}$.
The \emph{simple regret} is
\[
r(\hat x_{\mathcal{A}},\theta) \;:=\; \max_{x\in \mathcal{X}}\ip{x-\hat x_{\mathcal{A}}}{\theta}.
\]

Fix a prior $\theta \sim \mathcal N(0,\tau^2 A^{-1})$ independent of the noise
$\eta=(\eta_1,\dots,\eta_T)\sim\mathcal N(0,I_T)$ and (if the algorithm is randomized)
an internal random seed $U$, also independent of $(\theta,\eta)$.
Let $y=X\theta+\eta$, $\mathcal D=\{(x_t,y_t)\}_{t=1}^T$, and let the (possibly randomized)
non-interactive rule output $\hat x_{\mathcal{A}}=\hat x_{\mathcal{A}}(\mathcal D,U)\in\mathcal X$. We study the minimax expected regret
\[
\mathfrak R(A;\mathcal{X}) \;:=\; \inf_{\mathcal A}\E\big[r(\hat x_{\mathcal{A}},\theta)\big],
\]
where the infimum is over all (possibly randomized) non-interactive procedures $\mathcal A$ using $\mathcal{D}$. Throughout this proof, $\mathbb E[\cdot]$ denotes expectation over all randomness in $(\theta,\eta)$ and in the algorithm (if randomized), unless stated otherwise.

We define the gaussian width term $w(\mathcal{X};A):=\E_{g\sim\mathcal N(0,I_d)}\!\left[\sup_{x\in \mathcal{X}}\ip{x}{A^{-1/2}g}\right]$. 
If we establish that $\mathfrak{R}(A;\mathcal{X}) \ge c_0\, w(\mathcal{X};A)$ for some absolute constant $c_0$, then combining this with the inequality $w(\mathcal{X};A)\ge w(\mathcal{X})/\sqrt{T}$ and the straightforward calculations in Appendix~\ref{appendix:gaussian-lower-bound} yields the following theorem.
\begin{theorem}\label{thm:adaptive-lower-bound}
Consider any non-adaptive fixed-design algorithm that first selects a deterministic sequence of arms $x_1,x_2,\ldots,x_T\in\mathcal{X}$, then observes the corresponding noisy rewards $y_t=\langle x_t,\theta\rangle+\eta_t$, where $\eta_t\sim\mathcal{N}(0,1)$ are i.i.d., and outputs an arm $\widehat{x}\in\mathcal{X}$ such that $\max_{x\in\mathcal{X}} \langle x-\widehat{x},\theta\rangle \le \varepsilon$ with probability at least $1-\delta$.
Then $T \ge \Omega\!\left(\frac{w(\mathcal{X})^2}{\varepsilon^2}\right)$.
\end{theorem}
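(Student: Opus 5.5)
The plan is to establish the Bayesian regret bound $\mathfrak R(A;\mathcal X)\ge c_0\,w(\mathcal X;A)$ under the prior $\theta\sim\mathcal N(0,\tau^2A^{-1})$ with $\tau=1$. We then turn this into a sample-size bound using $w(\mathcal X;A)\ge w(\mathcal X)/\sqrt T$. This inequality holds because $A/T=A(\lambda_T;\mathcal X)$ for the empirical design $\lambda_T$, and $w(\mathcal X)$ is an infimum over designs.

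\textbf{Posterior computation.} With $\theta\sim\mathcal N(0,A^{-1})$ and $y=X\theta+\eta$, the posterior is Gaussian. Its precision is $A+A=2A$ and its mean is $\mu=(2A)^{-1}X^\top y$. Hence $\theta=\mu+\xi$, where $\xi\sim\mathcal N(0,\tfrac12A^{-1})$ is independent of $\mathcal D$, and therefore independent of $\hat x_{\mathcal A}$ and of $\mu$. We have $\mu\sim\mathcal N(0,\tfrac12A^{-1})$ marginally, since its covariance is $A^{-1}-\tfrac12A^{-1}$.

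\textbf{Reduction to a Gaussian width.} For any rule, conditionally on $\mathcal D$ we have $\E[\langle\hat x_{\mathcal A},\theta\rangle\mid\mathcal D]=\langle\hat x_{\mathcal A},\mu\rangle\le\sup_{x}\langle x,\mu\rangle$. Therefore
\[
\E[r]\ \ge\ \E\Big[\sup_x\langle x,\mu+\xi\rangle\Big]-\E\Big[\sup_x\langle x,\mu\rangle\Big]=w(\mathcal X;A)-\tfrac{1}{\sqrt2}\,w(\mathcal X;A).
\]
This gives $c_0=1-1/\sqrt2$. The expectations are finite because $\mathcal X$ is compact.

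\textbf{Converting expected regret to the PAC statement.} The regret is bounded by $\sup_x\langle x,\theta\rangle-\inf_x\langle x,\theta\rangle\le 2\sup_x|\langle x,\theta\rangle|$. We split the expectation on the event where the algorithm succeeds. On that event the regret is at most $\varepsilon$, so $\E[r]\le\varepsilon+\E[2\sup_x|\langle x,\theta\rangle|\,\mathbf 1\{\text{fail}\}]$. Applying Cauchy--Schwarz to the failure term with $\P(\text{fail})\le\delta$ bounds it by $2\sqrt{\delta}\,\E[\sup_x|\langle x,\theta\rangle|^2]^{1/2}$.

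Gaussian concentration (Borell--TIS) controls the second moment by $O(w(\mathcal X;A)^2+\sigma^2)$, where $\sigma^2=\max_x\|x\|^2_{A^{-1}}$. The supremum of $|\langle x,\theta\rangle|$ has expectation $O(w(\mathcal X;A)+\sigma)$, after symmetrizing $\mathcal X\cup-\mathcal X$. Since $\sigma\lesssim w(\mathcal X;A)+\sigma$ trivially, we need $\sigma$ to be comparable to $w(\mathcal X;A)$. This holds because $\E\sup_x\langle x,A^{-1/2}g\rangle\ge\E\max(\langle x_1,\cdot\rangle,\langle x_2,\cdot\rangle)\gtrsim\|x_1-x_2\|_{A^{-1}}$. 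That bound is up to a translation issue, which we handle by using $w$ of the differences set or by assuming $\delta$ is a small constant.

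For $\delta\le c$ a small absolute constant, we then get $c_0w(\mathcal X;A)\le\varepsilon+C\sqrt\delta\,w(\mathcal X;A)$, which gives $w(\mathcal X;A)\lesssim\varepsilon$. Combined with $w(\mathcal X;A)\ge w(\mathcal X)/\sqrt T$, this yields $T\gtrsim w(\mathcal X)^2/\varepsilon^2$. The Bayesian prior argument suffices because a frequentist $(\varepsilon,\delta)$ guarantee for every $\theta$ implies the same guarantee on average over the prior. The singular-$A$ case is deferred to Appendix~\ref{appendix:singular-case-lower-bound}, where directions outside the range of $A$ are unobserved and the regret is unbounded.

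\textbf{Main obstacle.} The delicate step is the conversion from expected regret to the PAC guarantee. We must control the failure-event contribution by a multiple of $w(\mathcal X;A)$ rather than by something larger. The subtle point is the translation invariance of regret versus the non-invariance of $\sup_x|\langle x,\theta\rangle|$. We would first handle it by replacing $\sup_x|\langle x,\theta\rangle|$ with $\sup_{x,x'}\langle x-x',\theta\rangle$. This quantity has mean $2w(\mathcal X;A)$ and Lipschitz constant $\operatorname{diam}_{A^{-1}}(\mathcal X)\lesssim w(\mathcal X;A)$, by the two-point bound.
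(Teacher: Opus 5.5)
Your proof is correct and follows the same skeleton as the paper. Both use the design-tailored prior $\theta\sim\mathcal N(0,\tau^2A^{-1})$, and both condition on the data to replace $\langle\hat x,\theta\rangle$ by $\langle\hat x,\mu\rangle\le\sup_x\langle x,\mu\rangle$. Both pass from Bayes regret to the PAC guarantee through the difference-set quantity $Z(\theta)=\sup_{x,x'}\langle x-x',\theta\rangle$, and both finish with $w(\mathcal X;A)\ge w(\mathcal X)/\sqrt T$ via the empirical design. Two steps differ.

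\textbf{Law of the posterior mean.} You use that the posterior mean is itself Gaussian, $\mu\sim\mathcal N(0,\tfrac12A^{-1})$, so $\mathbb E\sup_x\langle x,\mu\rangle=w(\mathcal X;A)/\sqrt2$ exactly. The paper instead writes $\mu=\frac{\tau^2}{1+\tau^2}(\theta+A^{-1}X^\top\eta)$ and splits the supremum by subadditivity. That gives $\frac{\tau(1-\tau)}{1+\tau^2}\,w(\mathcal X;A)$, which is vacuous at $\tau=1$. This is why the paper tunes $\tau=\sqrt2-1$ and gets the constant $\approx0.207$, whereas you get $1-1/\sqrt2$.

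\textbf{The conversion to the PAC guarantee.} The step you single out as the main obstacle is much easier than your treatment suggests. The PAC guarantee holds for every fixed $\theta$, and $R\le Z(\theta)$ with $Z$ depending on $\theta$ alone. Conditioning on $\theta$ therefore gives $\mathbb E[R\mid\theta]\le\varepsilon+\delta Z(\theta)$. Averaging over the prior, $\mathbb E[R]\le\varepsilon+\delta\,\mathbb E Z(\theta)=\varepsilon+2\tau\delta\,w(\mathcal X;A)$, which is exactly what the paper does. This bound is linear in $\delta$ and needs no Cauchy--Schwarz, no concentration bound on $\mathbb E Z^2$, and no two-point diameter estimate.

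Your route does go through once you commit to $Z$. It has mean $2w(\mathcal X;A)$ and Lipschitz constant $\operatorname{diam}_{A^{-1}}(\mathcal X)\le\sqrt{2\pi}\,w(\mathcal X;A)$, so $\mathbb E Z^2\lesssim w(\mathcal X;A)^2$. However, it only yields a $\sqrt\delta$ penalty, which forces a smaller admissible constant $\delta$. Also, the other option you float, simply taking $\delta$ small, does nothing for the translation problem with $\sup_x|\langle x,\theta\rangle|$. Only the switch to the difference set fixes it.
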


For the remainder of this section, we lower bound $\mathfrak R(A;\mathcal X)$.
Fix any (possibly randomized) non-interactive rule $\hat x_{\mathcal A}(\mathcal D,U)\in\mathcal X$.
By the definition of simple regret,
\[
\mathbb{E}[r(\hat x_{\mathcal A},\theta)]
=
\mathbb{E}\!\left[\max_{x\in\mathcal X}\langle x,\theta\rangle\right]
-
\mathbb{E}\!\left[\langle \hat x_{\mathcal A}(\mathcal D,U),\theta\rangle\right].
\]
Applying the tower rule and conditioning on $(\mathcal D,U)$, we have
\[
\mathbb{E}\!\left[\langle \hat x_{\mathcal A}(\mathcal D,U),\theta\rangle\right]
=
\mathbb{E}\!\left[
\mathbb{E}\!\left[\langle \hat x_{\mathcal A}(\mathcal D,U),\theta\rangle \mid \mathcal D,U\right]
\right].
\]
Since $\hat x_{\mathcal A}(\mathcal D,U)$ is measurable with respect to $(\mathcal D,U)$ and $U$ is independent of $\theta$, we can pull it outside the inner expectation, which gives
\[
\mathbb{E}\!\left[\langle \hat x_{\mathcal A}(\mathcal D,U),\theta\rangle\right]=
\mathbb{E}\!\left[
\big\langle \hat x_{\mathcal A}(\mathcal D,U),\, \mathbb{E}[\theta\mid \mathcal D]\big\rangle
\right].
\]
Under the Gaussian prior $\theta\sim\mathcal N(0,\tau^2A^{-1})$ and likelihood $y\mid\theta\sim\mathcal N(X\theta,I_T)$, the posterior mean is
$\mathbb{E}[\theta\mid\mathcal D]=\frac{\tau^2}{1+\tau^2}A^{-1}X^\top y$.
Substituting this expression yields
\[
\mathbb{E}[r(\hat x_{\mathcal A},\theta)]
=
\mathbb{E}\!\left[\max_{x\in\mathcal X}\langle x,\theta\rangle\right]
-
\frac{\tau^2}{1+\tau^2}\,
\mathbb{E}\!\left[
\big\langle \hat x_{\mathcal A}(\mathcal D,U),\, A^{-1}X^\top y\big\rangle
\right].
\]
Using $y=X\theta+\eta$ and $A=X^\top X$, we obtain
\[
\mathbb{E}[r(\hat x_{\mathcal A},\theta)]=
\mathbb{E}\!\left[\max_{x\in\mathcal X}\langle x,\theta\rangle\right]
-
\frac{\tau^2}{1+\tau^2}\,
\mathbb{E}\!\left[
\big\langle \hat x_{\mathcal A}(\mathcal D,U),\, \theta + A^{-1}X^\top\eta\big\rangle
\right].
\]
Next, for any vector $v$, we have
$\langle \hat x_{\mathcal A}(\mathcal D,U),v\rangle
\le \max_{x\in\mathcal X}\langle x,v\rangle$.
Applying this bound gives
\[
\mathbb{E}[r(\hat x_{\mathcal A},\theta)]\ge
\mathbb{E}\!\left[\max_{x\in\mathcal X}\langle x,\theta\rangle\right]
-
\frac{\tau^2}{1+\tau^2}\,
\mathbb{E}\!\left[
\max_{x\in\mathcal X}\langle x,\theta + A^{-1}X^\top\eta\rangle
\right].
\]
As $\max_{x\in\mathcal{X}}\langle x,\theta + A^{-1}X^\top\eta\rangle
\le
\max_{x\in\mathcal{X}}\langle x,\theta\rangle
+
\max_{x\in\mathcal{X}}\langle x,A^{-1}X^\top\eta\rangle,$ we get
\[
\mathbb{E}[r(\hat x_{\mathcal A},\theta)]\ge
\mathbb{E}\!\left[\max_{x\in\mathcal X}\langle x,\theta\rangle\right]
-
\frac{\tau^2}{1+\tau^2}
\left(
\mathbb{E}\!\left[\max_{x\in\mathcal X}\langle x,\theta\rangle\right]
+
\mathbb{E}\!\left[\max_{x\in\mathcal X}\langle x,A^{-1}X^\top\eta\rangle\right]
\right).
\]
Since $\theta\sim\mathcal N(0,\tau^2A^{-1})$, we have $\theta\stackrel{d}{=}\tau A^{-1/2}\xi$ for $\xi\sim\mathcal N(0,I_d)$, and thus $\mathbb{E}\!\left[\max_{x\in\mathcal X}\langle x,\theta\rangle\right]=\tau\,\mathbb{E}_{\xi\sim\mathcal N(0,I_d)}\!\left[\max_{x\in\mathcal X}\langle x,A^{-1/2}\xi\rangle\right]$.
Similarly, since $\eta\sim\mathcal N(0,I_T)$ and $A=X^\top X$, we have $A^{-1}X^\top\eta\stackrel{d}{=}\mathcal N(0,A^{-1})$, and hence $A^{-1}X^\top\eta\stackrel{d}{=}A^{-1/2}\xi$ for $\xi\sim\mathcal N(0,I_d)$.
Therefore, $\mathbb{E}\!\left[\max_{x\in\mathcal X}\langle x,A^{-1}X^\top\eta\rangle\right]=\mathbb{E}_{\xi\sim\mathcal N(0,I_d)}\!\left[\max_{x\in\mathcal X}\langle x,A^{-1/2}\xi\rangle\right]$.
Substituting these two identities yields
\[
\mathbb{E}[r(\hat x_{\mathcal A},\theta)]
\ge
\frac{\tau(1-\tau)}{1+\tau^2}\;
\mathbb{E}_{\xi\sim\mathcal N(0,I_d)}
\!\left[\max_{x\in\mathcal X}\langle x,A^{-1/2}\xi\rangle\right].
\]
Finally, the function $\frac{\tau(1-\tau)}{1+\tau^2}$ is maximized at $\tau=\sqrt{2}-1$, which yields a universal constant of approximately $0.207$.

\subsection{Properties of Gaussian Width}\label{sec:properties-gaussian}
In this section, we study basic properties of the term $w(\mathcal{X})$. First, we present the following proposition.

\begin{proposition}\label{prop:gaussian-width-basic-bounds}
For any $\mathcal{X}\subset \mathbb{R}^d$, we have $w(\mathcal{X}) \ge \Omega(\sqrt{d\log d})$ and $w(\mathcal{X}) \le O(d)$.
Moreover, if $\mathcal{X}$ is finite, then $w(\mathcal{X}) \le O(\sqrt{d\log|\mathcal{X}|})$.
\end{proposition}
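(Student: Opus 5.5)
The three bounds come from three separate arguments. Throughout, fix $\lambda\in\triangle_{\mathcal X}$ with $A=A(\lambda;\mathcal X)$ invertible, and pass to the whitened set $\mathcal Y:=A^{-1/2}\mathcal X$. Then
\[
\mathbb E_{\eta}\Big[\max_{x\in\mathcal X}\langle x,A^{-1/2}\eta\rangle\Big]=\mathbb E_{\eta}\Big[\sup_{y\in\mathcal Y}\langle y,\eta\rangle\Big],
\]
and the pushforward of $\lambda$ satisfies $\mathbb E[yy^\top]=I_d$.

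\textbf{Upper bound $O(d)$.} Take $\lambda$ to be a $G$-optimal design (the distribution $\lambda_2$ used above). By the Kiefer--Wolfowitz theorem, $\|x\|_{A^{-1}}^2\le d$ for all $x\in\mathcal X$, that is, $\|y\|_2\le\sqrt d$ on $\mathcal Y$. Cauchy--Schwarz then gives
\[
\sup_{y\in\mathcal Y}\langle y,\eta\rangle\le\sqrt d\,\|\eta\|_2 ,
\]
and $\mathbb E\|\eta\|_2\le\sqrt d$. Hence $w(\mathcal X)\le d$.

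\textbf{Finite $\mathcal X$.} Use the same $G$-optimal $\lambda$. Each $\langle y,\eta\rangle$ is a centered Gaussian with variance $\|y\|_2^2\le d$. The standard Gaussian maximal inequality over $|\mathcal X|$ variables gives
\[
\mathbb E\Big[\max_{y\in\mathcal Y}\langle y,\eta\rangle\Big]\le\sqrt{2d\log|\mathcal X|}.
\]

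\textbf{Lower bound $\Omega(\sqrt{d\log d})$.} This is the main step, and it must hold for every $\lambda$. I will apply Sudakov's minoration,
\[
\mathbb E\sup_{y\in\mathcal Y}\langle y,\eta\rangle\ \ge\ c\,r\sqrt{\log N(\mathcal Y,r)},
\]
which holds without symmetry of $\mathcal Y$ because $\mathbb E\sup_y\langle y,\eta\rangle=\tfrac12\mathbb E\sup_{y,y'}\langle y-y',\eta\rangle$. The key claim is that the isotropy $\mathbb E_\lambda[yy^\top]=I_d$ forces many well-separated points at scale $r\asymp\sqrt d$.

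Suppose $\mathcal Y$ is covered by $N$ balls of radius $r$ with centers $z_1,\dots,z_N$. Let $P$ be the orthogonal projection onto $\mathrm{span}\{z_j\}$, which has rank at most $N$. For each $y$ in the ball around $z_j$, we have $\|(I-P)y\|_2=\|(I-P)(y-z_j)\|_2\le r$. Taking expectation under $\lambda$,
\[
d-N\le\operatorname{tr}\big((I-P)\,\mathbb E[yy^\top]\big)=\mathbb E\|(I-P)y\|_2^2\le r^2 .
\]
Choosing $r=\sqrt{d/2}$ therefore forces $N(\mathcal Y,\sqrt{d/2})\ge d/2$.

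Sudakov then gives
\[
\mathbb E\sup_{y\in\mathcal Y}\langle y,\eta\rangle\ \ge\ c\sqrt{d/2}\,\sqrt{\log(d/2)}=\Omega(\sqrt{d\log d})
\]
for every $\lambda$ (for $d\ge 4$; small $d$ is trivial). Taking the infimum over $\lambda$ yields $w(\mathcal X)=\Omega(\sqrt{d\log d})$.

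The only delicate points are the covering-to-rank trace argument and checking that Sudakov applies to a possibly non-symmetric, possibly infinite index set. For the latter, restrict to a finite $r$-separated subset of size at least $N(\mathcal Y,r)$ and use that a maximal $r$-separated set is an $r$-net.
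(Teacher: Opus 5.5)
Your proposal is correct and follows essentially the paper's route. The two upper bounds use a $G$-optimal design with Cauchy--Schwarz and with the Gaussian maximal inequality, exactly as the paper does, and the lower bound rests on the same isotropy/projection-trace identity $\mathbb{E}\|(I-P)y\|_2^2=\operatorname{tr}(I-P)$, which yields about $d/2$ points at mutual distance $\sqrt{d/2}$ followed by a Sudakov-type bound; the only cosmetic difference is that you phrase this via covering numbers and Sudakov minoration, whereas the paper constructs the separated set greedily and applies Sudakov--Fernique against i.i.d.\ Gaussians directly.
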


These bounds are tight for suitable choices of $\mathcal{X}$.
For instance, in the classical multi-armed bandit setting, $w(\mathcal{X})=\Theta(\sqrt{d\log d})$ (as discussed in the introduction).
In contrast, for structured sets such as the hypercubes $\{-1,+1\}^d$ and $\{0,1\}^d$, as well as the unit $\ell_2$ ball in $\mathbb{R}^d$, one can verify that $w(\mathcal{X})=\Theta(d)$.
A more nuanced example is the class of $m$-sets, defined by $\mathcal{X}:=\{x\in\{0,1\}^d:\|x\|_1=m\}$, for which one can show $w(\mathcal{X})\ge \Omega(\sqrt{md})$; this matches the finite-set upper bound of $O(\sqrt{d \log |\mathcal{X}|})$ up to logarithmic factors.
We provide formal proofs of these claims and Proposition \ref{prop:gaussian-width-basic-bounds} in Appendix~\ref{appendix:gaussian-width-properties}.

These observations raise a natural question: does there exist a set $\mathcal{X}\subset \mathbb{R}^d$ of dimension $\Theta(d)$ for which $w(\mathcal{X})$ is polynomially smaller than what the upper bounds in Proposition~\ref{prop:gaussian-width-basic-bounds} might suggest?
We answer this question in the affirmative.
To this end, we introduce the multi-task multi-armed bandit problem, a well-known generalization of the classical multi-armed bandit setting that has been studied in prior work \citep{cesa2012combinatorial,cohen2017tight,maiti2025efficient,fan2025universal}.

Informally, in the multi-task multi-armed bandit (MAB) problem, we are given $m$ bandit problems, where the $i$-th problem contains $d_i \ge 2$ arms.
In each round, the learner selects one arm from every problem simultaneously and observes as feedback the sum of the corresponding rewards plus Gaussian noise.

We now formalize the multi-task MAB problem.
Let $d=\sum_{i=1}^m d_i$, and define $d_{1:i}=\sum_{j=1}^i d_j$ with the convention $d_{1:0}=0$.
The arm set $\mathcal{X}$ is defined as follows:
\begin{equation*}
    \mathcal X=\left\{x\in \{0,1\}^d: \forall j\in [m] \sum_{i=d_{1:j-1}+1}^{d_{1:j}} x_i=1\right\}.
\end{equation*}

For this set $\mathcal{X}$, we show that its dimension is $d-m+1=\Theta(d)$ and that its Gaussian width satisfies
$w(\mathcal{X}) \le O\!\left(\sum_{i=1}^m \sqrt{d_i\log d_i}\right)$; the proof appears in Appendix~\ref{appendix:gaussian-width-multi-task-MAB}.
Now suppose that $d_i=2$ for all $i\in[m-1]$ and $d_m=m^2$.
Then $\dim(\mathcal{X})=\Theta(m^2)$ and $\min\{d,\sqrt{d\log|\mathcal{X}|}\}=\Theta(m^{3/2})$.
On the other hand, the Gaussian width can be upper bounded as:
\[
w(\mathcal{X})\le O\!\left(\sum_{i=1}^m\sqrt{d_i\log d_i}\right)
= O\!\left(m+\sqrt{m^2\log m}\right)
= O\!\left(m\sqrt{\log m}\right),
\]
which is polynomially smaller than $\Theta(m^{3/2})$. We summarize the above discussion in the following theorem.
\begin{theorem}\label{thm:gaussian-width-separation}
There exists a finite set $\mathcal{X}\subset\mathbb{R}^d$ such that $\dim(\mathcal{X})=\Theta(d)$ and $w(\mathcal{X})\le O(\sqrt{d\log d})$, while $\sqrt{d\log|\mathcal{X}|}\ge \Omega(d^{3/4})$.
\end{theorem}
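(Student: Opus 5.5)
The plan is to instantiate the multi-task MAB construction above with $d_i=2$ for $i\in[m-1]$ and $d_m=m^2$. Throughout, the ambient dimension $d$ of the theorem is identified with $D:=\sum_i d_i=2(m-1)+m^2=\Theta(m^2)$. If one wants $\mathcal X$ to span its ambient space, one first restricts to the affine hull. Taking the translate $\mathcal X-x_0$ for a fixed $x_0\in\mathcal X$ spans a space of dimension $D-m+1$. To get a spanning set we could instead append one extra coordinate equal to $1$, which makes the linear span of dimension $D-m+1$. Since $w$ only depends on the geometry inside the span, either choice works. Either way $\dim(\mathcal X)=D-m+1=\Theta(m^2)=\Theta(d)$, which is the first claim.

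For the width bound, invoke the stated multi-task bound $w(\mathcal X)\le O\bigl(\sum_{i=1}^m\sqrt{d_i\log d_i}\bigr)$ from Appendix~\ref{appendix:gaussian-width-multi-task-MAB}. The reason it holds is that one takes $\lambda$ to be the product of uniform distributions over the blocks. Then $\max_{x\in\mathcal X}\langle x,v\rangle$ decomposes as a sum over blocks of per-block maxima, and each block contributes a MAB-type width of $O(\sqrt{d_i\log d_i})$. With our parameters this gives
\[
w(\mathcal X)\le O\bigl((m-1)\sqrt{2\log 2}+\sqrt{m^2\log m^2}\bigr)=O(m\sqrt{\log m}).
\]
Since $d=\Theta(m^2)$, we have $m\sqrt{\log m}=\Theta(\sqrt{d\log d})$.

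For the cardinality, count one arm per block: $|\mathcal X|=\prod_i d_i=2^{m-1}m^2$. Hence $\log|\mathcal X|\ge (m-1)\log 2=\Omega(m)$, and
\[
\sqrt{d\log|\mathcal X|}\ge\Omega\bigl(\sqrt{m^2\cdot m}\bigr)=\Omega(m^{3/2})=\Omega(d^{3/4}).
\]

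The only delicate step is the dimension/span bookkeeping. The definition of $w$ uses $A(\lambda;\mathcal X)^{-1/2}$, which requires that $\mathcal X$ span the space. I would handle this by working in the span with a pseudo-inverse, or by the extra constant coordinate, and check that the appendix width bound is stated in that normalization. Everything else is direct substitution of the chosen parameters into the previously stated bounds.
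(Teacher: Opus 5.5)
Your proposal is correct and follows the paper's own argument: the same choice $d_i=2$ for $i<m$ and $d_m=m^2$, the appendix bound $w(\mathcal X)\le O(\sum_i\sqrt{d_i\log d_i})=O(m\sqrt{\log m})$ from the product-uniform design, and $\log|\mathcal X|\ge (m-1)\log 2$, which gives $\sqrt{d\log|\mathcal X|}=\Omega(m^{3/2})=\Omega(d^{3/4})$. The paper handles the span issue as in your fallback, by writing $\mathcal X$ in coordinates $U^\top x$ for an orthonormal basis $U$ of $\mathrm{span}(\mathcal X)$ (of dimension $d-m+1$); your aside about the translate is slightly off, since $\mathcal X-x_0$ spans only $D-m$ dimensions and changes $A(\lambda;\cdot)$, so the appendix width bound does not transfer to it directly, but this does not affect your proof.
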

This example shows that $w(\mathcal{X})$ can be genuinely non-trivial, and in particular need not scale as $\Theta(d)$ nor as $\Theta(\sqrt{d\log|\mathcal{X}|})$ even when $\mathcal{X}$ is finite.

\subsection{Structured Sets $\mathcal{X}$ for Which Adaptivity Yields Only Logarithmic-Factor Improvements}\label{sec:log-adaptive}
For classical multi-armed bandits, Median Elimination, an adaptive algorithm, improves upon the non-adaptive approach described in the introduction by a factor of $\log d$. In the same spirit, we can make our algorithm from Section~\ref{sec:upper} adaptive as follows. We partition the arm set into $d$ regions and attempt to identify a candidate good arm from each region. We then run Median Elimination on these $d$ candidate arms to obtain an $\varepsilon$-best arm. The analysis of this adaptive approach can be localized to the region containing the best arm, and the Borell--TIS inequality is applied only over that region. This can lead to a $\log d$ improvement in certain regimes. In particular, for a finite set $\mathcal{X}\subset\mathbb{R}^d$, our adaptive algorithm satisfies the following guarantee.
\begin{theorem}\label{thm:median-eliminination-adaptive}
There exists an $(\varepsilon,\delta)$-PAC adaptive algorithm for $\varepsilon$-best arm identification with sample complexity
$O\!\left(\frac{d\log\!\bigl((|\mathcal{X}|/d)_+/\delta\bigr)}{\varepsilon^2}\right)$
for any finite set $\mathcal{X}\subset \mathbb{R}^d$.
\end{theorem}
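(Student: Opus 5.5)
We follow the two-stage recipe described above: partition $\mathcal{X}$ into $d$ regions, use one common non-adaptive fixed design to pick a candidate from each region, then run Median Elimination over the $d$ candidates.

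\textbf{Setup.} Write $N=|\mathcal{X}|$. If $N\le d$, we skip the first stage and run Median Elimination directly on the arms. We realize this in the linear model by pulling each arm through a $G$-optimal design; alternatively we use plain repeated pulls, since each arm is itself pullable with unit-variance observations. Median Elimination \citep{even2002pac} returns an $\varepsilon$-best arm with $O(N\log(1/\delta)/\varepsilon^2)\le O(d\log(1/\delta)/\varepsilon^2)$ samples. If $N>d$, we partition $\mathcal{X}$ arbitrarily into $d$ regions $\mathcal{X}_1,\dots,\mathcal{X}_d$, each of size at most $\lceil N/d\rceil$.

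\textbf{Stage 1 (candidate selection).} Let $\lambda$ be a $G$-optimal design, so that $\max_x\|x\|^2_{A(\lambda)^{-1}}\le d$. Round it to a fixed design of length $T_1$ with $\max_x\|x\|^2_{A_{T_1}^{-1}}\le 2d/T_1$. Form the least-squares estimate $\widehat\theta$. For each region $j$, set $\widehat z_j\in\arg\max_{x\in\mathcal{X}_j}\langle x,\widehat\theta\rangle$.

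Let $j^*$ be the region containing $x_*$. Only this region needs to be analyzed, which is where the localization enters. Within $\mathcal{X}_{j^*}$, each error $\langle x,\widehat\theta-\theta\rangle$ is Gaussian with variance at most $2d/T_1$. A union bound over the at most $\lceil N/d\rceil$ arms of $\mathcal{X}_{j^*}$ gives, with probability $1-\delta/2$,
\[
\max_{x\in\mathcal{X}_{j^*}}\bigl|\langle x,\widehat\theta-\theta\rangle\bigr|\le \sqrt{\tfrac{4d\log(4\lceil N/d\rceil/\delta)}{T_1}}\le \varepsilon/4
\]
for $T_1=O\!\left(d\log((N/d)_+/\delta)/\varepsilon^2\right)$. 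On this event $\langle \widehat z_{j^*},\theta\rangle\ge\langle x_*,\theta\rangle-\varepsilon/2$. Although $\widehat z_{j^*}$ is chosen using $\widehat\theta$, the union bound covers every arm of $\mathcal{X}_{j^*}$ simultaneously, so this selection does not break the argument. We make no claim about the other regions; their candidates are merely whatever the argmax produces.

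\textbf{Stage 2 (Median Elimination on candidates).} We run Median Elimination with accuracy $\varepsilon/2$ and confidence $\delta/2$ on the $d$ candidate arms $\widehat z_1,\dots,\widehat z_d$, using fresh samples. Each candidate is a single arm of $\mathcal{X}$, so pulling it repeatedly yields i.i.d.\ $\mathcal{N}(\langle \widehat z_j,\theta\rangle,1)$ observations. This is exactly a $d$-armed bandit conditionally on Stage 1, and independence holds because the samples are fresh. Median Elimination uses $O(d\log(1/\delta)/\varepsilon^2)$ samples and returns $\widehat x$ with $\langle\widehat x,\theta\rangle\ge\max_j\langle\widehat z_j,\theta\rangle-\varepsilon/2\ge\langle x_*,\theta\rangle-\varepsilon$ with probability $1-\delta/2$.

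A union bound over the two stages gives total failure probability at most $\delta$. The total sample count is $O\!\left(d\log((N/d)_+/\delta)/\varepsilon^2\right)$, since $\log(1/\delta)\le\log((N/d)_+/\delta)$.

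\textbf{Main obstacle.} The delicate point is the Stage~1 localization: the guarantee must hold only for the unknown region $j^*$, not uniformly over all $N$ arms. This is what removes the $\log d$ factor compared with a global union bound over $\log N$, because $\log(N/\delta)=\log d+\log(N/(d\delta))$. Checking that this is legitimate requires the partition to be fixed before any data are observed, and the Stage~2 samples to be independent of Stage~1. Both hold by construction. Fixed-design rounding requires $T_1\ge 180d$, which is implied by the choice of $T_1$ since $\varepsilon\le1$.
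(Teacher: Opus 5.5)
Your proposal is correct and follows the same route as the paper's proof (Appendix~\ref{appendix:adaptive-version}). Both fix a partition into $d$ regions, use a single rounded fixed design with concentration localized to the region containing $x_*$, and then run Median Elimination on the $d$ per-region candidates with fresh samples. The only difference is minor: you analyze Stage~1 with the $G$-optimal design and a direct tail union bound over the $\lceil |\mathcal{X}|/d\rceil$ arms of that region, whereas the paper uses the mixture design $\lambda_0$ with Borell--TIS and then bounds the localized Gaussian width by essentially the same union-bound computation (Proposition~\ref{prop:partition-gaussian-width}); your version is the direct finite-set specialization of that argument.
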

Consequently, for those finite sets $\mathcal{X}\subset\mathbb{R}^d$ for which a lower bound of $\Omega\!\left(\frac{d\log(|\mathcal{X}|/\delta)}{\varepsilon^2}\right)$ is known to hold for $(\varepsilon,\delta)$-PAC non-adaptive algorithms, our adaptive approach improves the sample complexity to $O\!\left(\frac{d\log\!\bigl((|\mathcal{X}|/d)_+/\delta\bigr)}{\varepsilon^2}\right)$.
We refer the reader to Appendix~\ref{appendix:adaptive-version} for formal details. This raises a natural question of whether there are structured settings beyond multi-armed bandits in which adaptivity yields at most logarithmic improvements, or even no improvement at all. We answer this question in the affirmative for the structured sets studied in the previous section, namely the unit $\ell_2$-ball, the $\{-1,+1\}^d$ and $\{0,1\}^d$ hypercubes, $m$-sets, and multi-task MAB.


\begin{table}[t]
\centering
\caption{Sample complexity bounds for different structured action sets for which the adaptivity only yields logarithmic improvement.}
\begin{tabular}{lcc}
\toprule
Set 
& Non-adaptive upper bound 
& Adaptive lower bound \\
\midrule
unit $\ell_2$ ball, $\{-1,1\}^d$, $\{0,1\}^d$
& $O\!\left(\frac{d\log(1/\delta)}{\varepsilon^2} + \frac{d^2}{\varepsilon^2}\right)$
& $\Omega\!\left(\frac{d\log(1/\delta)}{\varepsilon^2} + \frac{d^2}{\varepsilon^2}\right)$ \\

$m$-sets $\quad(m\leq d/21)$
& $O\!\left(\frac{d\log(1/\delta)}{\varepsilon^2} + \frac{md \log(d/m)}{\varepsilon^2}\right)$
& $\Omega\!\left(\frac{d\log(1/\delta)}{\varepsilon^2} + \frac{md}{\varepsilon^2}\right)$ \\

Multi-task MAB
& $O\!\left(\frac{d\log(1/\delta)}{\varepsilon^2} + \frac{\bigl(\sum_{j=1}^m \sqrt{d_j \log d_j}\bigr)^2}{\varepsilon^2}\right)$
& $\Omega\!\left(\frac{d\log(1/\delta)}{\varepsilon^2} + \frac{\bigl(\sum_{j=1}^m \sqrt{d_j}\bigr)^2}{\varepsilon^2}\right)$ \\
\bottomrule
\label{table:1}
\end{tabular}
\end{table}

For these structured sets, the optimal minimax sample complexity takes the form $\mathbf{H}_1\log(1/\delta)+\mathbf{H}_2$. Combining the results of Sections~\ref{sec:upper} and~\ref{sec:lower}, we conclude that $\mathbf{H}_1=\Theta\!\left(\frac{d}{\varepsilon^2}\right)$. The more interesting term is $\mathbf{H}_2$, which can dominate when $\delta$ is an absolute constant. We now establish lower bounds on $\mathbf{H}_2$ for the various structured sets; these bounds are also summarized in Table~\ref{table:1}.

For the unit $\ell_2$ ball, the simple-regret lower bound of \cite{chen2024assouad}, which is based on a Gaussian-prior construction, together with the straightforward calculations in Appendix~\ref{appendix-ball-lower-bound}, implies that $\mathbf{H}_2 \ge \Omega\!\left(\frac{d^2}{\varepsilon^2}\right)$.

For the finite sets $\{-1,+1\}^d$ hypercube, $\{0,1\}^d$ hypercube, $m$-sets, and multi-task MAB, we prove lower bounds via a different approach, based on constructing a finite family of hard instances rather than using a continuous prior. To illustrate this approach, we provide high-level intuition for the hard-instance construction in the multi-task MAB setting. We defer the formal proofs of the lower bounds to Appendix~\ref{appendix-structured-sets-lower-bound}.

\textbf{Intuition}: It is well known that in the $K$-armed bandit problem, if the total number of samples satisfies $T \le \frac{cK}{\varepsilon^2}$ for a sufficiently small universal constant $c$, then no algorithm can identify an $\varepsilon$-best arm with constant success probability. A standard hard family consists of spiked instances indexed by $i_\star\in[K]$, where the unique optimal arm has mean $\mu_{i_\star}=10\varepsilon$ and all other arms have mean $0$, together with an alternative instance in which all arms have mean $0$. Under the uniform distribution over these hard instances, a KL-divergence based argument implies that when $T \le \frac{cK}{\varepsilon^2}$, any deterministic algorithm outputs $\widehat{i}$ with $\mathbb{E}[\mu_{\widehat{i}}]\le 7\varepsilon$, and Markov's inequality yields constant-probability failure; Yao's minimax principle extends the conclusion to randomized algorithms.

For the multi-task setting with $m$ MAB problems, where problem $j$ has $d_j$ arms, we take the cartesian-product hard family obtained by choosing one arm $i_j$ per problem $j$ and setting its mean to $10\varepsilon_j$ while all other arms have mean $0$, yielding $\prod_{j=1}^m d_j$ spiked instances, with the corresponding alternative instances formed by zeroing out a single problem $j$ while keeping the others fixed. Conditioning on the spiked choices in problems $k\neq j$, the $j$-th problem reduces to the standard $d_j$-armed problem, so if $T \le \frac{c\,d_j}{\varepsilon_j^2}$ then $\mathbb{E}[\mu_{\widehat{i}_j}] \le 7\varepsilon_j$. Summing over $j$ gives $\mathbb{E}\!\left[\sum_{j\in[m]}\mu_{\widehat{i}_j}\right]\le 7\sum_{j\in[m]}\varepsilon_j$, while choosing the spiked arm from each problem leads to a total value $10\sum_{j\in[m]}\varepsilon_j$, and Markov's inequality again yields constant-probability failure to achieve additive error $\sum_{j\in[m]}\varepsilon_j$, with extension to randomized algorithms by Yao's minimax principle.

Now set $\varepsilon_j:=\frac{\varepsilon\sqrt{d_j}}{\sum_{s\in[m]}\sqrt{d_s}}$. Then
$\frac{d_j}{\varepsilon_j^2}=\frac{\left(\sum_{s\in[m]}\sqrt{d_s}\right)^2}{\varepsilon^2}$ for all $j\in[m]$,
and
$\sum_{j\in[m]}\varepsilon_j=\varepsilon\cdot\frac{\sum_{j\in[m]}\sqrt{d_j}}{\sum_{s\in[m]}\sqrt{d_s}}=\varepsilon$.
Hence, if an algorithm takes at most $T\le \frac{c\left(\sum_{s\in[m]}\sqrt{d_s}\right)^2}{\varepsilon^2}$ samples for a sufficiently small constant $c$, then it fails to identify an $\varepsilon$-best arm with high constant probability.

\subsection{A Structured Set $\mathcal{X}$ for Which Adaptivity Yields a Polynomial-Factor Improvement}\label{sec:poly-adaptive}
In the previous section, we showed that for several well-known structured sets, adaptivity can yield only logarithmic-factor improvements. This raises an important question: can adaptivity lead to polynomial-factor improvements? We answer this question in the affirmative by constructing an action set $\mathcal{X}$ for which adaptivity yields a polynomial-factor improvement.

Consider positive integers $k,d$ with $d\leq k\leq d^2$. For each $i\in[k]$, define
\[
\mathcal{X}_i:=\left\{x\in\mathbb{R}^{kd}:\ \mathrm{supp}(x)\subseteq \{(i-1)d+1,\ldots,id\},\ \|x\|_2\le 1\right\},
\]
and let $\mathcal{X}:=\bigcup_{i=1}^k \mathcal{X}_i$. We first claim that, for the set $\mathcal{X}$, any non-adaptive algorithm has minimax sample complexity at least $\Omega\!\left(\frac{kd\log(1/\delta)}{\varepsilon^2}+\frac{kd^2}{\varepsilon^2}\right)$ for the $\varepsilon$-best arm identification problem. The term $\Omega\!\left(\frac{kd\log(1/\delta)}{\varepsilon^2}\right)$ follows directly from Theorem~\ref{thm:adaptive-lower-bound} together with the fact that $\mathrm{span}(\mathcal{X})$ has dimension $kd$. The $\Omega\!\left(\frac{kd^2}{\varepsilon^2}\right)$ term arises from the block structure of $\mathcal{X}$. Each set $\mathcal{X}_i$ is a unit $\ell_2$-ball in a $d$-dimensional coordinate subspace, and a non-adaptive algorithm does not know in advance which block contains the optimal arm. As a result, the sampling budget must be spread across all $k$ blocks, requiring on the order of $\Omega\!\left(\frac{d^2}{\varepsilon^2}\right)$ samples per block to ensure that, regardless of which $\mathcal{X}_i$ contains the best arm, the algorithm can identify an $\varepsilon$-best arm within that block upon termination. We prove this claim formally in Appendix~\ref{appendix-poly-sep-lower-bound}.

We now discuss how an adaptive approach can improve upon the above non-adaptive lower bound. A natural strategy is to first identify an index $i$ such that the best arm in $\mathcal{X}_i$ is $\varepsilon/2$-close to the best arm in $\mathcal{X}$, and then find an $\varepsilon/2$-best arm within $\mathcal{X}_i$. To identify such an index $i$, we can proceed as follows: for each $j\in[k]$, estimate the quantity $\max_{x\in\mathcal{X}_j}\langle x,\theta\rangle$ to within additive error $\varepsilon/4$, and output the index with the largest estimated value. After selecting this ``good'' set $\mathcal{X}_i$, we allocate additional samples to $\mathcal{X}_i$ in order to identify an $\varepsilon/2$-best arm in that set.

Let $\theta^{(i)} := (\theta_{(i-1)d+1},\theta_{(i-1)d+2},\ldots,\theta_{id})^\top$. Then $\max_{x\in\mathcal{X}_i}\langle x,\theta\rangle=\|\theta^{(i)}\|_2$. This motivates us to consider the following $\ell_2$-norm estimation problem: given an arm set $\mathbb{B}_d:=\{x\in\mathbb{R}^d:\|x\|_2\leq 1\}$, estimate the $\ell_2$-norm of the reward vector using as few samples as possible while observing the corresponding noisy rewards.

Assume that we can solve this $\ell_2$-norm estimation problem with an algorithm whose minimax sample complexity is $\mathcal{O}\!\left(\frac{d\log(1/\delta)}{\varepsilon^2}\right)$. Then, using $\mathcal{O}\!\left(\frac{kd\log(k/\delta)}{\varepsilon^2}\right)$ samples and a union bound, we can find an index $i\in[k]$ such that, with probability at least $1-\delta/2$, $\max_{x\in\mathcal{X}_i}\langle x,\theta\rangle \ge \max_{x\in\mathcal{X}}\langle x,\theta\rangle-\varepsilon/2$. Next, using our non-adaptive algorithm, we can find an $\varepsilon/2$-best arm in $\mathcal{X}_i$ with probability at least $1-\delta/2$ using an additional $\mathcal{O}\!\left(\frac{d\log(1/\delta)}{\varepsilon^2}+\frac{d^2}{\varepsilon^2}\right)$ samples. Since $k\ge d$, this yields an $\varepsilon$-best arm in $\mathcal{X}$ with probability at least $1-\delta$ using at most $\mathcal{O}\!\left(\frac{kd\log(1/\delta)}{\varepsilon^2}+\frac{kd\log k}{\varepsilon^2}\right)$ samples, yielding a polynomial improvement over all non-adaptive algorithms. We summarize the above discussion in the following theorem.
\begin{theorem}\label{thm:adaptivity-gap-kd}
Fix positive integers $k$ and $d$ satisfying $d\le k\le d^2$.
There exists a set $\mathcal{X}\subset \mathbb{R}^{kd}$ such that any non-adaptive $(\varepsilon,\delta)$-PAC algorithm for $\varepsilon$-best arm identification requires
$\Omega\!\left(\frac{kd\log(1/\delta)}{\varepsilon^2}+\frac{kd^2}{\varepsilon^2}\right)$
samples, while there exists an adaptive $(\varepsilon,\delta)$-PAC algorithm with sample complexity
$O\!\left(\frac{kd\log(1/\delta)}{\varepsilon^2}+\frac{kd\log k}{\varepsilon^2}\right)$,
yielding a polynomial improvement over all non-adaptive approaches.
\end{theorem}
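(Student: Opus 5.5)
The proof splits into three parts: the non-adaptive lower bound, the adaptive upper bound, and the $\ell_2$-norm estimation subroutine that the upper bound needs. We take $\mathcal{X}=\bigcup_{i=1}^k\mathcal{X}_i$ as constructed above.

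\textbf{Non-adaptive lower bound.} The term $\Omega(kd\log(1/\delta)/\varepsilon^2)$ follows from the general lower bound of Theorem~\ref{thm:adaptive-lower-bound}, applied in ambient dimension $kd=\dim\mathrm{span}(\mathcal{X})$.

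For the $\Omega(kd^2/\varepsilon^2)$ term, we reuse the Gaussian-prior simple-regret argument of Section~\ref{sec:lower}. By Theorem~\ref{thm:adaptive-lower-bound} (fixed-design version), it suffices to show $w(\mathcal{X})=\Omega(\sqrt{k}\,d)$. Fix any $\lambda\in\triangle_{\mathcal{X}}$ and let $p_i=\lambda(\mathcal{X}_i)$. Then $A(\lambda)$ is block diagonal with blocks $A_i=p_iA(\lambda_i;\mathbb{B}_d)$. Since $\max_{x\in\mathcal{X}}\langle x,v\rangle=\max_i\|v^{(i)}\|_2$, we get
\[
w(\mathcal{X})\ge \mathbb{E}\bigl[\max_i \|A_i^{-1/2}g_i\|_2\bigr]\ge \max_i \mathbb{E}\|A_i^{-1/2}g_i\|_2 .
\]
Because $\mathrm{tr}(A_i)\le p_i$, we have $\mathrm{tr}(A_i^{-1})\ge d^2/p_i$. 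Gaussian concentration of the norm gives $\mathbb{E}\|A_i^{-1/2}g_i\|_2\gtrsim\sqrt{\mathrm{tr}(A_i^{-1})}\ge d/\sqrt{p_i}$. Choosing $i$ with $p_i\le 1/k$ yields $w(\mathcal{X})\gtrsim d\sqrt{k}$, and hence $T\gtrsim kd^2/\varepsilon^2$. The singular case is handled as in Appendix~\ref{appendix:singular-case-lower-bound}: if some block receives too few samples, $\theta$ is unconstrained in the unobserved directions.

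\textbf{Adaptive upper bound.} This follows the two-phase strategy described above.

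\emph{Phase 1.} For each $j\in[k]$, run an $\ell_2$-norm estimator on the ball $\mathcal{X}_j$ with accuracy $\varepsilon/4$ and confidence $\delta/(2k)$, using $O(d\log(k/\delta)/\varepsilon^2)$ samples per block. Output $\hat\imath=\arg\max_j\hat r_j$. On the union event, $\|\theta^{(\hat\imath)}\|_2\ge\max_j\|\theta^{(j)}\|_2-\varepsilon/2$.

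\emph{Phase 2.} Apply Theorem~\ref{thm:adaptive-upper-bound-algo} on $\mathcal{X}_{\hat\imath}\cong\mathbb{B}_d$, with accuracy $\varepsilon/2$ and confidence $\delta/2$. This costs $O((d\log(1/\delta)+d^2)/\varepsilon^2)$, using $w(\mathbb{B}_d)=O(d)$ from Proposition~\ref{prop:gaussian-width-basic-bounds}.

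Summing the two phases and using $d^2\le kd$ gives the stated bound. It also exhibits a polynomial gap, since $kd^2$ versus $kd\log k$ is a factor $d/\log k$.

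\textbf{Main obstacle: the norm estimator.} The key step is an estimator on $\mathbb{B}_d$ using $O(d\log(1/\delta)/\varepsilon^2)$ samples. The plan is as follows.
\begin{itemize}
\item Draw $N$ random directions $u_\ell=s_\ell/\sqrt d$ with $s_\ell$ Rademacher vectors, and pull each direction $n$ times.
\item Form the empirical mean $\bar y_\ell=\langle u_\ell,\theta\rangle+\xi_\ell$, where $\xi_\ell\sim\mathcal{N}(0,1/n)$.
\item Set $Z_\ell=d(\bar y_\ell^2-1/n)$, which satisfies $\mathbb{E}Z_\ell=\|\theta\|_2^2$.
\item Use a median-of-means over $Z_\ell$ to estimate $\|\theta\|_2^2$, and output $\hat r=\sqrt{(\widehat{\|\theta\|^2})_{\ge0}}$.
\end{itemize}
The error in $\|\theta\|_2$ must be at most $\varepsilon$. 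This follows if $|\widehat{\|\theta\|^2}-\|\theta\|^2|\lesssim\varepsilon\max(\|\theta\|,\varepsilon)$, because $|\sqrt a-\sqrt b|\le|a-b|/(\sqrt a+\sqrt b)$.

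Each $Z_\ell$ is subexponential. Its variance is bounded by $d^2\,\mathrm{Var}(\langle u,\theta\rangle^2)\lesssim\|\theta\|^4$ (Rademacher hypercontractivity), plus cross terms $d^2\|\theta\|^2/(dn)$, plus pure noise $d^2/n^2$.

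Choosing $n\asymp d/\varepsilon^2$ is too costly. Instead I take $n$ constant-order relative to $d$, e.g.\ $n=d$ pulls per direction, and $N\asymp\log(1/\delta)/\varepsilon^2$ directions, for total $Nn=d\log(1/\delta)/\varepsilon^2$. With these choices the per-sample variance is $\lesssim\|\theta\|^4+\|\theta\|^2+1$.

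The remaining delicate point is the $\|\theta\|^4$ term, which is large when $\|\theta\|$ is large. For it, the targeted relative accuracy $\varepsilon\|\theta\|$ in the squared norm still needs only $N\gtrsim\|\theta\|^2\cdot$ many directions. I expect this to be handled either by a preliminary coarse scale estimate with rescaling, or by noting that the pure-signal part has variance $\lesssim\|\theta\|^4/d$ after averaging over coordinates. Verifying these subexponential tail bounds through Bernstein's inequality is where I expect most of the work.
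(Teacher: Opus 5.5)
Your overall architecture matches the paper's. The $\Omega(kd\log(1/\delta)/\varepsilon^2)$ term comes from the general lower bound. Phase~1 estimates the norm on every block with a union bound, and Phase~2 runs the non-adaptive algorithm on $\mathcal{X}_{\hat\imath}$.

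For the $\Omega(kd^2/\varepsilon^2)$ term you take a different route, and it is valid. You compute $w(\mathcal{X})\ge c\,d\sqrt{k}$ from three facts:
\begin{itemize}
\item $A(\lambda)$ is block-diagonal;
\item $\mathrm{tr}(A_i^{-1})\ge d^2/p_i$;
\item $\mathbb{E}\|Mg\|_2\ge\|M\|_F/\sqrt{3}$. Prove this via H\"older with the fourth moment; concentration alone does not give it when $A_i$ is ill-conditioned.
\end{itemize}
The paper instead picks a block with $\mathbb{E}[\tau_{i^\star}]\le T/k$ and applies the adaptive unit-ball lower bound of Chen et al.\ on that block. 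Your argument uses the prior $\mathcal{N}(0,\tau^2A^{-1})$, which is tailored to a realized deterministic design. It therefore covers only fixed designs, whereas the paper's pigeonhole argument also handles randomized non-adaptive allocations, which the statement includes.

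\textbf{The genuine gap is the norm estimator.} You flag it as the main obstacle but do not resolve it, and the concrete parameters you propose fail. With $n$ pulls per direction and $r=\|\theta\|_2$,
\[ \mathrm{Var}(Z_\ell)\approx 2(\|\theta\|_2^4-\|\theta\|_4^4)+4dr^2/n+2d^2/n^2. \]
With $n=d$ and $N\asymp\log(1/\delta)/\varepsilon^2$, the error on $r^2$ is of order $\varepsilon\sqrt{r^4+r^2+1}$. That meets your target $\varepsilon\max(r,\varepsilon)$ only when $r\asymp 1$.
\begin{itemize}
\item \emph{Small norms.} At $\theta=0$, $\widehat{r^2}$ fluctuates at scale $\varepsilon$, so $\hat r$ is of order $\sqrt{\varepsilon}\gg\varepsilon$. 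This regime actually needs $n\asymp d/\varepsilon^2$ pulls per direction with only $O(\log(1/\delta))$ directions, which is exactly the choice you dismissed.
\item \emph{Large norms.} Your second remedy is false: $\mathrm{Var}((s^\top\theta)^2)=2(\|\theta\|_2^4-\|\theta\|_4^4)$, which is about $2r^4$ for spread-out $\theta$, not $r^4/d$.
\end{itemize}
Your first remedy is the right one in the middle range, and it is what the paper does. Run a doubling test over $t_j=2^j\varepsilon$ with $\delta_j=\delta 2^{-j-2}$, using $d/t_j^2$ pulls and $O(\log(1/\delta_j))$ directions per scale. The total cost is $O(d\log(1/\delta)/\varepsilon^2)$, and the test yields $r_0$ with $r/2<r_0\le 2r$. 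Then take $n\asymp d/r_0^2$ and $N\asymp r_0^2\log(1/\delta)/\varepsilon^2$. Every variance term is then $O(r^4)$, while $nN\asymp d\log(1/\delta)/\varepsilon^2$.

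This rescaling breaks down once $r\ge C\sqrt{d}$. You cannot use fewer than one pull per direction. Even with one pull, the directional fluctuation of $(s^\top\theta)^2$ forces $N\ge c\,r^2/\varepsilon^2\gg d/\varepsilon^2$ directions, so no tuning of an averaged squared-projection statistic works there.

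The paper uses a different estimator in this regime. It runs ordinary least squares on $n\asymp d\log(1/\delta)/\varepsilon^2$ Rademacher queries and forms the debiased statistic $\|\hat\theta\|_2^2-\mathrm{tr}((X^\top X)^{-1})$. With $\Sigma=(X^\top X)^{-1}$, its fluctuations are governed by $\theta^\top\Sigma\theta\approx dr^2/n$ and $\mathrm{tr}(\Sigma^2)\approx d^3/n^2$. Both are small enough for accuracy $\varepsilon r$ on $r^2$ precisely when $r^2\ge d$.

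Without the multi-scale test and this separate large-norm estimator, Phase~1 does not achieve $O(d\log(k/\delta)/\varepsilon^2)$ samples per block. The adaptive upper bound is therefore not established.
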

The only remaining task is to design an algorithm that solves the $\ell_2$-norm estimation problem using at most $\mathcal{O}\!\left(\frac{d\log(1/\delta)}{\varepsilon^2}\right)$ samples, which we do in the next section.

\section{An Algorithm for $\ell_2$-Norm Estimation}\label{sec:norm-estimation}

In this section, we address the $\ell_2$-norm estimation problem, {a key ingredient underlying the polynomial gap between adaptive and non-adaptive algorithms in Theorem~\ref{thm:adaptivity-gap-kd}}. Recall that we are given the arm set $\mathbb{B}_d:=\{x\in\mathbb{R}^d:\|x\|_2\le 1\}$ and wish to estimate the $\ell_2$-norm of an unknown reward vector $\theta\in\mathbb{R}^d$ using as few samples $x_t\in\mathbb{B}_d$ as possible, while observing noisy rewards $y_t=\langle x_t,\theta\rangle+\eta_t$, where $\eta_t\sim\mathcal{N}(0,1)$. We solve this problem in three steps. First, we test whether $\lambda_0\varepsilon \le \|\theta\|_2 \le \lambda_1\sqrt{d}$ for some positive constants $\lambda_0$ and $\lambda_1$. If so, we obtain a constant-factor (within $2$) multiplicative estimate of $\|\theta\|_2$. Finally, we use this coarse estimate to refine our estimate of $\|\theta\|_2$ to within an additive error of $\varepsilon$. The final step is the most technically involved part of the analysis, and we outline its high-level ideas here. We formalize the guarantee of the $\ell_2$-norm estimation procedure in the following theorem, and defer the complete description of the procedure and its analysis to Appendix~\ref{appendix-l2-norm-estimation}.

\begin{theorem}\label{thm:l2-norm-estimation}
Let $\theta\in\mathbb{R}^d$ be an unknown reward vector. At each round, a learner selects an action $x_t\in\mathbb{B}_d$ and observes $y_t=\langle x_t,\theta\rangle+\eta_t$, where $\eta_t\sim\mathcal{N}(0,1)$ are i.i.d.
There exists an adaptive algorithm that uses $O\!\left(\frac{d\log(1/\delta)}{\varepsilon^2}\right)$ such observations and returns an estimate $\widehat{r}$ such that $|\widehat{r}-r|\le \varepsilon$ with probability at least $1-\delta$, where $r:=\|\theta\|_2$.
\end{theorem}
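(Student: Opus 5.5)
The plan follows the three-step outline above: screen out the extreme regimes, obtain a factor-2 multiplicative estimate $\tilde r$, then refine to additive error $\varepsilon$. The basic primitive is a random Rademacher direction $u=s/\sqrt d$ with $s\in\{\pm1\}^d$ uniform, so $u\in\mathbb{B}_d$. Pull $u$ a total of $n$ times and form $\bar y=\langle u,\theta\rangle+\xi$ with $\xi\sim\mathcal N(0,1/n)$. Then
\[
Z:=\bar y^2-\tfrac1n
\]
satisfies $\mathbb E[Z]=\|\theta\|_2^2/d=r^2/d$. Averaging $Z$ over $m$ independent directions, multiplying by $d$, and taking a square root gives the estimator of $r$. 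The total budget is $mn$.

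\textbf{Step 1 (screening and coarse estimate).} First handle large norms. If $r\ge \lambda_1\sqrt d$, a single coordinate-free test already works. With $O(\log(1/\delta))$ directions and $O(1)$ pulls each, $\langle u,\theta\rangle^2$ has mean $r^2/d\gtrsim 1$. So a median-of-means test separates $r\ge\lambda_1\sqrt d$ from $r\le\lambda_1\sqrt d/2$.

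In the large case, take $O(d\log(1/\delta)/\varepsilon^2)$ pulls along the G-optimal design (the standard basis, with $d$ coordinates). This gives $\hat\theta$ with $\|\hat\theta-\theta\|_2\le \varepsilon$, using $\sqrt{d\cdot d\log/\,(d\log/\varepsilon^2)}$. This needs care, so I would instead rescale. When $r\gtrsim\sqrt d$, the direction $\hat\theta/\|\hat\theta\|$ found from a crude estimate nearly maximizes $\langle x,\theta\rangle$. Pulling it and $O(d)$ orthogonal corrections estimates $r$ directly.

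If the test shows $r\le\lambda_0\varepsilon$, output $\widehat r=0$. For the intermediate regime, run a doubling search over scales $2^{j}\varepsilon$. At each scale, use median-of-means of the $dZ$ averages with $n$ and $m$ tuned so that $d\cdot$ stdev is at most $r^2/4$. This fixes $r$ within a factor 2 at cost $O(d\log(1/\delta)/\varepsilon^2)$, because the geometric sum is dominated by the smallest scale.

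\textbf{Step 2 (refinement).} Given $\tilde r$ with $r\in[\tilde r/2,2\tilde r]$, it suffices to estimate $r^2$ within $\varepsilon\tilde r$, since $|\widehat r-r|\le|\widehat{r^2}-r^2|/r$. Write $d\bar Z=\frac{d}{m}\sum_{i}Z_i$. Its fluctuation has two sources.

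\begin{itemize}
\item Direction randomness: $\langle u,\theta\rangle^2$ has variance $O(\|\theta\|_4^4/d^2)\le O(r^4/d^2)$, which gives $d$-scaled error $r^2/\sqrt m$.
\item Noise: the terms $2\langle u,\theta\rangle\xi+(\xi^2-1/n)$ contribute $d$-scaled error $d\bigl(r/\sqrt{d n}+1/n\bigr)/\sqrt m$.
\end{itemize}

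We need all three contributions to be at most $\varepsilon\tilde r$. One choice is $n\asymp d/(\varepsilon \tilde r)$ together with $m\asymp \tilde r^2/\varepsilon^2$, which keeps $mn\asymp d\tilde r/\varepsilon^3$. That is too large, so a better balance is needed. Taking $n=1$ and $m\asymp d/\varepsilon^2$ gives the three terms
\[
r^2\varepsilon/\sqrt d,\qquad \varepsilon r,\qquad \sqrt d\,\varepsilon.
\]
The first two are at most $\varepsilon\tilde r$ when $r\lesssim\sqrt d$. The last term exceeds $\varepsilon\tilde r$ when $r<\sqrt d$, so instead take $n\asymp \sqrt d/\tilde r$ pulls per direction and $m\asymp \tilde r\sqrt d/\varepsilon^2$ directions. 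The pure-noise term then becomes $d/(n\sqrt m)=\tilde r\sqrt d/\sqrt m\asymp\varepsilon\sqrt{\tilde r}\,d^{1/4}$. This still needs rebalancing.

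I will solve $n,m$ from the three constraints subject to $mn\lesssim d/\varepsilon^2$, using $\varepsilon\lesssim \tilde r\lesssim\sqrt d$. This is exactly where the coarse estimate $\tilde r$ is used, to choose the per-direction repetition. The final budget also carries the $\log(1/\delta)$ factor.

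\textbf{Main obstacle: the high-probability bound.} The summands $Z_i$ are products and squares of Gaussians and bounded Rademacher sums, hence subexponential rather than subgaussian. I would apply Bernstein's inequality for subexponential variables. For the direction part, use Hanson–Wright for the Rademacher chaos $\frac1d s^\top\theta\theta^\top s$. For the noise part, use the $\chi^2$ and Gaussian-product tails. The $\psi_1$-norm term in Bernstein must not dominate at $m\gtrsim\log(1/\delta)$; checking this under the chosen $(n,m)$ is the delicate calculation, and I expect it to be the main difficulty.
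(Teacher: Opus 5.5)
Your architecture matches the paper's. You use Rademacher directions with the squared per-direction mean debiased by $1/n$, a doubling search over scales $2^j\varepsilon$ with about $d/t_j^2$ pulls per direction for the factor-$2$ estimate, and Hanson--Wright plus subexponential Bernstein for the refinement. The large-norm branch, however, has a genuine gap.

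\textbf{The large-norm branch fails as written.} Spreading $N\asymp d\log(1/\delta)/\varepsilon^2$ pulls over the standard basis gives each coordinate variance $d/N$. So $\|\hat\theta-\theta\|_2^2\approx d^2/N$, i.e.\ $\|\hat\theta-\theta\|_2$ is of order $\sqrt d\,\varepsilon$, not $\varepsilon$, and $\|\hat\theta\|_2$ is off by that much. Your replacement ("pull $\hat\theta/\|\hat\theta\|_2$ plus $O(d)$ orthogonal corrections") is never specified. Without a correction, $\langle \hat\theta/\|\hat\theta\|_2,\theta\rangle$ is biased downward by about $\|\hat\theta-\theta\|_2^2/(2r)$. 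At $r\approx\sqrt d$ this is of order $\sqrt d\,\varepsilon^2$, which exceeds $\varepsilon$ unless $\varepsilon$ is below order $1/\sqrt d$.

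You also cannot reuse your Step~2 estimator here. Its direction-randomness term $r^2/\sqrt m$ forces $m\ge c\,r^2/\varepsilon^2$ directions, which is over budget once $r\gg\sqrt d$. The missing idea is to debias the \emph{squared} norm in this regime too. The paper runs least squares on $n$ Rademacher directions and outputs $\sqrt{\max\{\|\hat\theta\|_2^2-\mathrm{tr}(\Sigma),0\}}$ with $\Sigma=(X^\top X)^{-1}$. Writing $\Delta=\hat\theta-\theta$, we have $\Delta\mid X\sim\mathcal N(0,\Sigma)$ and $\Sigma\preceq (2d/n)I$ with high probability. Hence the error $\widehat R-r^2=2\theta^\top\Delta+(\|\Delta\|_2^2-\mathrm{tr}\,\Sigma)$ is $O\bigl(r\sqrt{d/n}+d^{3/2}/n\bigr)$ up to $\log(1/\delta)$ factors. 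Because $r\ge\sqrt d$, the second term is at most $\varepsilon r$ for $n\asymp d\log(1/\delta)/\varepsilon^2$, and then $|\hat r-r|\le|\widehat R-r^2|/r\le\varepsilon$. Your coordinate design would work equally well once you subtract $1/n$ per coordinate; the subtraction is what matters.

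\textbf{Step~2 allocation.} You leave it unresolved after three unsuccessful tries, but the constraints you wrote already determine it. The direction term needs $m\ge c\,\tilde r^2/\varepsilon^2$, the cross term needs $mn\ge c\,d/\varepsilon^2$, and the pure-noise term needs $n\sqrt m\ge c\,d/(\varepsilon\tilde r)$. With $mn\asymp d/\varepsilon^2$, the last constraint forces $m\asymp\tilde r^2/\varepsilon^2$. So take $n\asymp d/\tilde r^2$ pulls per direction and $m\asymp\tilde r^2\log(1/\delta)/\varepsilon^2$ directions, which is exactly the paper's $s=c_0d/r_0^2$, $K=c_1r_0^2\varepsilon^{-2}\log(4/\delta)$. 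All three terms are then of order $\varepsilon\tilde r$, and $mn\asymp d\log(1/\delta)/\varepsilon^2$. This choice is feasible ($n\ge 1$, $m\ge 1$) only for $\varepsilon\le\tilde r\le\sqrt d$ up to constants, which is why a separate large-norm estimator is unavoidable.

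The $\psi_1$ check you flagged is then routine. The noise part has scale $b\asymp d/n\asymp\tilde r^2$ and the Rademacher chaos has $b\asymp r^2$. So $m\cdot \varepsilon r/b\asymp(\tilde r/\varepsilon)\log(1/\delta)\ge\log(1/\delta)$, since $\tilde r>\varepsilon$. The noise variance is random through $\frac1m\sum_k\mu_k^2$; bound that average by $3r^2/(2d)$ on the good event for the direction term, as the paper does. A minor correction: $\mathrm{Var}(\langle u,\theta\rangle^2)=2(r^4-\|\theta\|_4^4)/d^2$, not $O(\|\theta\|_4^4/d^2)$, though the $O(r^4/d^2)$ bound you actually use is right.
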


Begin by defining $r:=||\theta||_2$ and assuming that we are given a value $\varepsilon< r_0< 2\sqrt{d}$ such that $\frac{r}{2}< r_0\leq 2r$. We now describe an algorithm that takes $\mathcal{O}\left(\frac{d\log(1/\delta)}{\varepsilon^2}\right)$ samples from $\mathbb{B}_d$ and  outputs an estimate $\hat r$ such that with probability $1-\delta/2$, we have $|\hat r-r|\leq \varepsilon$.


\begin{algorithm2e}[H]\label{alg:ball-additive}
\caption{$\varepsilon$-additive error $\ell_2$-norm estimation algorithm}
\LinesNumbered
\KwIn{$\delta \in (0,1), \varepsilon >0, 
s=\frac{c_0d}{r_0^2},\; K=c_1r_0^2\cdot \varepsilon^{-2}\log(4/\delta)$ where $c_0,c_1$ are large absolute constants.}

\For{$k=1,\ldots,K$}{
Draw a Rademacher unit vector
$x^{(k)} = \frac{(\varepsilon_1, \dots, \varepsilon_d)}{\sqrt{d}}
\quad \text{where } \varepsilon_i \overset{iid}{\sim} \{ \pm 1 \}$.

Take $s$ samples along this direction and observe:
$
y_{k,\ell} = \mu_k + \eta_{k,\ell},  \text{ where } \mu_k=\langle x^{(k)},\theta\rangle, \;
\eta_{k,\ell} \overset{iid}{\sim} \mathcal{N}(0,1), \; \ell = 1, \dots, s
$.

Let $\bar{y}_k := \frac{1}{s} \sum_{\ell=1}^{s} y_{k,\ell}  \text{ and define }
Z_k := d \left( \bar{y}_k^2 - \frac{1}{s} \right)
$.
}

Define $\bar{Z} := \frac{1}{K} \sum_{k=1}^{K} Z_k$ and
output $\hat{r} :=
\begin{cases}
0 & \text{if } \bar{Z} <0 \\
\sqrt{\bar{Z}} & \text{otherwise}
\end{cases}$
\end{algorithm2e}



\noindent
We now begin with some high-level analysis. Recall $x^{(k)} = \frac{(\varepsilon_1, \ldots, \varepsilon_d)}{\sqrt{d}}$ where $\varepsilon_i\overset{iid}{\sim} \{ \pm 1 \}$ and $ \mu_k = \langle x^{(k)}, \theta \rangle$. Then we have the following:
\[
\mathbb{E}[\mu^2_k] = \theta^\top \mathbb{E}\left[  x^{(k)} (x^{(k)})^\top\right] \theta 
= \theta^\top \left( \frac{1}{d} I_d \right) \theta = \frac{r^2}{d}
.\]

\noindent
Now conditioning on $x^{(k)}$ (hence $\mu_k$), we have
\[
\bar{y}_k \sim \mathcal{N}\left(\mu_k, \frac{1}{s}\right)
\Rightarrow \mathbb{E}[\bar{y}_k^2 \mid \mu_k] = \mu_k^2 + \frac{1}{s}\Rightarrow \mathbb{E}[Z_k \mid \mu_k] = d \mu_k^2.
\]

\noindent
We will now aim to bound
\[
\bar{Z} - r^2 
= \underbrace{\left( \frac{1}{K} \sum_{k=1}^{K} d\mu_k^2 - r^2 \right)}_{\text{Term 1}}
+ \underbrace{\left( \frac{1}{K} \sum_{k=1}^{K} \left( Z_k - d\mu_k^2 \right) \right)}_{\text{Term 2}}.
\]

\noindent
\textbf{Bounding Term 1:}
Let $X_k := d \mu_k^2 - r^2$.  
With the help of Hanson–Wright inequality, we can show that $X_k$ is sub-exponential and show the following for some absolute constant $c$:
\begin{equation}\label{main-eq:bounding-term-1}
    \Pr\left( \left| \frac{1}{K} \sum_{k=1}^{K} X_k \right| > t \right) 
\leq 2 \exp\left( -c\cdot K \min\left( \frac{t^2}{r^4}, \frac{t}{r^2} \right) \right).
\end{equation}

\noindent
Choosing $t = \frac{r \varepsilon}{4}$, and $K = c_1 r_0^2 \varepsilon^{-2} \log \frac{4}{\delta}$ for some large constant $c_1$,  
we get:
\[
\left| \frac{1}{K} \sum_{k=1}^{K} d\mu_k^2 - r^2 \right| 
\leq \frac{r \varepsilon}{4} \quad \text{with probability } \geq 1 - \frac{\delta}{4}.
\]

\noindent
\textbf{Bounding Term 2:}  Let $W_k := Z_k - d\mu_k^2 = d\left( \bar{y}_k^2 - \frac{1}{s} - \mu_k^2 \right)$. Conditioning on $\mu_k$, we have:
\[
\sqrt{d} \cdot \bar{y}_k \mid \mu_k \sim \mathcal{N}(\sqrt{d}\cdot\mu_k, \sigma^2), \quad \text{with } \sigma^2 = \frac{d}{s}.
\]
Hence $W_k \mid \mu_k$ is sub-exponential and therefore conditioning on $\mu_{1:K} := \mu_1, \ldots, \mu_K$, we get:
\[
\Pr\left( \left| \frac{1}{K} \sum_{k=1}^{K} W_k \right| > t \,\middle|\, \mu_{1:K} \right) 
\leq 2 \exp\left( -cK \min\left( \frac{t^2}{\bar{V}}, \frac{t}{b} \right) \right).
\]
where $
\bar{V} := 8 \left( \frac{d^2}{s^2} + \frac{d^2}{s} \cdot \frac{1}{K} \sum_{k=1}^{K} \mu_k^2 \right)
$ and $b=4d/s$. Using Eq. \eqref{main-eq:bounding-term-1}, we have $
\bar{V} \leq \tau := 8 \left( \frac{d^2}{s^2} + \frac{3dr^2}{2s} \right)$ with probability at least $1 - \frac{\delta}{8}$. Hence, substituting the values of $\tau$ and $b$, choosing $t=\frac{r\varepsilon}{4}$ and $K=c_1 r_0^2 \varepsilon^{-2}\log\!\frac{4}{\delta}$ for a sufficiently large constant $c_1$, and using $r/2<r_0\le 2r$, we obtain:
\begin{align*}
    \Pr\left( \left| \frac{1}{K} \sum_{k=1}^{K} W_k \right| > t \right)\leq \frac{\delta}{8} + \Pr\left( \left| \frac{1}{K} \sum_{k=1}^{K} W_k \right| > t \,\middle|\, \bar{V} \leq \tau \right)\leq \frac{\delta}{8}+ 2 \exp\left( -cK \min\left( \frac{t^2}{\tau}, \frac{t}{b} \right) \right)\leq \frac{\delta}{4}.
\end{align*}
\noindent
\textbf{Final guarantee:} Combining the above results, we have, with probability at least $1 - \frac{\delta}{2}$, $\left| \bar{Z} - r^2 \right| \leq \frac{r \varepsilon}{2}$. Let us now assume that $\left| \bar{Z} - r^2 \right| \leq \frac{r \varepsilon}{2}$. As $\bar{Z}\geq 0$ we have $\hat{r}=\sqrt{\bar{Z}}$, which implies the following:
    \[
    |\hat{r} - r| = \left| \sqrt{\bar{Z}} - r \right| 
    = \left| \bar{Z} - r^2 \right| \bigg/ \left( \sqrt{\bar{Z}} + r \right)
    \leq \frac{\frac{r \varepsilon}{2}}{r}
    = \frac{\varepsilon}{2} < \varepsilon.
    \]

\section{Conclusion and Future Works}
In this paper, we studied the power of adaptivity for pure exploration in linear bandits. We established matching upper and lower bounds on the minimax sample complexity of non-adaptive fixed-design algorithms.We then identified structured action sets for which adaptivity yields at most a logarithmic improvement over our non-adaptive approach, and we constructed a structured action set for which adaptive sampling attains a polynomial improvement over any non-adaptive approach. To obtain the polynomial separation, we develop an $\ell_2$-norm estimation procedure that uses $\mathcal{O}\!\left(\frac{d\log(1/\delta)}{\varepsilon^2}\right)$ samples. As a consequence, we also obtain a polynomial separation between the sample complexity of identifying an $\varepsilon$-best arm and that of estimating the optimal value $\max_{x\in\mathcal{X}}\langle x,\theta\rangle$ when $\mathcal{X}$ is the unit $\ell_2$ ball.

Our work raises several open questions. Is there an intrinsic and interpretable property of $\mathcal{X}$ that determines whether adaptivity can yield at most logarithmic improvements or instead enables polynomial improvements over non-adaptive approaches? Is the minimax sample complexity of estimating $\max_{x\in\mathcal{X}}\langle x,\theta\rangle$ always $\mathcal{O}\!\left(\frac{d\log(1/\delta)}{\varepsilon^2}\right)$ for all action sets $\mathcal{X}\subset \mathbb{R}^d$? Can our value-estimation-based adaptive approach be extended to other structured sets to obtain polynomial improvements, and can our adaptive lower-bound techniques be generalized beyond the classes considered in this paper? 

{Finally, we hope these results help connect various lines of work: for experimental design, they suggest that pure exploration in linear bandits \emph{can} exhibit polynomial advantages from adaptivity under suitable action-set geometry; for reinforcement learning through a PAC-learning lens, they suggest that the structure of the feature mapping in linear function approximation (see, e.g., \cite{bradtke1996linear,jin2020provably}) may fundamentally determine when adaptive exploration across episodes is beneficial.
}

\section*{Acknowledgements}
KJ and AM were supported in part by NSF 2141511, 2023239, and a Singapore AI Visiting Professorship award.  YX was supported by NUS A-0010008-00-00. 


\bibliography{refs}

@inproceedings{even2002pac,
  title={PAC bounds for multi-armed bandit and Markov decision processes},
  author={Even-Dar, Eyal and Mannor, Shie and Mansour, Yishay},
  booktitle={International Conference on Computational Learning Theory},
  pages={255--270},
  year={2002},
  organization={Springer}
}

@article{cesa2012combinatorial,
  title={Combinatorial bandits},
  author={Cesa-Bianchi, Nicolo and Lugosi, G{\'a}bor},
  journal={Journal of Computer and System Sciences},
  volume={78},
  number={5},
  pages={1404--1422},
  year={2012},
  publisher={Elsevier}
}

@inproceedings{cohen2017tight,
  title={Tight bounds for bandit combinatorial optimization},
  author={Cohen, Alon and Hazan, Tamir and Koren, Tomer},
  booktitle={Conference on Learning Theory},
  pages={629--642},
  year={2017},
  organization={PMLR}
}

@article{maiti2025efficient,
  title={Efficient near-optimal algorithm for online shortest paths in directed acyclic graphs with bandit feedback against adaptive adversaries},
  author={Maiti, Arnab and Fan, Zhiyuan and Jamieson, Kevin and Ratliff, Lillian J and Farina, Gabriele},
  journal={arXiv preprint arXiv:2504.00461},
  year={2025}
}

@article{fan2025universal,
  title={On the Universal Near Optimality of Hedge in Combinatorial Settings},
  author={Fan, Zhiyuan and Maiti, Arnab and Jamieson, Kevin and Ratliff, Lillian J and Farina, Gabriele},
  journal={arXiv preprint arXiv:2510.17099},
  year={2025}
}

@article{soare2014best,
  title={Best-arm identification in linear bandits},
  author={Soare, Marta and Lazaric, Alessandro and Munos, R{\'e}mi},
  journal={Advances in neural information processing systems},
  volume={27},
  year={2014}
}

@article{jedra2020optimal,
  title={Optimal best-arm identification in linear bandits},
  author={Jedra, Yassir and Proutiere, Alexandre},
  journal={Advances in Neural Information Processing Systems},
  volume={33},
  pages={10007--10017},
  year={2020}
}

@article{fiez2019sequential,
  title={Sequential experimental design for transductive linear bandits},
  author={Fiez, Tanner and Jain, Lalit and Jamieson, Kevin G and Ratliff, Lillian},
  journal={Advances in neural information processing systems},
  volume={32},
  year={2019}
}

@article{katz2020empirical,
  title={An empirical process approach to the union bound: Practical algorithms for combinatorial and linear bandits},
  author={Katz-Samuels, Julian and Jain, Lalit and Jamieson, Kevin G and others},
  journal={Advances in Neural Information Processing Systems},
  volume={33},
  pages={10371--10382},
  year={2020}
}

@inproceedings{tao2018best,
  title={Best arm identification in linear bandits with linear dimension dependency},
  author={Tao, Chao and Blanco, Sa{\'u}l and Zhou, Yuan},
  booktitle={International Conference on Machine Learning},
  pages={4877--4886},
  year={2018},
  organization={PMLR}
}

@inproceedings{degenne2020gamification,
  title={Gamification of pure exploration for linear bandits},
  author={Degenne, R{\'e}my and M{\'e}nard, Pierre and Shang, Xuedong and Valko, Michal},
  booktitle={International Conference on Machine Learning},
  pages={2432--2442},
  year={2020},
  organization={PMLR}
}

@inproceedings{xu2018fully,
  title={A fully adaptive algorithm for pure exploration in linear bandits},
  author={Xu, Liyuan and Honda, Junya and Sugiyama, Masashi},
  booktitle={International Conference on Artificial Intelligence and Statistics},
  pages={843--851},
  year={2018},
  organization={PMLR}
}

@article{karnin2016verification,
  title={Verification based solution for structured mab problems},
  author={Karnin, Zohar S},
  journal={Advances in Neural Information Processing Systems},
  volume={29},
  year={2016}
}

@inproceedings{wagenmaker2021experimental,
  title={Experimental design for regret minimization in linear bandits},
  author={Wagenmaker, Andrew and Katz-Samuels, Julian and Jamieson, Kevin},
  booktitle={International Conference on Artificial Intelligence and Statistics},
  pages={3088--3096},
  year={2021},
  organization={PMLR}
}

@book{lattimore2020bandit,
  title={Bandit algorithms},
  author={Lattimore, Tor and Szepesv{\'a}ri, Csaba},
  year={2020},
  publisher={Cambridge University Press}
}

@inproceedings{kong2020sublinear,
  title={Sublinear optimal policy value estimation in contextual bandits},
  author={Kong, Weihao and Brunskill, Emma and Valiant, Gregory},
  booktitle={International conference on artificial intelligence and statistics},
  pages={4377--4387},
  year={2020},
  organization={PMLR}
}

@article{cai2011testing,
  title={Testing Composite Hypotheses, Hermite Polynomials and Optimal Estimation of a Nonsmooth Functional},
  author={Cai, T Tony and Low, Mark G},
  journal={The Annals of Statistics},
  volume={39},
  number={244},
  pages={1012--1041},
  year={2011}
}

@article{collier2020estimation,
  title={ON ESTIMATION OF NONSMOOTH FUNCTIONALS OF SPARSE NORMAL MEANS},
  author={Collier, Olivier and Comminges, La{\"e}titia and Tsybakov, Alexandre B},
  journal={Bernoulli},
  year={2020}
}

@article{han2020estimation,
  title={On estimation of L r-norms in Gaussian white noise models},
  author={Han, Yanjun and Jiao, Jiantao and Mukherjee, Rajarshi},
  journal={Probability Theory and Related Fields},
  volume={177},
  number={3},
  pages={1243--1294},
  year={2020},
  publisher={Springer}
}

@article{cleanthous2025adaptive,
  title={Adaptive estimation of the L 2-norm of a probability density and related topics II. Upper bounds via the oracle approach},
  author={Cleanthous, G and Georgiadis, AG and Lepski, OV},
  journal={The Annals of Statistics},
  volume={53},
  number={3},
  pages={1275--1297},
  year={2025},
  publisher={Institute of Mathematical Statistics}
}

@article{chen2024assouad,
  title={Assouad, Fano, and Le Cam with Interaction: A Unifying Lower Bound Framework and Characterization for Bandit Learnability},
  author={Chen, Fan and Foster, Dylan J and Han, Yanjun and Qian, Jian and Rakhlin, Alexander and Xu, Yunbei},
  journal={Advances in Neural Information Processing Systems},
  volume={37},
  pages={75585--75641},
  year={2024}
}

@inproceedings{shamir2015complexity,
  title={On the complexity of bandit linear optimization},
  author={Shamir, Ohad},
  booktitle={Conference on Learning Theory},
  pages={1523--1551},
  year={2015},
  organization={PMLR}
}

@book{pukelsheim2006optimal,
  title={Optimal design of experiments},
  author={Pukelsheim, Friedrich},
  year={2006},
  publisher={SIAM}
}

@article{arias2012fundamental,
  title={On the fundamental limits of adaptive sensing},
  author={Arias-Castro, Ery and Candes, Emmanuel J and Davenport, Mark A},
  journal={IEEE Transactions on Information Theory},
  volume={59},
  number={1},
  pages={472--481},
  year={2012},
  publisher={IEEE}
}

@book{fisher1935design,
  title={The Design of Experiments},
  author={Fisher, Ronald A.},
  year={1935},
  publisher={}
}

@book{vershynin2018high,
  title={High-dimensional probability: An introduction with applications in data science},
  author={Vershynin, Roman},
  volume={47},
  year={2018},
  publisher={Cambridge university press}
}

@misc{gubner2021gamma,
  title={The gamma function and stirling’s formula},
  author={Gubner, John A},
  year={2021}
}

@article{ball1997elementary,
  title={An elementary introduction to modern convex geometry},
  author={Ball, Keith and others},
  journal={Flavors of geometry},
  volume={31},
  number={1-58},
  pages={26},
  year={1997}
}

@article{allen2021near,
  title={Near-optimal discrete optimization for experimental design: A regret minimization approach},
  author={Allen-Zhu, Zeyuan and Li, Yuanzhi and Singh, Aarti and Wang, Yining},
  journal={Mathematical Programming},
  volume={186},
  number={1},
  pages={439--478},
  year={2021},
  publisher={Springer}
}

@article{bradtke1996linear,
  title={Linear least-squares algorithms for temporal difference learning},
  author={Bradtke, Steven J and Barto, Andrew G},
  journal={Machine learning},
  volume={22},
  number={1},
  pages={33--57},
  year={1996},
  publisher={Springer}
}

@inproceedings{jin2020provably,
  title={Provably efficient reinforcement learning with linear function approximation},
  author={Jin, Chi and Yang, Zhuoran and Wang, Zhaoran and Jordan, Michael I},
  booktitle={Conference on learning theory},
  pages={2137--2143},
  year={2020},
  organization={PMLR}
}
\newpage
\tableofcontents
\appendix

\section{Technical Lemmas}
\begin{lemma}\label{lem:gaussian-sub-exp}
    If $X \sim \mathcal{N}(\mu, \sigma^2)$ and $Y := X^2 - \mathbb{E}[X^2]$,  
then for all $|\lambda| \leq \frac{1}{4\sigma^2}$, we have:
\[
\log \mathbb{E}(e^{\lambda Y}) \leq 4 \left( \sigma^4 + \mu^2 \sigma^2 \right) \lambda^2
\]
\end{lemma}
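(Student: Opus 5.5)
Write $X = \mu + \sigma g$ with $g\sim\mathcal{N}(0,1)$. Then $\mathbb{E}[X^2]=\mu^2+\sigma^2$ and
\[
Y = 2\mu\sigma g + \sigma^2(g^2-1).
\]
The plan is to compute the moment generating function of $Y$ exactly, since $X^2$ is a scaled noncentral chi-square with one degree of freedom. After that, only an elementary inequality on a bounded range of $\lambda$ is needed.

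\textbf{Step 1: the exact log-MGF.} Complete the square in the Gaussian integral. For $\lambda<\frac{1}{2\sigma^2}$,
\[
\mathbb{E}\bigl[e^{\lambda X^2}\bigr]=\frac{1}{\sqrt{1-2\lambda\sigma^2}}\exp\!\Bigl(\frac{\lambda\mu^2}{1-2\lambda\sigma^2}\Bigr).
\]
Put $u:=2\lambda\sigma^2$, so that $|u|\le \tfrac12$ on the stated range. Subtracting $\lambda(\mu^2+\sigma^2)$ gives
\[
\log\mathbb{E}\bigl[e^{\lambda Y}\bigr]= \underbrace{-\tfrac12\log(1-u)-\tfrac u2}_{(\mathrm{I})} \;+\; \underbrace{\lambda\mu^2\Bigl(\frac{1}{1-u}-1\Bigr)}_{(\mathrm{II})}.
\]

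\textbf{Step 2: bound the two terms for $|u|\le\tfrac12$.}

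For (I), the Taylor series gives $-\log(1-u)-u=\sum_{j\ge2}u^j/j\le \frac{u^2}{2}\sum_{j\ge0}|u|^j\le u^2$. Therefore
\[
(\mathrm{I})\le \tfrac{u^2}{2}=2\lambda^2\sigma^4 .
\]

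For (II), write it as $\lambda\mu^2\frac{u}{1-u}=\frac{2\lambda^2\sigma^2\mu^2}{1-u}$. This is nonnegative regardless of the sign of $\lambda$, because $\lambda u\ge0$. Since $1-u\ge\tfrac12$,
\[
(\mathrm{II})\le 4\lambda^2\mu^2\sigma^2 .
\]

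Adding the two bounds yields $\log\mathbb{E}[e^{\lambda Y}]\le 2\sigma^4\lambda^2+4\mu^2\sigma^2\lambda^2\le 4(\sigma^4+\mu^2\sigma^2)\lambda^2$.

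\textbf{Where care is needed.} The proof is essentially routine. The only delicate points are verifying the closed-form MGF and handling negative $\lambda$. For $\lambda<0$ we have $u<0$, so $1-u>1$, and the series bound for (I) still holds because it uses $|u|$. The restriction $|\lambda|\le\frac{1}{4\sigma^2}$ is exactly what keeps $1-u\ge\tfrac12$. This ensures the MGF is finite and the constants stay controlled.
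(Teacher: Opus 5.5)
Your proposal is correct and follows the paper's proof: compute the moment generating function exactly by completing the square, substitute $u=2\sigma^2\lambda$ with $|u|\le\tfrac12$, and bound the logarithmic term and the $\lambda\mu^2\bigl(\frac{1}{1-u}-1\bigr)$ term separately. The only difference is that your series bound of $u^2/2$ on the first term is a bit sharper than the paper's $\nu^2$, which does not change the argument or the stated constant.
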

\textbf{Proof:} 

Let $X \sim \mathcal{N}(\mu, \sigma^2)$. For $|\lambda| \leq \frac{1}{4\sigma^2}$, we compute:

\[
\mathbb{E}\left[e^{\lambda X^2}\right] 
= \frac{1}{\sqrt{2\pi \sigma^2}} \int_{-\infty}^{\infty} \exp\left( \lambda x^2 - \frac{(x - \mu)^2}{2\sigma^2} \right) dx
\]

Now, complete the square in the exponent:
\[
\lambda x^2 - \frac{(x - \mu)^2}{2\sigma^2} 
= -\frac{1}{2\sigma^2} \left[ (1 - 2\sigma^2 \lambda)x^2 - 2\mu x + \mu^2 \right]
\]

\[
= -\frac{1 - 2\sigma^2 \lambda}{2\sigma^2} 
\left( x-\frac{\mu}{1 - 2\sigma^2 \lambda} \right)^2 
+ \frac{\mu^2 \lambda}{1 - 2\sigma^2 \lambda}
\]

Hence:
\[
\mathbb{E}\left[e^{\lambda X^2}\right] 
= \frac{e^{\mu^2 \lambda / (1 - 2\sigma^2 \lambda)}}{\sqrt{2\pi \sigma^2}} 
\int_{-\infty}^{\infty} 
\exp\left( - \frac{(1 - 2\sigma^2 \lambda)}{2\sigma^2} 
\left( x-\frac{ \mu}{1 - 2\sigma^2 \lambda} \right)^2 \right) dx
\]

This integral is Gaussian:
\[
= \frac{e^{\mu^2 \lambda / (1 - 2\sigma^2 \lambda)}}{\sqrt{2\pi \sigma^2}} 
\cdot \sqrt{ \frac{2\pi \sigma^2}{1 - 2\sigma^2 \lambda} }
\]

\[
= (1 - 2\sigma^2 \lambda)^{-1/2} \exp\left( \frac{\mu^2 \lambda}{1 - 2\sigma^2 \lambda} \right)
\]

As
\[
\lambda \mathbb{E}[X^2] = \lambda(\mu^2 + \sigma^2)
\]

\[
\log \mathbb{E}[e^{\lambda Y}] 
= -\frac{1}{2} \log(1 - 2\sigma^2 \lambda) 
+ \frac{\mu^2 \lambda}{1 - 2\sigma^2 \lambda} 
- \lambda(\mu^2 + \sigma^2)
\]

Let $\nu := 2\sigma^2 \lambda$.  
As $\lambda \leq \frac{1}{4\sigma^2}$, we have $|\nu| \leq \frac{1}{2}$.

\[
\Rightarrow \log \mathbb{E}[e^{\lambda Y}] 
= \left( -\frac{1}{2} \log(1 - \nu) - \frac{\nu}{2} \right) 
+ \mu^2 \lambda \left( \frac{1}{1 - \nu} - 1 \right)
\]

For any $\nu\in[-\tfrac{1}{2}, \tfrac{1}{2}]$, we have:

\begin{enumerate}
    \item 
    \[
    -\frac{1}{2} \log(1 - \nu) - \frac{\nu}{2} \leq \nu^2
    \]

    \item 
    \[
    \left| \frac{1}{1 - \nu} - 1 \right| 
    = \left| \frac{\nu}{1 - \nu} \right| 
    \leq \frac{|\nu|}{1 - |\nu|} 
    \leq 2|\nu|
    \]
\end{enumerate}

Hence we have:
\[
\log \mathbb{E}[e^{\lambda Y}] 
\leq \nu^2 + \mu^2 |\lambda| |2\nu| 
= 4\sigma^4 \lambda^2 + 4\mu^2 \sigma^2 \lambda^2
\]

\begin{lemma}\label{lem:sub-exp-additive}
If $U_1, \ldots, U_K$ are independent mean-zero, sub-exponential random variables with parameters $(V_1, b),\ldots,(V_K,b)$ respectively, then we have the following:
\[
 \Pr\left( \left| \frac{1}{K} \sum_{s=1}^{K} U_s \right| > t \right) 
\leq 2 \exp\left( -cK \min\left\{ \frac{t^2}{\bar{V}}, \frac{t}{b} \right\} \right)
\]
where $\bar{V} = \frac{1}{K} \sum_{k=1}^{K} V_k$ and $c>0$ is some absolute constant.
\end{lemma}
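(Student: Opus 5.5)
This is a Bernstein-type bound for sums of independent sub-exponential variables, so I would prove it by the Chernoff method. I take the definition of sub-exponential with parameters $(V_k,b)$ in the form used in Lemma~\ref{lem:gaussian-sub-exp}: for all $|\lambda|\le 1/b$,
\[
\log \E\bigl[e^{\lambda U_k}\bigr]\le V_k\lambda^2 .
\]
The constants may differ by a harmless factor under other normalizations, and that is absorbed into $c$.

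\textbf{Step 1: control the moment generating function of the average.} Fix $\lambda\in[0,1/b]$ and write $S:=\sum_{k=1}^K U_k$. By independence,
\[
\log\E\bigl[e^{\lambda S}\bigr]=\sum_{k}\log\E\bigl[e^{\lambda U_k}\bigr]\le \lambda^2\sum_k V_k = K\bar V\lambda^2 .
\]
Markov's inequality then gives
\[
\Pr\bigl(S>Kt\bigr)\le \exp\bigl(-\lambda K t+K\bar V\lambda^2\bigr).
\]

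\textbf{Step 2: optimize over $\lambda$, with two cases.} The unconstrained minimizer is $\lambda^*=t/(2\bar V)$.
\begin{itemize}
\item If $\lambda^*\le 1/b$, equivalently $t\le 2\bar V/b$, the exponent equals $-Kt^2/(4\bar V)$.
\item Otherwise I take $\lambda=1/b$. Since $t>2\bar V/b$ implies $\bar V/b^2< t/(2b)$, the exponent is at most $-Kt/b+Kt/(2b)=-Kt/(2b)$.
\end{itemize}
In both cases $\Pr(S/K>t)\le \exp\bigl(-cK\min\{t^2/\bar V,\,t/b\}\bigr)$ with $c=1/4$.

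\textbf{Step 3: the lower tail.} Apply the same argument to $-U_k$. These are also sub-exponential with parameters $(V_k,b)$, because the MGF condition is symmetric in $\lambda$. A union bound over the two tails produces the factor $2$.

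\textbf{Main obstacle.} The only delicate point is the case split in Step 2, which ensures the chosen $\lambda$ stays in the admissible range $|\lambda|\le 1/b$ shared by all $k$. A common $b$ is exactly what lets the individual MGF bounds combine into a single bound with $\bar V$. Everything else is routine.
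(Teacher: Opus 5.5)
Your proof is correct and follows essentially the paper's route: a Chernoff bound that multiplies the individual MGF bounds over the common range $|\lambda|\le 1/b$, a two-case optimization of $\lambda$ (interior optimum versus the boundary $\lambda=1/b$), and symmetry plus a union bound for the factor $2$. The only difference is organizational. The paper first proves a single-variable tail bound and then notes that the average is sub-exponential with parameters $(\bar V/K,\,b/K)$, whereas you run Chernoff on the sum directly. Your normalization $\log\mathbb{E}[e^{\lambda U_k}]\le V_k\lambda^2$ is implied by the paper's definition $\mathbb{E}[e^{\lambda U_k}]\le \exp(\lambda^2 V_k/2)$, so nothing is lost.
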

\begin{proof}
Let $U$ be a mean-zero sub-exponential random variables with parameters $(V,b)$. First, we show the following:
    \[
\Pr(|U| > t) \leq 2 \exp\left( -c \min\left\{ \frac{t^2}{V}, \frac{t}{b} \right\} \right)
\]

For any $t \geq 0$ and any $\lambda \in \left[0, \frac{1}{b} \right]$,
\begin{align*}
\Pr[U \geq t] &= \Pr\left[ e^{\lambda U} \geq e^{\lambda t} \right] \\
&\leq e^{-\lambda t} \mathbb{E}[e^{\lambda U}] \\
&\leq \exp\left( -\lambda t + \frac{\lambda^2 V}{2} \right)
\end{align*}

Now we minimize over $\lambda \in \left[0, \frac{1}{b}\right]$.  
Observe that the unconstrained minimizer is: 
\[
\lambda^* = \frac{t}{V}
\]

If $t \leq \frac{V}{b}$, then $\lambda^* \leq \frac{1}{b}$, so we get:
\[
\Pr(U \geq t) \leq \exp\left( -\frac{t^2}{2V} \right)
\]

If $t > \frac{V}{b}$, the minimum occurs at $\lambda = \frac{1}{b}$:
\[
\Pr(U \geq t) \leq \exp\left( -\frac{t}{b} + \frac{V}{2b^2} \right)
\leq \exp\left( -\frac{1}{2} \cdot \frac{t}{b} \right)
\]

If $U_1, \ldots, U_K$ are independent mean-zero, sub-exponential random variables with parameters 
\[
(V_s, b) \quad \text{(same } b \text{ for all)}, 
\]
then for any $|\lambda| \leq \frac{K}{b}$, we have:

\[
\mathbb{E} \left[ \exp\left( \frac{\lambda}{K} \sum_{s=1}^K U_s \right) \right] 
= \prod_{s=1}^K \mathbb{E} \left[ e^{\frac{\lambda}{K} U_s} \right]
\leq \exp\left( \frac{\lambda^2}{2K^2} \sum_{s=1}^K V_s \right)
\]
 \[
\Rightarrow \bar{U} := \frac{1}{K} \sum_{s=1}^K U_s 
\quad \text{is sub-exponential with parameters } 
\left( \frac{1}{K^2} \sum_{s=1}^K V_s, \, \frac{b}{K} \right)
\]

\[
\Rightarrow \Pr\left( \left| \frac{1}{K} \sum_{s=1}^{K} U_s \right| > t \right) 
\leq 2 \exp\left( -cK \min\left\{ \frac{t^2}{\bar{V}}, \frac{t}{b} \right\} \right)
\]

where 
\[
\bar{V} = \frac{1}{K} \sum_{k=1}^{K} V_k
\]
\end{proof}

\begin{lemma}[Hanson–Wright inequality]\label{lem:hanson-wright}
Let $X = (X_1, \ldots, X_n) \in \mathbb{R}^n$ be a random vector with independent components $X_i$ satisfying:
\[
\mathbb{E}[X_i] = 0, \qquad \|X_i\|_{\psi_2} \leq K
\]

Let $A$ be an $n \times n$ matrix. Then for every $t \geq 0$,
\[
\Pr\left( \left| X^\top A X - \mathbb{E}[X^\top A X] \right| > t \right) 
\leq 2 \exp\left( -c \min\left( \frac{t^2}{K^4 \|A\|_F^2}, \, \frac{t}{K^2 \|A\|} \right) \right)
\]

where
\[
\|X\|_{\psi_2} = \sup_{p \geq 1} p^{-1/2} \left( \mathbb{E}|X|^p \right)^{1/p}
\]
\end{lemma}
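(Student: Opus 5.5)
We would follow the standard Rudelson--Vershynin argument. By homogeneity we first reduce to $K=1$ (replace $X$ by $X/K$ and $t$ by $t/K^2$), using that the moment definition of $\|\cdot\|_{\psi_2}$ in the statement is equivalent up to absolute constants to the sub-Gaussian MGF bound $\mathbb{E}e^{\lambda X_i}\le e^{C\lambda^2}$. We then split
\[
X^\top A X-\mathbb{E}[X^\top AX]=\sum_{i} a_{ii}\bigl(X_i^2-\mathbb{E}X_i^2\bigr)+\sum_{i\neq j}a_{ij}X_iX_j=:S_{\mathrm{diag}}+S_{\mathrm{off}},
\]
and bound $\Pr(|S_{\mathrm{diag}}|>t/2)$ and $\Pr(|S_{\mathrm{off}}|>t/2)$ separately, each by $2\exp(-c\min(t^2/\|A\|_F^2,\,t/\|A\|))$. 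A union bound and an adjustment of $c$ then give the claim.

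\textbf{Diagonal part.} The variables $X_i^2-\mathbb{E}X_i^2$ are independent, centered and sub-exponential with $\psi_1$-norm $O(1)$. They enter $S_{\mathrm{diag}}$ with weights $a_{ii}$. Bernstein's inequality for weighted sums of sub-exponential variables, proved by the same Chernoff optimization as in Lemma~\ref{lem:sub-exp-additive}, gives a tail $2\exp(-c\min(t^2/\sum_i a_{ii}^2,\,t/\max_i|a_{ii}|))$. We conclude using $\sum_i a_{ii}^2\le\|A\|_F^2$ and $\max_i|a_{ii}|\le\|A\|$.

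\textbf{Off-diagonal part.} The goal is the MGF bound $\mathbb{E}\exp(\lambda S_{\mathrm{off}})\le\exp(C\lambda^2\|A\|_F^2)$ for $|\lambda|\le c/\|A\|$. Markov's inequality and optimizing $\lambda$, exactly as in Lemma~\ref{lem:sub-exp-additive}, then give the tail bound. The MGF bound is obtained in four steps.
\begin{enumerate}
\item \emph{Decoupling.} Introduce i.i.d.\ selectors $\delta_i\in\{0,1\}$ and the set $\Lambda=\{i:\delta_i=1\}$. Since $\mathbb{E}[\delta_i(1-\delta_j)]=1/4$ for $i\neq j$, and by Jensen's inequality, we get $\mathbb{E}e^{\lambda S_{\mathrm{off}}}\le\mathbb{E}_{\delta}\mathbb{E}_X\exp\bigl(4\lambda\sum_{i\in\Lambda,j\notin\Lambda}a_{ij}X_iX_j\bigr)$. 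Conditionally on $\delta$, the two groups of coordinates are independent, so the right-hand side is a bilinear form in independent vectors $X_\Lambda$ and $X'_{\Lambda^c}$.
\item \emph{Sub-Gaussianity in one vector.} Conditioning on $X'$, the sub-Gaussian property of $X_\Lambda$ bounds the inner expectation by $\mathbb{E}_{X'}\exp(C\lambda^2\|A_{\Lambda,\Lambda^c}X'\|_2^2)$.
\item \emph{Gaussian replacement.} Write $\exp(C\lambda^2\|v\|^2)=\mathbb{E}_g\exp(\sqrt{2C}\lambda\langle g,v\rangle)$ with $g\sim\mathcal{N}(0,I)$. Using the sub-Gaussianity of $X'$ once more yields the bound $\mathbb{E}_g\exp(C'\lambda^2\|A^\top_{\Lambda,\Lambda^c}g\|_2^2)$.
\item \emph{Chi-square computation.} By rotation invariance, $\|B^\top g\|_2^2\stackrel{d}{=}\sum_i s_i^2g_i^2$, where $s_i$ are the singular values of the submatrix $B=A_{\Lambda,\Lambda^c}$. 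The chi-square MGF then gives $\prod_i(1-2C'\lambda^2s_i^2)^{-1/2}\le\exp(C''\lambda^2\sum_i s_i^2)$, valid whenever $C'\lambda^2\max_i s_i^2\le 1/4$. Since $\sum_i s_i^2\le\|A\|_F^2$ and $\max_i s_i\le\|A\|$, this is precisely the claimed MGF bound and the claimed range of $\lambda$, uniformly in $\delta$.
\end{enumerate}

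The main obstacle is this off-diagonal step: getting the decoupling inequality right, and passing from the non-Gaussian $X$ to a Gaussian vector, so that the bound depends on $\|A\|_F$ and $\|A\|$ rather than on entrywise quantities. The diagonal part and the final Chernoff optimization are routine.
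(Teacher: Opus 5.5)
The paper gives no proof of this lemma: it is quoted as the standard Hanson--Wright inequality of Rudelson and Vershynin. Your proposal is a correct reconstruction of exactly that standard argument: Bernstein for the diagonal part, then decoupling by random selectors, conditioning, Gaussian replacement and the chi-square MGF for the off-diagonal part. It also correctly absorbs the $4e^{-(\cdot)}$ from the union bound into $2e^{-(\cdot)}$ by halving $c$, which is legitimate because the claimed bound is trivial whenever its right-hand side is at least $1$.

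For the paper's actual uses, much less would suffice. With Rademacher $X$ and $A=\theta\theta^\top$, one has $X^\top AX=\langle X,\theta\rangle^2$, and $\langle X,\theta\rangle$ is sub-Gaussian with variance proxy $\|\theta\|_2^2$ by Hoeffding. With Gaussian $X$ and symmetric $A=\Sigma$, one can diagonalize, apply the chi-square MGF as in Lemma~\ref{lem:gaussian-sub-exp}, and finish with Bernstein.
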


\begin{lemma}[Chain Rule]\label{kl-chain-rule}
        Let $f(x_1,x_2,\ldots,x_n)$ and $g(x_1,x_2,\ldots,x_n)$ be two joint PDFs for a tuple of random variables $(X_i)_{i\in[n]}$. Let the sample space be $\Omega= \mathbb{R}^{n}$. Then we have the following:
        \begin{equation*}
            KL(f,g)=\int\limits_{\omega\in \Omega}f(\omega)\left(KL(f(X_1),g(X_1))+\sum_{i=2}^n KL(f(X_i|X_{-i}=\omega_{-i}),g(X_i|X_{-i}=\omega_{-i}))\right) d\omega\;
        \end{equation*}
        where $X_{-i}=(X_1,\ldots,X_{i-1})$, $\omega_{-i}=(\omega_1,\ldots,\omega_{i-1})$.
    \end{lemma}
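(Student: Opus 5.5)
We prove the Chain Rule (Lemma~\ref{kl-chain-rule}) by factorizing both joint densities into products of conditionals and then taking the expectation of the log-likelihood ratio.

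First write
\[
f(\omega)=f_1(\omega_1)\prod_{i=2}^n f_{i}(\omega_i\mid \omega_{-i}),\qquad g(\omega)=g_1(\omega_1)\prod_{i=2}^n g_{i}(\omega_i\mid \omega_{-i}),
\]
where $f_i(\cdot\mid\omega_{-i})$ is the conditional density of $X_i$ given $X_{-i}=\omega_{-i}$. Wherever the relevant marginal is positive, this is the usual definition of conditional densities; null sets do not affect the integrals. Taking logarithms turns the likelihood ratio into a sum:
\[
\log\frac{f(\omega)}{g(\omega)}=\log\frac{f_1(\omega_1)}{g_1(\omega_1)}+\sum_{i=2}^n\log\frac{f_i(\omega_i\mid\omega_{-i})}{g_i(\omega_i\mid\omega_{-i})}.
\]

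Next, by definition $KL(f,g)=\int_\Omega f(\omega)\log\frac{f(\omega)}{g(\omega)}\,d\omega$, and we integrate the sum above term by term against $f$. For the $i$-th term, we integrate out $\omega_{i+1},\dots,\omega_n$, which leaves the marginal density $f_{1:i}$. We then write $f_{1:i}(\omega_{1:i})=f_{1:i-1}(\omega_{-i})\,f_i(\omega_i\mid\omega_{-i})$ and integrate over $\omega_i$. This turns the term into
\[
\int f_{1:i-1}(\omega_{-i})\,KL\bigl(f(X_i\mid X_{-i}=\omega_{-i}),\,g(X_i\mid X_{-i}=\omega_{-i})\bigr)\,d\omega_{-i}.
\]
Because the inner KL depends only on $\omega_{-i}$, this integral equals $\int_\Omega f(\omega)\,KL(\cdots)\,d\omega$, and that is exactly the form in the statement. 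The first term is handled the same way and gives $KL(f(X_1),g(X_1))$.

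The only delicate step is justifying the term-by-term integration (Fubini/Tonelli) when some of the terms might be infinite or fail to be absolutely integrable. To handle it, we note that each conditional KL is nonnegative. We also note that the negative part of each log-ratio term is integrable, since $\int f\,(\log(f/g))_-\le \int g \le 1$ applied conditionally. The expectations of the individual terms are therefore well defined in $[0,\infty]$, and the sum identity holds in the extended sense. In the Gaussian settings where the lemma is used, all quantities are finite, so the identity holds as stated.
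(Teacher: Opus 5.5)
Your proof is correct and follows the same route as the paper's: factorize both joint densities into products of conditionals, split the log-likelihood ratio into a sum, integrate out $\omega_{i+1},\dots,\omega_n$ in each term, and rewrite the resulting integral of the conditional KL against $f(\omega_{-i})$ as an integral against $f$ over all of $\Omega$. Your extra remark, that the negative part of each log-ratio term is integrable so that term-by-term integration is justified in $[0,\infty]$, supplies a measure-theoretic justification that the paper's proof does not give.
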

\begin{proof}
    \begin{align*}
            KL(f,g)&=\int\limits_{\omega\in \Omega}f(\omega)\log\left(\frac{f(\omega)}{g(\omega)}\right)\;d\omega\\
            &=\int\limits_{\omega\in \Omega}f(\omega)\log\left(\frac{f(\omega_1)\prod_{i=2}^nf(\omega_i|\omega_{-i})}{g(\omega_1)\prod_{i=2}^ng(\omega_i|\omega_{-i})}\right)\;d\omega\\
            &=\int\limits_{\omega\in \Omega}f(\omega)\left(\log\left(\frac{f(\omega_1)}{g(\omega_1)}\right)+\sum_{i=2}^n\log\left(\frac{f(\omega_i|\omega_{-i})}{g(\omega_i|\omega_{-i})}\right)\right)\;d\omega\\
            &=\int\limits_{\omega\in \Omega}f(\omega)\log\left(\frac{f(\omega_1)}{g(\omega_1)}\right)\;d\omega+\sum_{i=2}^n\int\limits_{\omega\in \Omega}f(\omega)\log\left(\frac{f(\omega_i|\omega_{-i})}{g(\omega_i|\omega_{-i})}\right)\;d\omega\\
            &=\int\limits_{\omega_1\in \mathbb{R}}f(\omega_1)\log\left(\frac{f(\omega_1)}{g(\omega_1)}\right)\;d\omega_1+\sum_{i=2}^n\int\limits_{\omega\in \mathbb{R}^{i}}f(\omega)\log\left(\frac{f(\omega_i|\omega_{-i})}{g(\omega_i|\omega_{-i})}\right)\;d\omega\\
            &=KL(f(X_1),g(X_1))+\sum_{i=2}^n\int\limits_{\omega_{-i}\in \mathbb{R}^{i-1}}f(\omega_{-i})\int\limits_{\omega_{i}\in \mathbb{R}}f(\omega_i|\omega_{-i})\log\left(\frac{f(\omega_i|\omega_{-i})}{g(\omega_i|\omega_{-i})}\right)\;d\omega_id\omega_{-i}\\
            &=KL(f(X_1),g(X_1))+\sum_{i=2}^n\int\limits_{\omega_{-i}\in \mathbb{R}^{i-1}}f(\omega_{-i})KL(f(X_i|X_{-i}=\omega_{-i}),g(X_i|X_{-i}=\omega_{-i}))\;d\omega_{-i}\\
            &=\int_{\omega\in \Omega}f(\omega)KL(f(X_1),g(X_1))\;d\omega+\sum_{i=2}^n\int\limits_{\omega\in \Omega}f(\omega)KL(f(X_i|X_{-i}=\omega_{-i}),g(X_i|X_{-i}=\omega_{-i}))\;d\omega\\
            &=\int\limits_{\omega\in \Omega}f(\omega)\left(KL(f(X_1),g(X_1))+\sum_{i=2}^n KL(f(X_i|X_{-i}=\omega_{-i}),g(X_i|X_{-i}=\omega_{-i}))\right)\;d\omega
    \end{align*}
\end{proof}

\section{Non-Adaptive Fixed Design Bounds}
\subsection{Technical Lemmas for Non-Adaptive Fixed-Design Algorithm}\label{appendix:technical-lemmas}
In this section, we state various technical lemmas and explain how they are used in Section \ref{sec:upper}.
\begin{lemma}[Carath\'eodory's theorem]\label{thm:caratheodory}
Let $S \subset \mathbb{R}^p$ and let $x \in \mathrm{conv}(S)$, where $\mathrm{conv}(S)$ denotes the convex hull of $S$.
Then there exist points $s_0,\dots,s_p \in S$ and coefficients $\alpha_0,\dots,\alpha_p \ge 0$ with
$\sum_{i=0}^p \alpha_i = 1$ such that
\[
x = \sum_{i=0}^p \alpha_i s_i.
\]
In particular, every point in the convex hull of $S$ can be expressed as a convex combination of at most $p+1$ points of $S$.
\end{lemma}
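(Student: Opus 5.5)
The plan is to view $\mathrm{conv}(S)$ as the set of all finite convex combinations of points of $S$, and then repeatedly shorten a given representation. Since $x\in\mathrm{conv}(S)$, we can write $x=\sum_{i=0}^{n}\alpha_i s_i$ with $s_i\in S$, $\alpha_i>0$ and $\sum_i\alpha_i=1$. Choose such a representation with $n$ as small as possible. It then suffices to show that $n\le p$; if fewer than $p+1$ points are used, we pad with repeated points carrying zero weight.

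Suppose, for contradiction, that $n\ge p+1$. Then the $n\ge p+1$ vectors $s_1-s_0,\dots,s_n-s_0$ in $\mathbb{R}^p$ are linearly dependent. So there are $\beta_1,\dots,\beta_n$, not all zero, with $\sum_{i\ge1}\beta_i(s_i-s_0)=0$. Setting $\beta_0:=-\sum_{i\ge1}\beta_i$ gives
\[
\sum_{i=0}^n\beta_i s_i=0,\qquad \sum_{i=0}^n\beta_i=0,
\]
with some $\beta_i\neq0$. Because the $\beta_i$ sum to zero and are not all zero, at least one of them is strictly positive.

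For $t\ge0$, consider the perturbed weights $\alpha_i-t\beta_i$. They still sum to $1$ and still represent $x$. Now take
\[
t^*:=\min\{\alpha_i/\beta_i:\beta_i>0\}.
\]
At $t=t^*$ every weight is still nonnegative, and at least one of them equals zero. Deleting that point gives a representation of $x$ as a convex combination of $n$ points of $S$, which contradicts the minimality of $n$. Hence $n\le p$.

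The only step that needs care is the affine-dependence construction, specifically checking that some $\beta_i>0$, so that $t^*$ is well defined. This follows from $\sum_i\beta_i=0$ together with $\beta\neq0$. Everything else is routine bookkeeping.
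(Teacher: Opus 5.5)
Your proof is correct. It is the standard affine-dependence argument for Carath\'eodory's theorem: take a minimal representation, produce $\beta\neq 0$ with $\sum_i\beta_i s_i=0$ and $\sum_i\beta_i=0$, then shift the weights to $\alpha_i-t^*\beta_i$ until one of them vanishes.

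The paper gives no proof of this lemma. It quotes it as a classical fact and only uses it, on $S=\{xx^\top:x\in\mathcal{X}\}$, to obtain finitely supported designs. So there is no competing argument to compare against.

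One piece of bookkeeping is worth making explicit. Several weights may vanish simultaneously at $t^*$, and you should discard all of them. The result is still a convex combination of at most $n$ points of $S$, which is all the minimality contradiction needs.
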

Since, $\mathbb{E}_{\eta\sim\mathcal{N}(0,I_d)}[\max_{x\in\mathcal{X}}\langle x, A(\lambda ;\mathcal{X})^{-1/2}\eta\rangle]$ depends on $\lambda$ through $A(\lambda ;\mathcal{X})=\mathbb{E}_{x\sim\lambda}[xx^\top]$, applying Carath\'eodory's theorem on $S=\{xx^\top:x\in\mathcal{X}\}$ yields a finite supported minimizer $\lambda_1$.

\begin{lemma}[G-optimal design]
    Consider a compact set $\mathcal{X}\subset \mathbb{R}^d$ such that it spans $\mathbb{R}^d$. There exists a distribution $\lambda_*$ over $\mathcal{X}$ such that $\max_{x\in\mathcal{X}}\|x\|^2_{A(\lambda_*;\mathcal{X})^{-1}}=d$ and $|\mathrm{supp}(\lambda_*)|\leq d(d+1)/2$.
\end{lemma}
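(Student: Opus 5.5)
The plan is to prove this via the Kiefer--Wolfowitz equivalence. We obtain $\lambda_*$ as a D-optimal design, i.e.\ a maximizer of $\log\det A(\lambda;\mathcal{X})$, and then show it is G-optimal by a first-order optimality argument. The support bound will come from Carath\'eodory's theorem (Lemma~\ref{thm:caratheodory}) applied in the space of symmetric matrices, sharpened by one dimension through a boundary argument.

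\textbf{Step 1 (existence).} Let $\mathcal{S}=\{xx^\top : x\in\mathcal{X}\}\subset \mathbb{S}^d$, where $\mathbb{S}^d$ is the space of symmetric matrices, which has dimension $p=d(d+1)/2$. Since $\mathcal{X}$ is compact, $\mathcal{S}$ is compact, and hence $\mathcal{M}:=\mathrm{conv}(\mathcal{S})$ is compact. By construction, $\mathcal{M}$ is exactly the set $\{A(\lambda;\mathcal{X}):\lambda\in\triangle_{\mathcal{X}}\}$. Because $\mathcal{X}$ spans $\mathbb{R}^d$, we can pick $d$ linearly independent arms; the uniform distribution on them gives an invertible $A$, so $\sup_{\mathcal{M}}\log\det A>-\infty$. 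The function $\log\det$ is upper semicontinuous on the positive semidefinite cone, equal to $-\infty$ on singular matrices, and strictly concave on positive definite matrices. Therefore it attains its maximum on $\mathcal{M}$ at a unique positive definite $A_*$.

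\textbf{Step 2 (equivalence).} For any $x\in\mathcal{X}$ and $t\in[0,1]$, the matrix $A_t=(1-t)A_*+t\,xx^\top$ lies in $\mathcal{M}$. The one-sided derivative at $t=0$ is
\[
\frac{d}{dt}\log\det A_t\Big|_{t=0}=\mathrm{tr}\!\left(A_*^{-1}(xx^\top-A_*)\right)=\|x\|^2_{A_*^{-1}}-d .
\]
Optimality of $A_*$ forces this derivative to be $\le 0$, so $\max_{x}\|x\|^2_{A_*^{-1}}\le d$. Conversely, for any $\lambda$ with $A(\lambda;\mathcal{X})=A_*$,
\[
\mathbb{E}_{x\sim\lambda}\|x\|^2_{A_*^{-1}}=\mathrm{tr}(A_*^{-1}A_*)=d ,
\]
so the maximum is at least $d$. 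Hence every such $\lambda$ attains $\max_{x\in\mathcal{X}}\|x\|^2_{A(\lambda;\mathcal{X})^{-1}}=d$.

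\textbf{Step 3 (support size).} This is the step I expect to need the most care. Carath\'eodory in the $p$-dimensional space $\mathbb{S}^d$ only represents $A_*$ with $p+1$ points, so we need to save one point.
\begin{itemize}
\item First, $A_*$ is not in the relative interior of $\mathcal{M}$. Since $\log\det$ is strictly increasing in the Loewner order, moving from $A_*$ slightly in the direction $A_*$ itself (that is, to $(1+t)A_*$) would increase the objective. I would need to check that this direction stays inside the affine hull of $\mathcal{M}$; if that is awkward, I would instead use the trace functional.
\item Second, I would carry out the boundary argument with the trace functional. Every $xx^\top$ satisfies $\mathrm{tr}(A_*^{-1}xx^\top)\le d$ by Step 2, with equality at $A_*$. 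So $A_*$ lies on the supporting hyperplane $H=\{M:\mathrm{tr}(A_*^{-1}M)=d\}$. Then $A_*\in\mathrm{conv}(\mathcal{S}\cap H)$, because any representing measure must put its mass on the points where equality holds.
\item Since $H$ is an affine space of dimension $p-1$, Carath\'eodory applied within $H$ yields a representation of $A_*$ using at most $p=d(d+1)/2$ points of $\mathcal{S}$.
\end{itemize}
This representation defines a $\lambda_*$ supported on at most $d(d+1)/2$ arms with $A(\lambda_*;\mathcal{X})=A_*$, and by Step 2 it satisfies $\max_{x\in\mathcal{X}}\|x\|^2_{A(\lambda_*;\mathcal{X})^{-1}}=d$.
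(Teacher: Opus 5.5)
Your argument is correct. It is the standard Kiefer--Wolfowitz proof: take a D-optimal design, use first-order optimality to get $\max_x\|x\|^2_{A_*^{-1}}\le d$, use the trace identity to get $\ge d$, and apply Carath\'eodory on the hyperplane $H$ to get the $d(d+1)/2$ support bound. The paper does not prove this lemma at all; it just invokes it as that standard result.

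One correction: delete the first bullet of Step~3, because its claim is false in general. For $\mathcal{X}=\{e_1,\ldots,e_d\}$, or for the unit sphere, we have $A_*=I_d/d$. In both cases $\mathcal{M}\subseteq\{M:\mathrm{tr}(M)=1\}$, so $\mathcal{M}\subseteq H$. Moreover, $A_*$ lies in the relative interior of $\mathcal{M}$. The direction $A_*\mapsto(1+t)A_*$ changes the trace and therefore leaves the affine hull of $\mathcal{M}$, so it gives no contradiction.

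Your second bullet does not need that claim. It only uses two facts:
\begin{itemize}
\item every point in a representation of $A_*$ lies in $H=\{M:\mathrm{tr}(A_*^{-1}M)=d\}$, by your equality argument;
\item $H$ is a genuine affine hyperplane of $\mathbb{S}^d$ of dimension $p-1$, since $A_*^{-1}\neq 0$.
\end{itemize}
Carath\'eodory inside $H$ then gives at most $p=d(d+1)/2$ points, whether or not $\mathcal{M}\subseteq H$.
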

Applying the above lemma yields a finite supported minimizer $\lambda_2$.

\begin{lemma}[Sudakov--Fernique inequality]\label{thm:sudakov-fernique}
Let $(X_t)_{t\in T}$ and $(Y_t)_{t\in T}$ be two mean-zero Gaussian processes. Assume that for all $t,s\in T$, we have
\[
\mathbb{E}\left[\bigl( X_t - X_s \bigr)^2\right] \le \mathbb{E}\left[\bigl( Y_t - Y_s \bigr)^2\right].
\]
Then
\[
\mathbb{E}\left[\sup_{t\in T} X_t \right]\le \mathbb{E}\left[\sup_{t\in T} Y_t\right].
\]
\end{lemma}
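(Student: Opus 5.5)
The statement to prove is the Sudakov--Fernique inequality (Lemma~\ref{thm:sudakov-fernique}). I would first reduce to a finite index set. For a general index set $T$, interpret $\mathbb{E}[\sup_{t\in T}X_t]$ as $\sup_{F\subset T \text{ finite}}\mathbb{E}[\max_{t\in F}X_t]$; this is the standard definition and agrees with the naive one under separability. It therefore suffices to prove $\mathbb{E}[\max_{i\le n}X_i]\le \mathbb{E}[\max_{i\le n}Y_i]$ for finitely many indices. Then take suprema over $F$.

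\textbf{Interpolation and smoothing.} Take $X=(X_1,\dots,X_n)$ and $Y=(Y_1,\dots,Y_n)$ independent, with covariances $\Sigma^X$ and $\Sigma^Y$. Set $Z(u)=\sqrt{u}\,Y+\sqrt{1-u}\,X$ for $u\in[0,1]$. Replace the max by the soft-max $F_\beta(z)=\beta^{-1}\log\sum_i e^{\beta z_i}$, which satisfies
\[
\max_i z_i\le F_\beta(z)\le \max_i z_i+\beta^{-1}\log n .
\]
Gaussian integration by parts gives the interpolation identity
\[
\frac{d}{du}\mathbb{E}F_\beta(Z(u))=\tfrac12\sum_{i,j}(\Sigma^Y_{ij}-\Sigma^X_{ij})\,\mathbb{E}[\partial_{ij}F_\beta(Z(u))].
\]

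\textbf{Sign of the derivative.} Write $p_i=e^{\beta z_i}/\sum_k e^{\beta z_k}$. Then $\partial_{ij}F_\beta=\beta(\delta_{ij}p_i-p_ip_j)$. Because $\sum_j\partial_{ij}F_\beta=0$, the sum $\sum_{ij}M_{ij}\partial_{ij}F_\beta$ is unchanged when $M_{ij}$ is replaced by $M_{ij}-\tfrac12 M_{ii}-\tfrac12 M_{jj}$. With $M=\Sigma^Y-\Sigma^X$, this replacement equals $-\tfrac12(d^Y_{ij}-d^X_{ij})$, where $d_{ij}=\mathbb{E}(\cdot_i-\cdot_j)^2$. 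The derivative then becomes
\[
\frac{\beta}{4}\sum_{i\ne j}p_ip_j\,(d^Y_{ij}-d^X_{ij})\ \ge 0,
\]
using the hypothesis $d^X_{ij}\le d^Y_{ij}$. This symmetrization step is the one I expect to need the most care: it is where the hypothesis on increments, rather than on covariances, enters, and the signs must be tracked correctly.

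\textbf{Conclusion.} Integrating from $u=0$ to $u=1$ gives $\mathbb{E}F_\beta(X)\le\mathbb{E}F_\beta(Y)$. Hence
\[
\mathbb{E}\max_i X_i\le \mathbb{E}\max_i Y_i+\beta^{-1}\log n .
\]
Letting $\beta\to\infty$ finishes the finite case. Differentiating under the expectation is justified because $F_\beta$ has bounded second derivatives and the Gaussian variables are integrable.
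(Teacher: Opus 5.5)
Your proof is correct. The paper never proves this lemma; it lists it among standard technical facts and only checks the increment hypothesis for its particular processes $X_x=\langle x,A(\lambda_0;\mathcal{X})^{-1/2}\eta\rangle$ and $Y_x=\sqrt{2}\,\langle x,A(\lambda_1;\mathcal{X})^{-1/2}\eta\rangle$, which follows from $A(\lambda_0;\mathcal{X})^{-1}\preceq 2A(\lambda_1;\mathcal{X})^{-1}$.

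Your route is the standard Gaussian-interpolation proof with soft-max smoothing. The interpolation identity, the Hessian $\beta(\delta_{ij}p_i-p_ip_j)$ and the symmetrization turning $\Sigma^Y-\Sigma^X$ into $-\tfrac12(d^Y-d^X)$ all check out. They give the nonnegative derivative $\tfrac{\beta}{4}\sum_{i\neq j}\mathbb{E}[p_ip_j](d^Y_{ij}-d^X_{ij})$. The citation buys the paper brevity; your argument buys self-containedness. It also makes plain that only the laws of the two processes matter, so taking $X$ and $Y$ independent, as you do, is harmless even though the paper builds both from the same $\eta$.

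Two details deserve a line each.

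First, the factors $1/(2\sqrt{u})$ and $1/(2\sqrt{1-u})$ in $\tfrac{d}{du}Z(u)$ blow up at the endpoints. So bounded second derivatives alone do not justify differentiating under the expectation on all of $[0,1]$. Differentiate on compact subintervals $[a,b]\subset(0,1)$, where $|\partial_u F_\beta(Z(u))|\le \tfrac{1}{2\sqrt{a}}\max_i|Y_i|+\tfrac{1}{2\sqrt{1-b}}\max_i|X_i|$ is integrable. Then pass to the endpoints using continuity of $u\mapsto\mathbb{E}F_\beta(Z(u))$ on $[0,1]$; this follows by dominated convergence, since $|F_\beta(z)|\le\max_i|z_i|+\beta^{-1}\log n$.

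Second, your finite-subset convention is exactly what the paper needs for infinite $\mathcal{X}$ such as the unit ball. The map $x\mapsto\langle x,A^{-1/2}\eta\rangle$ is continuous on compact $\mathcal{X}$, so the supremum over a countable dense subset is the true supremum. Monotone convergence, applied to $\max_{x\in F_k}X_x-X_{x_0}\ge 0$ along increasing finite sets $F_k$, then identifies your definition with $\mathbb{E}\sup_{x\in\mathcal{X}}X_x$.
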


Define $X_x=\langle x, A(\lambda_0;\mathcal X)^{-1/2}\eta\rangle$ and $Y_x=\sqrt2\,\langle x, A(\lambda_1;\mathcal X)^{-1/2}\eta\rangle$ with $\eta\sim\mathcal N(0,I_d)$.  
Then for any $s,t\in\mathcal X$, $X_s-X_t=\langle s-t, A(\lambda_0;\mathcal X)^{-1/2}\eta\rangle$, so by $\E[\eta\eta^\top]=I_d$ we have $\E[(X_s-X_t)^2]=(s-t)^\top A(\lambda_0;\mathcal X)^{-1}(s-t)$ (and similarly $\E[(Y_s-Y_t)^2]=2(s-t)^\top A(\lambda_1;\mathcal X)^{-1}(s-t)$).  
Thus the increment condition of Sudakov--Fernique holds, yielding $\E\sup_{x\in\mathcal X}X_x\le \E\sup_{x\in\mathcal X}Y_x$.

\begin{lemma}[Linear image of a standard Gaussian]\label{lem:linear-gaussian}
Let $A\in\mathbb{R}^{m\times n}$ be a fixed matrix and let $\eta\sim\mathcal{N}(0,I_n)$. Then $A\eta$ is distributionally equivalent to a mean-zero Gaussian vector in $\mathbb{R}^m$ with covariance matrix $AA^\top$. That is $A\eta\stackrel{d}{=}\xi$ where $\xi\sim \mathcal{N}(0,AA^\top)$.
\end{lemma}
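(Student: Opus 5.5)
The plan is a direct computation in three steps. Since $A\in\mathbb{R}^{m\times n}$ is fixed and $\eta\sim\mathcal{N}(0,I_n)$, the vector $A\eta$ is a linear image of a Gaussian vector, so it is Gaussian. I will then identify its mean and covariance. Finally, I will use the fact that a Gaussian law on $\mathbb{R}^m$ is determined by these two moments.

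To make the argument self-contained and to cover singular $AA^\top$ (e.g.\ $m>n$) without densities, I will work with characteristic functions. For $u\in\mathbb{R}^m$,
\[
\mathbb{E}\bigl[e^{i\langle u, A\eta\rangle}\bigr]=\mathbb{E}\bigl[e^{i\langle A^\top u,\eta\rangle}\bigr]=\exp\!\Bigl(-\tfrac12\|A^\top u\|_2^2\Bigr)=\exp\!\Bigl(-\tfrac12 u^\top AA^\top u\Bigr).
\]
The second equality is the characteristic function of $\mathcal{N}(0,I_n)$ evaluated at $A^\top u$. It factors over coordinates, since the $\eta_j$ are i.i.d.\ standard normal, giving $\prod_j e^{-(A^\top u)_j^2/2}$.

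The right-hand side is exactly the characteristic function of $\xi\sim\mathcal{N}(0,AA^\top)$. This also holds when $AA^\top$ is singular, where the Gaussian is defined through its characteristic function. By the uniqueness theorem for characteristic functions, $A\eta\stackrel{d}{=}\xi$.

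As a sanity check, the moments agree directly: $\mathbb{E}[A\eta]=A\,\mathbb{E}[\eta]=0$, and $\mathbb{E}[(A\eta)(A\eta)^\top]=A\,\mathbb{E}[\eta\eta^\top]A^\top=AA^\top$.

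There is no real obstacle here. The only subtle point is handling a possibly degenerate covariance, which the characteristic-function route avoids.
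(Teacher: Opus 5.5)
Your characteristic-function argument is correct, and it rightly covers the degenerate case where $AA^\top$ is singular (e.g.\ $m>n$). The paper gives no proof of this lemma: it lists it among standard technical facts and only applies it, for instance to obtain $A^{-1}X^\top\eta \stackrel{d}{=} A^{-1/2}\xi$. Your argument is the standard justification one would supply there.
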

We used the above lemma to conclude that $\widehat{\theta}$ is distributionally equivalent to $\theta+(\sum_{t=1}^T x_tx_t^\top)^{-1/2}\eta$ where $\eta\sim\mathcal{N}(0,I_d)$.

\begin{lemma}[Rounding lemma \citep{katz2020empirical,allen2021near}]\label{lem:rounding-single}
Let $\mathcal{Z}\subset \mathbb{R}^d$ be finite with $m:=|\mathcal{Z}|$, and let $N\in\mathbb{N}$. Define
\[
S_N \;:=\;\Bigl\{\kappa\in \mathbb{N}^{m}:\ \sum_{x\in\mathcal{Z}} \kappa_x \le N\Bigr\},
\qquad
C_N \;:=\;\Bigl\{\pi\in [0,N]^{m}:\ \sum_{x\in\mathcal{Z}} \pi_x \le N\Bigr\}.
\]
For any weight vector $v\in \mathbb{R}_+^{m}$, define the (design) matrix
\[
A(v)\;:=\;\sum_{x\in\mathcal{Z}} v_x\, x x^\top \;\in\; \mathbb{S}_d^+.
\]
Let $F:\mathbb{S}_d^+\to \mathbb{R}$ satisfy:
(i) if $A\preceq B$ then $F(A)\ge F(B)$, and
(ii) for all $A\in\mathbb{S}_d^+$ and $t\in(0,1)$, $F(tA)=t^{-1}F(A)$.
Fix $\epsilon\in(0,1/6]$. If $m\ge N\ge 5d/\epsilon^2$, then for every $\pi\in C_N$ there exists an algorithm running in
$\widetilde{O}(m d^2)$ time that outputs $\kappa\in S_N$ such that
\[
F\!\bigl(A(\kappa)\bigr)\;\le\; (1+6\epsilon)\,F\!\bigl(A(\pi)\bigr).
\]
Moreover, for any set $V\subset \mathbb{R}^d$, the functions $F_V,G_V:\mathbb{S}_d^+\to\mathbb{R}$ defined by
\[
F_V(A)\;:=\;\mathbb{E}_{\eta\sim \mathcal{N}(0,I)}\!\left[\max_{v\in V} v^\top A^{-1/2}\eta\right],
\qquad
G_V(A)\;:=\;\max_{v\in V} v^\top A v
\]
satisfy conditions (i) and (ii).
\end{lemma}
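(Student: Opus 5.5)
The plan is to reduce the lemma to a purely spectral statement about the design matrix. Once the spectral statement holds, conditions (i) and (ii) carry it over to $F$ in a single line.

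\textbf{Core rounding claim.} For every $\pi\in C_N$ with $N\ge 5d/\epsilon^2$, I would show there is an integral $\kappa\in S_N$, computable in $\widetilde O(md^2)$ time, such that
\[
A(\kappa)\succeq (1-c\,\epsilon)\,A(\pi)
\]
for an absolute constant $c$ with $(1-c\epsilon)^{-1}\le 1+6\epsilon$ whenever $\epsilon\le 1/6$. This is exactly the regret-minimization rounding of \citep{allen2021near}.

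\textbf{Proof of the spectral claim.}
\begin{enumerate}
\item Whiten the problem. Assume $A(\pi)$ is invertible; otherwise restrict to its range. Replace each $x$ by $\tilde x=A(\pi)^{-1/2}x$, so that $\sum_x \pi_x\tilde x\tilde x^\top=I_d$.
\item Select $N$ atoms one at a time with a follow-the-regularized-leader procedure over density matrices. I would use the $\ell_{1/2}$ regularizer (matrix-Bernstein would also do).
\item Choose each atom to approximately maximize a potential of the form $\langle \tilde x\tilde x^\top, W_t\rangle/(1+2\alpha\langle \tilde x\tilde x^\top, W_t^{1/2}\rangle)$. The averaging argument guarantees such an atom exists because $\pi$ is a fractional solution with total mass at most $N$.
\item Bound the minimum eigenvalue. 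The regret bound yields $\lambda_{\min}\bigl(\sum_x\kappa_x\tilde x\tilde x^\top\bigr)\ge 1-O(\sqrt{d/N})$, and $O(\sqrt{d/N})=O(\epsilon)$ once $N\ge 5d/\epsilon^2$.
\item Handle swaps. If an atom is selected more often than $\pi$ allows, use the swap variant of the procedure. Its analysis uses $m\ge N$ to guarantee enough candidate atoms.
\end{enumerate}

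I expect the main obstacle to be this step: choosing the step size and regularizer so that the constant lands at $5d/\epsilon^2$ with factor $1+6\epsilon$. My plan is to import the constants directly from \citep{allen2021near} rather than re-derive them.

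\textbf{Transfer to $F$.} By (i) and (ii),
\[
F(A(\kappa))\le F\bigl((1-c\epsilon)A(\pi)\bigr)=(1-c\epsilon)^{-1}F(A(\pi))\le(1+6\epsilon)F(A(\pi)).
\]

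\textbf{Verification for $F_V$ and $G_V$.}
\begin{itemize}
\item Monotonicity of $F_V$. If $A\preceq B$, then $A^{-1}\succeq B^{-1}$. The processes $v\mapsto v^\top A^{-1/2}\eta$ and $v\mapsto v^\top B^{-1/2}\eta$ then satisfy the increment condition of Lemma~\ref{thm:sudakov-fernique}, giving $F_V(B)\le F_V(A)$.
\item Homogeneity of $F_V$. A direct computation gives $F_V(tA)=t^{-1/2}F_V(A)$. So as literally stated, condition (ii) holds for $F_V^2$, not for $F_V$ itself. I would therefore apply the transfer to $F_V^2$, which is what enters $\tau(\cdot)$ anyway. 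The same argument works for any degree $-p$ homogeneity with factor $(1-c\epsilon)^{-p}$.
\item $G_V$. I would read $G_V$ as $\max_{v\in V}v^\top A^{-1}v$, which is the quantity used in Section~\ref{sec:upper}. Written with $A$ instead of $A^{-1}$, it would violate (i). With $A^{-1}$, both (i) and (ii) are immediate from $A\preceq B\Rightarrow A^{-1}\succeq B^{-1}$ and $(tA)^{-1}=t^{-1}A^{-1}$.
\end{itemize}
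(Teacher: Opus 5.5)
Your proposal takes essentially the same route as the paper, which gives no argument of its own and imports the rounding guarantee of \citet{allen2021near}, as restated by \citet{katz2020empirical}. That guarantee is obtained as you describe: a spectral bound $A(\kappa)\succeq(1-O(\epsilon))A(\pi)$, pushed through the monotonicity and reciprocal-homogeneity conditions (i)--(ii). Your corrections to the ``moreover'' clause are also right: as literally written, $F_V$ is only $(-1/2)$-homogeneous, and $G_V$ violates both (i) and (ii). Your fixed versions, $F_V^2$ and $\max_{v\in V} v^\top A^{-1}v$, are exactly what the paper needs, since Section~\ref{sec:upper} applies the lemma to $\tau(A)$, a nonnegative combination of these two quantities, with $\epsilon=1/6$ giving $T\ge 180d$ and factor $2$.
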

For any $T\geq 180d$, we used the above lemma to show the existence of a fixed design $x_1,x_2,\ldots,x_T\in \mathcal{X}$ such that $\tau(A_T)\leq 2\tau(A(\lambda_0;\mathcal{X}))$ where $A_T:=\frac{1}{T}\sum_{i=1}^T x_ix_i^\top$ and $\tau(A):=\mathbb{E}_{\eta\sim\mathcal{N}(0,I_d)}\left[\max_{x\in\mathcal{X}}x^\top A^{-1/2}\eta\right]^2+2\max_{x\in\mathcal{X}}\|x\|_{A^{-1}}^2\log(2/\delta).$ In the lemma, we set $\mathcal{Z}$ as the finite support of $\lambda_0$, $N=T$, and $\pi$ as $\pi_x=T\cdot \lambda_0(x)$.

\begin{lemma}[Borell-TIS inequality]
Let $S \subset \mathbb{R}^d$ be bounded. Let $(V_s)_{s\in S}$ be a Gaussian process such that
\[
\mathbb{E}[V_s] = 0 \quad \text{for all } s \in S.
\]
Define
\[
\sigma^2 = \sup_{s\in S} \mathbb{E}[V_s^2].
\]
Then, for all $u > 0$,
\[
\mathbb{P}\!\left( \left| \sup_{s\in S} V_s - \mathbb{E}\!\sup_{s\in S} V_s \right| \ge u \right)
\le
2 \exp\!\left( -\frac{u^2}{2\sigma^2} \right).
\]
\end{lemma}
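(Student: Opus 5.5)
The statement to prove is the Borell--TIS inequality in the two-sided form stated. I would derive it from Gaussian concentration for Lipschitz functions, after reducing to a finite index set.

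\textbf{Step 1: finite-dimensional representation.} Since $S$ is bounded and $(V_s)$ is a centered Gaussian process, I first take $S$ finite (or countable, by separability), say $S=\{s_1,\dots,s_n\}$, and pass to general bounded $S$ at the end by monotone convergence along an increasing sequence of finite subsets. For finite $S$, the vector $(V_{s_1},\dots,V_{s_n})$ is centered Gaussian with covariance $\Sigma$. Writing $\Sigma=LL^\top$, we have $(V_{s_i})_i \stackrel{d}{=} Lg$ with $g\sim\mathcal N(0,I_n)$, in the spirit of Lemma~\ref{lem:linear-gaussian}. Define $f(g):=\max_i (Lg)_i = \max_i \langle \ell_i, g\rangle$, where $\ell_i$ is the $i$-th row of $L$.

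\textbf{Step 2: Lipschitz constant.} For any $g,h$, we have $|f(g)-f(h)|\le \max_i |\langle \ell_i,g-h\rangle|\le \max_i\|\ell_i\|_2\,\|g-h\|_2$. Moreover $\|\ell_i\|_2^2=\Sigma_{ii}=\mathbb E[V_{s_i}^2]\le\sigma^2$, so $f$ is $\sigma$-Lipschitz.

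\textbf{Step 3: Gaussian concentration.} This is the main step. I would invoke the Tsirelson--Ibragimov--Sudakov inequality: for an $L$-Lipschitz $f$ and standard Gaussian $g$, $\mathbb P(f(g)-\mathbb Ef(g)\ge u)\le e^{-u^2/(2L^2)}$. If a self-contained argument is wanted, I would prove it through the Gaussian log-Sobolev inequality and Herbst's argument, which yields $\log\mathbb E e^{\lambda(f-\mathbb Ef)}\le \lambda^2L^2/2$; a Chernoff bound then finishes. Maurey--Pisier interpolation gives only the constant $\pi^2/8$ in place of $1/2$, so to obtain the stated constant $2\sigma^2$ I would rely on log-Sobolev or Gaussian isoperimetry. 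Applying the same bound to $-f$ and taking a union bound gives the factor $2$.

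\textbf{Step 4: passage to general $S$.} Take finite $S_n\uparrow$ a countable dense subset of $S$. The suprema increase to $\sup_S V_s$, and the expectations converge by monotone convergence. Finiteness of $\mathbb E\sup_S V_s$ follows from the finite-case tail bound, which gives uniform integrability. Passing to the limit in the finite bounds (taking limits of the events at $u-\epsilon$) yields the inequality for $S$.
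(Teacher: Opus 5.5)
The paper gives no proof of this lemma; it quotes Borell--TIS as a standard tool, so there is no argument of the paper's to compare yours against. Your Steps 1--3 are the standard textbook proof and are correct. For finite $S$ the maximum is a $\sigma$-Lipschitz function of a standard Gaussian vector. The sharp Tsirelson--Ibragimov--Sudakov bound, obtained via log-Sobolev and Herbst, gives exactly the exponent $u^2/(2\sigma^2)$, and the union bound over $\pm f$ supplies the factor $2$.

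The gap is in Step 4. Boundedness of $S$ as a subset of $\mathbb{R}^d$ says nothing about the process. Your claim that finiteness of $\mathbb{E}\sup_S V_s$ follows from the finite-case tail bound is false in that generality. Take $S=\{1/n: n\ge 1\}$ and $V_{1/n}=g_n$ i.i.d.\ $\mathcal N(0,1)$. Then $\sigma=1$ and the finite-case bound holds uniformly in $N$. Yet $\mathbb{E}\max_{n\le N} g_n\to\infty$ and $\sup_S V_s=+\infty$ a.s., so the inequality is not even well-posed.

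The tail bound controls fluctuations of $\max_{S_n}V_s$ around its mean, not the size of that mean, so by itself it cannot give uniform integrability. You must add the hypotheses the standard statement carries, which the lemma tacitly assumes. First, separability: this is what justifies replacing $S$ by a countable dense subset, and it is not implied by boundedness of $S$ either. Second, a.s.\ boundedness of the process.

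Given a.s.\ boundedness, the fix is a median argument. Set $m_n:=\mathbb{E}\max_{S_n}V_s$ and $u_0:=\sigma\sqrt{2\log 4}$. The finite bound gives $\mathbb{P}(\sup_S V_s\ge m_n-u_0)\ge \mathbb{P}(\max_{S_n}V_s\ge m_n-u_0)\ge 1/2$ for every $n$. So $m_n\uparrow\infty$ would force $\mathbb{P}(\sup_S V_s=\infty)\ge 1/2$. Hence $\sup_n m_n<\infty$. The uniform sub-Gaussian tails of $\max_{S_n}V_s-m_n$ then do give uniform integrability, and your limiting argument at $u-\epsilon$ goes through.

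For the way the paper actually uses the lemma, none of this is needed. There $V_x=\langle x,\widehat\theta-\theta\rangle$ with $\widehat\theta-\theta\stackrel{d}{=}\Sigma^{1/2}g$, where $\Sigma:=(\sum_{t=1}^T x_tx_t^\top)^{-1}$ and $g\sim\mathcal N(0,I_d)$. The map $g\mapsto\sup_{x\in S}\langle \Sigma^{1/2}x,g\rangle$ is already a finite, continuous, $\sigma$-Lipschitz function on $\mathbb{R}^d$ with $\sigma=\sup_{x\in S}\|\Sigma^{1/2}x\|_2$. So Step 3 applies directly, and Steps 1 and 4 can be dropped.
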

We applied Borell-TIS inequality on the gaussian process $V_x=x^\top(\widehat{\theta}-\theta)$. We then used the facts that $V_x$ is distributionally equivalent to $\langle x,(\sum_{t=1}x_tx_t^\top)^{-1/2}\eta\rangle$ where $\eta\sim \mathcal{N}(0,I_d)$ and $ \sup_{x\in \mathcal{X}} \mathbb{E}[V_x^2]=\max_{x\in\mathcal{X}}x^\top(\sum_{t=1}x_tx_t^\top)^{-1}x .$
\subsection{Adaptive Lower Bound}\label{appendix:adaptive-lower-bound}

\begin{theorem}\label{thm:main}
Assume $\mathcal{X}\subset \mathbb{R}^d$ is compact, $\mathrm{span}(\mathcal X)=\mathbb{R}^d$, and $d\geq 2$.
Consider the Gaussian linear observation model
\[
y_t=\langle x_t,\theta\rangle + \eta_t,\qquad \eta_t\sim \mathcal{N}(0,1)\ \text{i.i.d.},
\]
where the (possibly adaptive) actions satisfy $x_t\in\mathcal{X}$.
Any algorithm that outputs $\widehat x\in\mathcal{X}$ satisfying, for all $\theta\in\mathbb{R}^d$,
\[
\langle \widehat x,\theta\rangle \ge \max_{x\in \mathcal X}\langle x,\theta\rangle-\varepsilon
\quad\text{with probability at least } 1-\delta,
\]
must use
\[
n=\Omega\!\left(\frac{d\log(1/\delta)}{\varepsilon^2}\right)
\]
samples for some worst-case $\theta$ and $\delta\in(0,1/16)$.
\end{theorem}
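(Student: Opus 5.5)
We reduce to a multiple-hypothesis test. The instances are built from a well-conditioned basis inside $\mathcal{X}$, and the argument uses the standard change-of-measure (transportation) lemma for adaptive Gaussian bandits. For the change of measure we invoke the chain rule (Lemma~\ref{kl-chain-rule}): for two parameters $\theta,\theta'$ and any adaptive algorithm with stopping time $\tau$,
\[
KL(\P_\theta,\P_{\theta'})=\tfrac12\,\E_\theta\Big[\sum_{t\le\tau}\langle x_t,\theta-\theta'\rangle^2\Big].
\]
Combining this with the Bretagnolle--Huber / data-processing inequality, for any event $E$ we get $\P_\theta(E)+\P_{\theta'}(E^c)\ge \tfrac12\exp(-KL)$.

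\textbf{Step 1 (geometry).} Fix a G-optimal design $\lambda_*$ on $\mathcal{X}$ with $A_*:=A(\lambda_*;\mathcal{X})$, so that $\max_{x}\|x\|^2_{A_*^{-1}}=d$. Work in the coordinates $\phi=A_*^{1/2}\theta$. Then $\langle x,\theta\rangle=\langle A_*^{-1/2}x,\phi\rangle$, the transformed arms satisfy $\|A_*^{-1/2}x\|_2\le\sqrt d$, and $\E_{x\sim\lambda_*}[A_*^{-1/2}xx^\top A_*^{-1/2}]=I_d$. The last identity says the transformed arm set is ``isotropic on average''. We use it to find $d$ directions $u_1,\dots,u_d$ (an orthonormal basis) together with arms $z_1,\dots,z_d\in\mathcal{X}$ such that the $\varepsilon$-best sets for the perturbations $\phi_i=\phi_0+c\varepsilon\,u_i$ are pairwise disjoint. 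The natural choice is to let $\phi_0$ point so that a base arm $z_0$ is optimal by margin $\varepsilon$ (e.g.\ $\phi_0=-$ a large multiple killing others), with each $\phi_i$ raising a distinct arm $z_i$ by $2\varepsilon$. A cleaner alternative, which I would try first, is to take $\theta_0=\varepsilon\,\alpha\,v$, where $v$ is chosen so that a unique extreme arm $z_0$ of $\mathcal{X}$ is $3\varepsilon$-separated from every arm not near it, and $\theta_i=\theta_0+4\varepsilon\,w_i$ with $w_i$ selected so that $\langle x,w_i\rangle$ is large only near an extreme arm $z_i$. Any $\varepsilon$-correct output under $\theta_i$ then lies in a set $E_i$ disjoint from $E_0$.

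\textbf{Step 2 (KL budget).} Under $\P_{\theta_0}$, let $N$ be the total number of samples. We need $\sum_{i=1}^d\langle x,\theta_i-\theta_0\rangle^2\lesssim \varepsilon^2$ uniformly in $x\in\mathcal{X}$. This is where the G-optimal normalisation enters: choosing $\theta_i-\theta_0=c\,\varepsilon\,A_*^{-1/2}u_i/\sqrt d$ \emph{fails} the disjointness requirement. Instead we choose $\theta_i-\theta_0=c\varepsilon\,A_*^{1/2}u_i'$ scaled so that $\sum_i\langle x,\theta_i-\theta_0\rangle^2=c^2\varepsilon^2\|x\|^2_{\cdots}\le C\varepsilon^2$. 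Then
\[
\sum_{i=1}^d KL(\P_{\theta_0},\P_{\theta_i})\le C\varepsilon^2\,\E_{\theta_0}[N],
\]
so some $i$ has $KL\le C\varepsilon^2\E[N]/d$. Since $E_i\subseteq E_0^c$, correctness gives $\P_{\theta_0}(E_0^c)\le\delta$ and $\P_{\theta_i}(E_i^c)\le\delta$. By disjointness, $\P_{\theta_i}(E_0)\le\P_{\theta_i}(E_i^c)\le\delta$, and the high-probability form of the change of measure, $\mathrm{kl}(1-\delta,\delta)\le KL$, yields $\log\frac{1}{4\delta}\le C\varepsilon^2\E_{\theta_0}[N]/d$. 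Hence
\[
\E_{\theta_0}[N]\gtrsim \frac{d\log(1/\delta)}{\varepsilon^2},
\]
using $\delta<1/16$ so that the log is positive.

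\textbf{Main obstacle.} The difficulty is Step 1: building, for a \emph{general} compact spanning $\mathcal{X}$, $d$ alternatives whose correct-answer sets are disjoint from that of $\theta_0$, while their perturbations have summed squared projection $O(\varepsilon^2)$ on every arm. For the canonical basis this is immediate. In general I would pick $d$ linearly independent arms $z_1,\dots,z_d$ forming a maximum-volume (John/Barycentric-spanner) subset, which satisfies $\mathcal{X}\subseteq\{\sum_i a_iz_i:|a_i|\le1\}$. I would then set $\theta_i=\theta_0+2\varepsilon\,b_i$ with $b_i$ the dual basis ($\langle z_j,b_i\rangle=\delta_{ij}$). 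This gives $|\langle x,b_i\rangle|\le1$, but the sum over $i$ is only bounded by $d$, not $1$. To fix this, I would restrict to a subfamily and instead compare pairs $\theta_i$ versus $\theta_j$, differing in two dual directions. This is a ``localized'' two-coordinate argument mirroring the multi-armed bandit proof: each arm $x$ contributes $\sum_i\langle x,b_i\rangle^2$, and averaging over a random sign pattern of size $O(1)$ per arm recovers the $d$ factor. I expect this averaging, including the choice of $\theta_0$ guaranteeing disjointness for arms in the convex hull, to be the delicate part.
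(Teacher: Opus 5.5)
Your Step~2 machinery is fine: the chain rule, Bretagnolle--Huber, and averaging to find one alternative with $KL\le C\varepsilon^2\,\mathbb{E}_{\theta_0}[N]/d$. But it only applies once Step~1 supplies a $\theta_0$ and alternatives $\theta_i$ such that (a) the $\varepsilon$-good set $E_i$ of $\theta_i$ is disjoint from the $\varepsilon$-good set $E_0$ of $\theta_0$, and (b) the perturbations have average squared projection $O(\varepsilon^2/d)$ on every arm. Step~1 is never carried out, so as written this is not a proof. The orthonormal directions $u_i$ with disjoint good sets are asserted rather than constructed, and the replacement choice in Step~2 contains a literal placeholder $\|x\|^2_{\cdots}$. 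The barycentric-spanner fallback genuinely loses the factor $d$. For $\mathcal{X}=\{e_1,\dots,e_d,\mathbf{1}\}$, the basis $\{e_i\}$ is a maximum-volume barycentric spanner whose dual basis is again $\{e_i\}$, and $\sum_i\langle \mathbf{1},b_i\rangle^2=d$. So the uniform bound on $\sum_i\langle x,\theta_i-\theta_0\rangle^2$ is of order $d\varepsilon^2$ rather than $\varepsilon^2$, and your averaging yields only $\Omega(\log(1/\delta)/\varepsilon^2)$. The proposed repair via two-coordinate comparisons and random sign patterns is not specified enough to evaluate.

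The missing idea is to obtain (b) from an isotropic weighted family of unit arms, not from a basis. The paper puts $\mathcal{X}$ in John's position: $\mathcal{X}\subseteq\mathbb{B}_2^d$ with contact points $v_j\in\mathcal{X}\cap\mathbb{S}^{d-1}$ and $\sum_j c_jv_jv_j^\top=I_d$. It then sidesteps (a) altogether by testing $\theta=0$ against the mixture $\theta=3\varepsilon v_J$, $\mathbb{P}(J=j)=c_j/d$. That KL is at most $9N\varepsilon^2/(2d)$ because $\mathbb{E}[v_Jv_J^\top]=I_d/d$ and $\|x_t\|_2\le 1$. Since every arm is $\varepsilon$-good under $\theta=0$, the distinguishing event is manufactured by pulling $\widehat{x}$ another $8\log(2/\delta)/\varepsilon^2$ times and thresholding its empirical mean at $\varepsilon$. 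These extra pulls are absorbed by a separate two-point bound $n=\Omega(\log(1/\delta)/\varepsilon^2)$.

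Your own route can also be closed with the G-optimal design you introduced, provided you average with the weights $\lambda_*$ over its support instead of summing over a basis. Since $\sum_x\lambda_*(x)\|x\|^2_{A_*^{-1}}=d$ equals the maximum, support points have $\|x\|^2_{A_*^{-1}}=d$. Set $\tilde x:=A_*^{-1/2}x/\sqrt{d}$ and $\psi:=\sqrt{d}\,A_*^{1/2}\theta$. Then $\langle x,\theta\rangle=\langle\tilde x,\psi\rangle$, $\|\tilde x\|_2\le 1$, the support points become unit vectors $\tilde x_j$, and $\sum_j\lambda_*(x_j)\tilde x_j\tilde x_j^\top=I_d/d$; this is exactly the paper's John position. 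Take $\psi_0=C_1\varepsilon\tilde x_1$ and $\psi_j=\psi_0+C_2\varepsilon\tilde x_j$. Keep the ``good'' $j$ with $\langle\tilde x_j,\tilde x_1\rangle\le\sqrt{3}/2$; they carry weight at least $1-4/(3d)\ge 1/3$ by Markov, since $\sum_j\lambda_*(x_j)\langle\tilde x_j,\tilde x_1\rangle^2=1/d$. Then $E_0\subseteq\{\|\tilde x-\tilde x_1\|_2^2\le 2/C_1\}$ and $E_j\subseteq\{\|\tilde x-\tilde x_j\|_2^2\le 2(1+2C_1)/C_2\}$. These are disjoint for suitable absolute constants because $\|\tilde x_1-\tilde x_j\|_2^2\ge 2-\sqrt{3}$. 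Meanwhile $\sum_j\lambda_*(x_j)\,KL(\mathbb{P}_{\psi_0},\mathbb{P}_{\psi_j})\le C_2^2\varepsilon^2\,\mathbb{E}_{\psi_0}[N]/(2d)$, so some good $j$ has $KL\le 3C_2^2\varepsilon^2\,\mathbb{E}_{\psi_0}[N]/(2d)$. Your Step~2 then finishes with no verification step, but none of this is in the proposal.
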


\begin{proof}
Let
\[
B := \mathrm{conv}(\mathcal{X}\cup(-\mathcal{X})).
\]
Let $E$ be the L\"owner ellipsoid of $B$, the minimum-volume ellipsoid containing $B$.
Since $B$ is centrally symmetric and full-dimensional, $E$ is an origin-centered ellipsoid and there exists an invertible linear map $A$ such that
\[
A(E)=\mathbb{B}_2^d
\qquad\text{and hence}\qquad
A(B)\subseteq \mathbb{B}_2^d.
\]
Define
\[
K := A(B),\qquad \mathcal{X}' := A(\mathcal{X}),
\]
so $\mathcal{X}'\subseteq K\subseteq \mathbb{B}_2^d$.

We work in the transformed instance $(\mathcal{X}',\theta')$ where $\theta' := (A^{-1})^\top \theta$.
Indeed, $\langle Ax,\theta'\rangle=\langle x,\theta\rangle$, so any lower bound for $\mathcal{X}'$ implies the same $\mathcal{X}$.
Hence it suffices to prove the statement for $\mathcal{X}'$, and we now drop primes and write $\mathcal{X}\subseteq \mathbb{B}_2^d$ with
\[
K = \mathrm{conv}(\mathcal{X}\cup(-\mathcal{X}))\subseteq \mathbb{B}_2^d,
\]
such that $\mathbb{B}_2^d$ is the L\"owner ellipsoid of $K$.

Since $\mathbb{B}_2^d$ is the minimum-volume ellipsoid containing $K$,
By applying John’s theorem on the polar $K^\circ$ (see Theorem 3.1 from \cite{ball1997elementary}), there exist points $u_1,\dots,u_m \in K\cap \mathbb{S}^{d-1}$ and weights $c_1,\dots,c_m>0$ such that
\begin{equation}\label{eq:john}
\sum_{j=1}^m c_j\, u_j u_j^\top = I_d,
\qquad\text{and hence}\qquad
\sum_{j=1}^m c_j = \mathrm{tr}(I_d)=d.
\end{equation}

We now note that these $u_j$ actually belong to $\mathcal{X}\cup(-\mathcal{X})$.
Fix $j$.
Because $u_j\in \mathbb{S}^{d-1}$ and $K\subseteq \mathbb{B}_2^d$, we have
\[
\max_{x\in K}\langle x,u_j\rangle \le \max_{x\in K}\|x\|_2\|u_j\|_2= 1.
\]
On the other hand, $u_j\in K$ implies $\max_{x\in K}\langle x,u_j\rangle \ge \langle u_j,u_j\rangle =1$.
Hence
\[
\max_{x\in K}\langle x,u_j\rangle = 1.
\]
Since $K=\mathrm{conv}(\mathcal{X}\cup(-\mathcal{X}))$ and the objective is linear, the maximum over $K$ equals the maximum over $\mathcal{X}\cup(-\mathcal{X})$.
Thus there exists $v\in \mathcal{X}\cup(-\mathcal{X})$ such that $\langle v,u_j\rangle=1$.
By Cauchy-Schwarz, $\langle v,u_j\rangle \le \|v\|_2\|u_j\|_2 \le 1$, so equality forces $v=u_j$.
Therefore $u_j\in \mathcal{X}\cup(-\mathcal{X})$.

Define
\[
v_j :=
\begin{cases}
u_j, & u_j\in \mathcal{X},\\
-\,u_j, & u_j\in -\mathcal{X},
\end{cases}
\]
so that $v_j\in \mathcal{X}\cap \mathbb{S}^{d-1}$ for every $j$.
Since $v_j v_j^\top = u_j u_j^\top$, the same weights $c_1,\ldots,c_m$ satisfy
\begin{equation}\label{eq:johnX}
\sum_{j=1}^m c_j\, v_j v_j^\top = I_d,
\qquad\text{and}\qquad
\sum_{j=1}^m c_j = d.
\end{equation}

Let $\mathcal{D}$ be the distribution on $[m]$ such that $\mathbb{P}_{J\sim \mathcal{D}}(J=j)=c_j/d$.
Then \eqref{eq:johnX} implies
\begin{equation}\label{eq:cov}
\mathbb{E}[v_J v_J^\top] = \frac{1}{d}I_d.
\end{equation}

We first prove the following lemma.
\begin{lemma}\label{lem:baseline}
Any algorithm that outputs $\widehat x\in\mathcal{X}\subseteq \mathbb{B}_2^d$ satisfying, for all $\theta\in\mathbb{R}^d$,
\[
\langle \widehat x,\theta\rangle \ge \max_{x\in \mathcal X}\langle x,\theta\rangle-\varepsilon
\quad\text{with probability at least } 1-\delta,
\]
must use
\[
n \ \ge\ \frac{2-\sqrt{2}}{32\,\varepsilon^2}\,\log\!\Big(\frac{1}{4\delta}\Big)
\ =\ \Omega\!\left(\frac{\log(1/\delta)}{\varepsilon^2}\right)
\]
samples for some worst-case $\theta$ and $\delta\in(0,1/16)$.
\end{lemma}
\begin{proof}
Recall that $J\sim\mathcal{D}$ is a random index with $\mathbb{P}_{J\sim \mathcal{D}}(J=j)=c_j/d$.
Hence, we have:
\[
\mathbb E[v_J v_J^\top] = \frac{1}{d}I_d.
\]
Fix $v_1$. Then, we have:
\[
\mathbb E\big[(v_1^\top v_J)^2\big]
= v_1^\top \mathbb E[v_J v_J^\top] v_1
= \frac{1}{d}\|v_1\|_2^2
= \frac{1}{d}.
\]
Hence there exists $k\in[m]$ such that $(v_1^\top v_k)^2\le 1/d$, and thus
$v_1^\top v_k \le 1/\sqrt{d}\le 1/\sqrt{2}$ (since $d\ge 2$).
Therefore
\begin{equation}\label{eq:sep_const}
\|v_1-v_k\|_2^2
=2-2v_1^\top v_k
\ge 2-\sqrt{2}.
\end{equation}
Set $a:=v_1$, $b:=v_k$, and denote $\rho := \|a-b\|_2$. Then \eqref{eq:sep_const} gives $\rho^2\ge 2-\sqrt{2}$.

\noindent
Let $u := (a-b)/\|a-b\|_2 \in \mathbb S^{d-1}$ so that $\langle a,u\rangle-\langle b,u\rangle=\rho$.
Define
\[
\theta_+ := \frac{4\varepsilon}{\rho}\,u,
\qquad
\theta_- := -\frac{4\varepsilon}{\rho}\,u.
\]

Consider an algorithm that uses $n$ samples and outputs $\widehat x$ under each of the hypothesis above while satisfying the conditions in the theorem statement.
Let $p_{\max}:=\max_{x\in\mathcal X}\langle x,u\rangle$ and $p_{\min}:=\min_{x\in\mathcal X}\langle x,u\rangle$.
Note that $p_{\max}\ge \langle a,u\rangle$ and $p_{\min}\le \langle b,u\rangle$.

\noindent
Under $\theta_+$, the optimal value is $\max_{x\in\mathcal X}\langle x,\theta_+\rangle=(4\varepsilon/\rho)\,p_{\max}$.
If $\widehat x$ is $\varepsilon$-optimal for $\theta_+$, then
\[
\frac{4\varepsilon}{\rho}\langle \widehat x,u\rangle
=\langle \widehat x,\theta_+\rangle
\ge \frac{4\varepsilon}{\rho}p_{\max}-\varepsilon
\quad\Rightarrow\quad
\langle \widehat x,u\rangle \ge p_{\max}-\frac{\rho}{4}
\ge \langle a,u\rangle-\frac{\rho}{4}.
\]
Similarly under $\theta_-$, the optimal value is $(4\varepsilon/\rho)(-p_{\min})$ and $\varepsilon$-optimality of $\widehat x$ implies
\[
-\frac{4\varepsilon}{\rho}\langle \widehat x,u\rangle
=\langle \widehat x,\theta_-\rangle
\ge \frac{4\varepsilon}{\rho}(-p_{\min})-\varepsilon
\quad\Rightarrow\quad
\langle \widehat x,u\rangle \le p_{\min}+\frac{\rho}{4}
\le \langle b,u\rangle+\frac{\rho}{4}.
\]
Let
\[
s := \frac{\langle a,u\rangle+\langle b,u\rangle}{2}.
\]
Since $\langle a,u\rangle-\langle b,u\rangle=\rho$, the two implications based on $\varepsilon$-optimality of $\widehat x$ become:
\[
\text{Under }\theta_+:\ \langle \widehat x,u\rangle \ge s+\frac{\rho}{4},
\qquad
\text{Under }\theta_-:\ \langle \widehat x,u\rangle \le s-\frac{\rho}{4},
\]
each with probability at least $1-\delta$.
Therefore the decision rule
\[
\widehat H=
\begin{cases}
\theta_+, & \text{if }\langle \widehat x,u\rangle \ge s,\\
\theta_-, & \text{otherwise},
\end{cases}
\]
distinguishes $\theta_+$ from $\theta_-$ with error at most $\delta$ under each hypothesis.

\noindent
Let $\mathbb P_+$ and $\mathbb P_-$ be the probability laws under $\theta_+$ and $\theta_-$.
By Bretagnolle--Huber inequality, we get
\[
\mathbb P_+(\widehat H=\theta_-)+\mathbb P_-(\widehat H=\theta_+)
\ge \tfrac12 \exp\!\big(-D_{\mathrm{KL}}(\mathbb P_+\|\mathbb P_-)\big).
\]
Since each error is at most $\delta$, the left-hand side is at most $2\delta$, hence
\[
D_{\mathrm{KL}}(\mathbb P_+\|\mathbb P_-)\ge \log\!\Big(\frac{1}{4\delta}\Big).
\]
For Gaussian noise and adaptive actions $x_t\in\mathcal X\subseteq \mathbb B_2^d$,
\begin{align*}
D_{\mathrm{KL}}(\mathbb P_+\|\mathbb P_-)
&= \frac12\sum_{t=1}^n \mathbb E_+\!\big[\langle x_t,\theta_+-\theta_-\rangle^2\big]
= \frac12\sum_{t=1}^n \mathbb E_+\!\Big[\Big\langle x_t,\frac{8\varepsilon}{\rho}u\Big\rangle^2\Big] \\
&\le \frac12\sum_{t=1}^n \Big(\frac{8\varepsilon}{\rho}\Big)^2
= \frac{32n\varepsilon^2}{\rho^2},
\end{align*}
using $|\langle x_t,u\rangle|\le \|x_t\|_2\|u\|_2\le 1$.
Combining the last two inequalities, we have
\[
n \ge \frac{\rho^2}{32\varepsilon^2}\log\!\Big(\frac{1}{4\delta}\Big).
\]
Finally, $\rho^2\ge 2-\sqrt2$ by \eqref{eq:sep_const} yields
\[
n \ge \frac{2-\sqrt2}{32\varepsilon^2}\log\!\Big(\frac{1}{4\delta}\Big).
\]
\end{proof}

Define the alternatives
\[
\theta^{(j)} := 3\varepsilon\, v_j,\qquad j=1,\dots,m,
\]
and consider the hypothesis test
\[
H_0:\ \theta = 0,
\qquad
H_1:\ \theta=\theta^{(J)} \text{ where } J \sim \mathcal{D}.
\]
Let $\mathbb{P}_0$ be the law under $H_0$, $\mathbb{P}_1$ the law under $H_1$, and $\mathbb{P}_{\theta^{(j)}}$ the law under $\theta=\theta^{(j)}$.
For Gaussian noise, convexity of KL and the chain rule yield, for any (possibly adaptive) choice of actions $x_t\in\mathcal{X}$,
\[
D_{\mathrm{KL}}(\mathbb{P}_0\|\mathbb{P}_1)
\le \sum_{j=1}^m \frac{c_j}{d}\,D_{\mathrm{KL}}(\mathbb{P}_0\|\mathbb{P}_{\theta^{(j)}})
= \frac12\sum_{t=1}^N \mathbb{E}\!\Big[\big\langle x_t,\theta^{(J)}\big\rangle^2\Big],
\]
where $N$ is the total number of samples used by the algorithm solving the hypothesis testing problem, and the expectation is under $\mathbb{P}_0$ and the distribution $\mathcal{D}$.
Using \eqref{eq:cov} and $\theta^{(J)}=3\varepsilon v_J$,
\begin{align*}
\mathbb{E}\!\Big[\big\langle x_t,\theta^{(J)}\big\rangle^2\Big]
&= 9\varepsilon^2\, \mathbb{E}\big[\langle x_t,v_J\rangle^2\big]
= 9\varepsilon^2\, x_t^\top \mathbb{E}[v_J v_J^\top] x_t \\
&= 9\varepsilon^2\, x_t^\top\Big(\frac{1}{d}I_d\Big)x_t
= \frac{9\varepsilon^2}{d}\|x_t\|_2^2
\le \frac{9\varepsilon^2}{d}.
\end{align*}
Therefore, we have
\begin{equation}\label{eq:KLmix}
D_{\mathrm{KL}}(\mathbb{P}_0\|\mathbb{P}_1)\le \frac{9N\varepsilon^2}{2d}.
\end{equation}
By Bretagnolle--Huber inequality, we have
\[
\mathbb{P}_0(\widehat{H}=H_1)+\mathbb{P}_1(\widehat{H}=H_0)
\ge \tfrac12\exp\!\big(-D_{\mathrm{KL}}(\mathbb{P}_0\|\mathbb{P}_1)\big)
\ge \tfrac12 \exp\!\left(-\frac{9N\varepsilon^2}{2d}\right).
\]
In particular, if $N \le \frac{2d}{9\varepsilon^2}\log\!\big(\frac{1}{10\delta}\big)$,
then the sum of the two errors exceeds $4\delta$, so at least one error exceeds $2\delta$.
Hence any algorithm with error at most $2\delta$ under both $H_0$ and $H_1$ requires
\begin{equation}\label{eq:testlb}
N=\Omega\!\left(\frac{d\log(1/\delta)}{\varepsilon^2}\right).
\end{equation}

Consider an algorithm $\mathsf{Alg}$ that satisfies the conditions of our theorem statement. We now solve the above hypothesis test using $\mathsf{Alg}$.

Run $\mathsf{Alg}$ for $n$ rounds to obtain $\widehat{x}\in\mathcal{X}$.
Then take
\[
n_{\mathrm{est}} :=  \frac{8\log(2/\delta)}{\varepsilon^2}
\]
additional samples using the constant action $x_t=\widehat{x}$ and let $\widehat{v}$ be the empirical mean of these additional observations. Output $\widehat{H}=H_1$ if $\widehat{v}>\varepsilon$ and $\widehat{H}=H_0$ otherwise.

Gaussian concentration implies
\[
|\widehat{v}-\langle \widehat{x},\theta\rangle|\le \varepsilon/2
\quad\text{with probability at least } 1-\delta.
\]

Under $H_0$, $\langle \widehat{x},\theta\rangle=0$, so $\widehat{v}\le \varepsilon/2$ with probability at least $1-\delta$, hence $\widehat{H}=H_0$ with probability at least $1-\delta$.

Under $H_1$, let us condition on $J=j$.
Since $v_j\in\mathcal{X}\cap\mathbb{S}^{d-1}$,
\[
\max_{x\in\mathcal{X}}\langle x,\theta^{(j)}\rangle \ge \langle v_j,3\varepsilon v_j\rangle = 3\varepsilon.
\]
Therefore $\mathsf{Alg}$ outputs $ \widehat{x}$ such that with probability at least $1-\delta$,
\[
\langle \widehat{x},\theta^{(j)}\rangle \ge 3\varepsilon-\varepsilon = 2\varepsilon.
\]
Therefore, with probability at least $1-2\delta$, we have $\widehat{v}\ge 3\varepsilon/2>\varepsilon$ and therefore $\widehat{H}=H_1$.

Thus we solve the hypothesis testing problem with probability at least $1-2\delta$ under both $H_0$ and $H_1$ using
\[
N = n+n_{\mathrm{est}}
\]
samples.

Finally, Lemma \ref{lem:baseline} implies $n\ge c\,\log(2/\delta)/\varepsilon^2$ for an absolute constant $c>0$.
Hence $n_{\mathrm{est}}\le (8/c)\,n$ and so $N\le (1+8/c)\,n $.
Combining with \eqref{eq:testlb} yields
\[
n=\Omega\!\left(\frac{d\log(1/\delta)}{\varepsilon^2}\right).
\]
\end{proof}

\subsection{Gaussian Width Lower Bound}\label{appendix:gaussian-lower-bound}
Fix $\varepsilon\in (0,1]$, $\delta\in(0,1)$ and an $(\varepsilon,\delta)$-PAC algorithm. Let the algorithm run for $T:=\frac{H_1\log(1/\delta)+H_2}{\varepsilon^2}$ rounds and output $\hat x\in \mathcal{X}$.
Define the simple regret for any $\theta$ as
\[
R_T(\theta)\;:=\;\max_{x\in \mathcal{X}}\langle x,\theta\rangle-\langle \hat x,\theta\rangle \;\ge 0.
\]
Also define the quantity
\[
Z(\theta)\;:=\;\max_{x,y\in \mathcal{X}}\langle x-y,\theta\rangle,
\]
Since $\hat x\in \mathcal{X}$, we have $R_T(\theta)\le Z(\theta)$ for every $\theta$.

As the algorithm is $(\varepsilon,\delta)$-PAC, we have $\mathbb P(R_T(\theta)>\varepsilon)\le \delta$. Hence, taking expectation over $\theta\sim\mathcal N(0,\Sigma)$ where $\Sigma=\tau^2 A^{-1}$, we have
\[
\mathbb{E}_{\theta\sim\mathcal{N}(0,\Sigma)}[R_T(\theta)]\leq \varepsilon+\delta\cdot\mathbb{E}_{\theta\sim\mathcal{N}(0,\Sigma)}[Z(\theta)]=\varepsilon+2\tau\cdot \delta\cdot w(\mathcal{X};A)
\]

Recall that for any $T$ and any algorithm, we have:
\[
\mathbb E_{\theta}[R_T(\theta)]\geq \frac{\tau(1-\tau)}{1+\tau^2} w(\mathcal{X};\mathcal{A})
\]
Hence, using the above two inequalities and the facts that $ w(\mathcal{X};\mathcal{A})\geq w(\mathcal{X})/\sqrt{T}$ and $H_1\leq H_2$, we have the following by setting $\delta=0.1$
\[
\frac{0.12 \cdot w(\mathcal{X})}{\sqrt{T}}\leq  \varepsilon \quad \Rightarrow \quad H_2\geq \frac{0.0144}{\log(10)+1} \cdot w(\mathcal{X})^2
\]
\subsection{Singular Case of the Gaussian Width Lower Bound}\label{appendix:singular-case-lower-bound}
In this section, we prove the following result.
\begin{theorem}
    Assume $\mathcal X\subset\mathbb R^d$ is finite with $\mathrm{span}(\mathcal X)=\mathbb R^d$.
Consider $\delta\in(0,1/2)$ and a fixed design $x_1,\dots,x_T $ such that $\sum_{t=1}^T x_tx_t^\top$ is singular. Then for every (possibly randomized) non-adaptive procedure $\mathcal A$ that outputs $\hat x\in\mathcal{X}$ based on this fixed design, there exists a $\theta\in \mathbb{R}^d$ such that
\[
\mathbb{P}(\max_{x\in\mathcal{X}}\langle x-\hat x,\theta\rangle> \varepsilon)>\delta.
\]
\end{theorem}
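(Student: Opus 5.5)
Since $A:=\sum_{t=1}^T x_tx_t^\top$ is singular, I will pick a unit vector $u\neq 0$ with $Au=0$, i.e.\ $\langle x_t,u\rangle=0$ for every $t\in[T]$. For any $\theta\in\mathbb{R}^d$ and any $c\in\mathbb{R}$, the observations $y_t=\langle x_t,\theta+cu\rangle+\eta_t=\langle x_t,\theta\rangle+\eta_t$ have the same law under $\theta$ and under $\theta+cu$. The design is fixed, so the distribution of $(\mathcal D,U)$, and hence of $\hat x$, is the same under both parameters. The plan is to find two such parameters, differing only along $u$, whose sets of $\varepsilon$-best arms are disjoint. Then $\hat x$ cannot be correct under both with probability greater than $1-\delta$ each, since $\delta<1/2$.

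\textbf{Construction.} Because $\mathrm{span}(\mathcal X)=\mathbb{R}^d$ and $u\neq 0$, some $x\in\mathcal X$ has $\langle x,u\rangle\neq 0$. So the values $\{\langle x,u\rangle:x\in\mathcal X\}$ are not all equal (if they all equalled a common nonzero value $a$, that alone gives no contradiction, so I handle it below). Let $p_{\max}=\max_{x}\langle x,u\rangle$ and $p_{\min}=\min_x\langle x,u\rangle$.

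\emph{Case $p_{\max}>p_{\min}$.} Set $\rho=p_{\max}-p_{\min}>0$ and take $\theta_\pm=\pm\frac{4\varepsilon}{\rho}u$, exactly as in Lemma~\ref{lem:baseline}. Under $\theta_+$, any $\varepsilon$-best arm satisfies $\langle x,u\rangle\ge p_{\max}-\rho/4$. Under $\theta_-$, any $\varepsilon$-best arm satisfies $\langle x,u\rangle\le p_{\min}+\rho/4$. These two sets are disjoint. Let $S_+$ denote the set of $\varepsilon$-best arms for $\theta_+$. Then
\[
\mathbb P_{\theta_+}(\hat x\in S_+)+\mathbb P_{\theta_-}(\hat x\notin S_+)\le 1 ,
\]
because the two probabilities are taken under the same law of $\hat x$. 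Hence one of the two failure probabilities is at least $1/2>\delta$.

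\emph{Case $p_{\max}=p_{\min}=a\neq 0$.} Here every arm lies in the affine hyperplane $\{\langle x,u\rangle=a\}$, and a shift along $u$ changes all arm values by the same amount, so it cannot be used directly. Instead I will choose a different null direction. Pick two arms $x^1,x^2$ with $x^1\neq x^2$ (possible since $d\ge 2$ and $\mathcal X$ spans). Then look for $v\in\ker A$ with $\langle x^1-x^2,v\rangle\neq 0$. If every $v\in\ker A$ were orthogonal to all differences $x-x'$, then $\ker A\subseteq(\mathrm{span}\{x-x'\})^\perp$. Since $\mathcal X$ lies in the hyperplane above, $\mathrm{span}\{x-x'\}$ has dimension $d-1$, so this would force $\ker A=\mathrm{span}(u)$ and every $x_t\perp u$. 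But every $x_t\in\mathcal X$ has $\langle x_t,u\rangle=a\neq 0$, a contradiction (when $T\ge1$; if $T=0$ the kernel is all of $\mathbb{R}^d$ and any $v$ works). So such a $v$ exists, and $\langle x,v\rangle$ is non-constant on $\mathcal X$. Now apply the first case with $v$ in place of $u$.

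\textbf{Main obstacle.} The only delicate step is this degenerate second case: guaranteeing a null direction of the design matrix along which the arm values actually vary. The indistinguishability argument and the disjoint-optimal-sets argument are routine. The conclusion is strict, $>\delta$, because $1/2>\delta$. Finiteness of $\mathcal X$ is used only to ensure the maxima $p_{\max},p_{\min}$ are attained.
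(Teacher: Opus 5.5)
Your proof is correct and follows the paper's argument. A null vector $u$ of $\sum_t x_tx_t^\top$ makes $\theta_\pm=\pm\frac{4\varepsilon}{\rho}u$ indistinguishable, and their $\varepsilon$-optimal sets $S_\pm$ are disjoint (equivalently, the two simple regrets always sum to $4\varepsilon$). So $\mathbb P(\hat x\in S_+)+\mathbb P(\hat x\in S_-)\le 1$, which is the inequality you actually need; your displayed one is a trivial identity. Hence some failure probability is at least $1/2>\delta$.

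Your ``main obstacle'' does not actually arise for $T\ge 1$. Every design point $x_t$ lies in $\mathcal X$ and satisfies $\langle x_t,u\rangle=0$, so $\langle x,u\rangle$ can be constant on $\mathcal X$ only if it is identically $0$, contradicting $\mathrm{span}(\mathcal X)=\mathbb R^d$. This one line is how the paper excludes your Case 2, and it is also the true source of the contradiction you derive there. Your detour does, however, additionally handle an empty design ($T=0$), which the paper's one-liner passes over.
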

\begin{proof}
    Since $A:=\sum_{t=1}^T x_tx_t^\top$ is singular, pick $v\neq 0$ such that $Av=0$. Then we have:
\[
0 \;=\; v^\top Av \;=\; \mathbb \sum_{t=1}^T (v^\top x_t)^2.
\]

Therefore, for all $t\in[T]$, we have:
\[
\langle x_t,v\rangle=0
\]

\medskip
Consider two reward vectors
\[
\theta_{+1}=\alpha v,\qquad \theta_{-1}=-\alpha v,
\]
with $\alpha>0$. Then for every $t$ and $s\in \{-1,+1\}$, we have
\[
y_t=\langle x_t,\theta_s\rangle+\eta_t
=s\cdot\alpha\langle x_t,v\rangle+\eta_t
=\eta_t
\]
Hence the distribution of the algorithm's output $\hat x$ is identical under
$\theta_{+1}$ and $\theta_{-1}$.

Define
\[
a:=\max_{x\in \mathcal X}\langle x,v\rangle,\qquad
b:=\min_{x\in \mathcal X}\langle x,v\rangle.
\]
We claim $a>b$. For the sake of contradiction, assume $a=b$. In this case $\langle x,v\rangle$ is constant over $\mathcal X$. However, $\langle x_t,v\rangle$ and $x_t\in \mathcal{X}$ which implies that $\langle x,v\rangle=0$ for all $x\in\mathcal X$. This implies
$v\perp \mathrm{span}(\mathcal X)$, contradicting $\mathrm{span}(\mathcal X)=\mathbb R^d$.
Thus
\[
\Delta(v)\;:=\;a-b\;>\;0.
\]

Let us define regret as $r(\hat x,\theta)=\max_{x\in\mathcal X}\langle x-\hat x,\theta\rangle$. We now compute regret under each $\theta_s$ for $s\in\{-1,+1\}$. 

\begin{itemize}
\item For $\theta_{+1}=\alpha v$,
\[
r(\hat x,\theta_{+1})
=\max_{x\in\mathcal X}\langle x-\hat x,\alpha v\rangle
=\alpha\Big(a-\langle \hat x,v\rangle\Big),
\]

\item For $\theta_{-1}=-\alpha v$,
\[
r(\hat x,\theta_{-1})
=\max_{x\in\mathcal X}\langle x-\hat x,-\alpha v\rangle
=\alpha\Big(\langle \hat x,v\rangle-b\Big),
\]
\end{itemize}

Adding the two bounds gives
\[
r(\hat x,\theta_{+1})+r(\hat x,\theta_{-1})
\;=\;\alpha(a-b)=\alpha\,\Delta(v).
\]
Therefore, if $\alpha=\frac{4\varepsilon}{\Delta(v)}$, we have
\[
\max\Big\{r(\hat x,\theta_{+1}),r(\hat x,\theta_{-1})\Big\}
\;\ge\;\frac{\alpha\,\Delta(v)}{2}=2\varepsilon.
\]
This implies that $\max_{s\in\{-1,+1\}}\mathbb{P}(\max_{x\in\mathcal{X}}\langle x-\hat x,\theta_s\rangle> \varepsilon)\geq \frac{1}{2}>\delta.$
\end{proof}

\section{Adaptive Version of Our Non-Adaptive Algorithm.}\label{appendix:adaptive-version}
In this section, we make our non-adaptive algorithm from Section \ref{sec:upper} adaptive.
Let $\mathcal{R}=(R_1,R_2,\ldots,R_d)$ be a partition of $\mathcal{X}$ into $d$ regions. Let us define 
\[
w(\lambda,\mathcal{X},\mathcal{R})=\max_{i\in[d]}\mathbb{E}_{\eta\sim\mathcal{N}(0,I_d)}\left[\max_{x\in R_i}\langle x, A(\lambda ;\mathcal{X})^{-1/2}\eta\rangle\right]
\]
Let $i_*\in[d]$ be the index such that $x_*\in R_{i_*}$.

Let $\lambda_1$ be a distribution over $\mathcal{X}$ that minimizes the expression $\mathbb{E}_{\eta\sim\mathcal{N}(0,I_d)}[\max_{x\in\mathcal{X}}\langle x, A(\lambda ;\mathcal{X})^{-1/2}\eta\rangle]$. Similarly let $\lambda_2$ be a distribution over $\mathcal{X}$ that minimizes the expression $\max_{x\in\mathcal{X}}\|x\|^2_{A(\lambda;\mathcal{X})^{-1}}$. Assume that $\lambda_1$ and $\lambda_2$ are finite supported (such minimizers always exist).  Let $\lambda_0$ be a distribution over $\mathcal{X}$ such that we sample from $\lambda_1$ with probability $1/2$ and we sample from $\lambda_2$ with probability $1/2$.

As $\frac{1}{2}A(\lambda_2;\mathcal{X})\prec A(\lambda_0;\mathcal{X})$, for all $x\in \mathcal{X}$, we have $x^\top A(\lambda_0;\mathcal{X})^{-1} x\leq 2x^\top A(\lambda_2;\mathcal{X})^{-1} x $. As $\lambda_2$ is a G-optimal design, we have $\|x\|^2_{A(\lambda_0;\mathcal{X})^{-1}}\leq 2d$. 

As $\frac{1}{2}A(\lambda_1;\mathcal{X})\prec A(\lambda_0;\mathcal{X})$, due to Sudakov-Fernique inequality, we have
\[
w(\lambda_0,\mathcal{X},\mathcal{R})\leq \sqrt{2}\cdot w(\lambda_1,\mathcal{X},\mathcal{R})\leq \sqrt{2}\cdot w(\mathcal{X}).
\]

Now consider a fixed design $x_1,x_2,\ldots,x_T\in \mathcal{X}$ such that $\tau(A_T,\mathcal{R})\leq 2\tau(A(\lambda_0;\mathcal{X}),\mathcal{R})$ where $A_T:=\frac{1}{T}\sum_{i=1}^T x_ix_i^\top$ and $\tau(A,\mathcal{R}):=\max_{i\in[d]}\mathbb{E}_{\eta\sim\mathcal{N}(0,I_d)}\left[\max_{x\in R_i}x^\top A^{-1/2}\eta\right]^2+2\max_{x\in\mathcal{X}}\|x\|_{A^{-1}}^2\log(4/\delta).$ Such a fixed design exists for any $T\geq 180d$ due to Lemma \ref{lem:rounding-single}. Due to the calculations above, we have $\tau(A_T)\leq 4 w(\lambda_1,\mathcal{X},\mathcal{R})^2+8d\log(4/\delta)$.

Let $y_t=\langle x_t,\theta\rangle+\eta_t$ denote the noisy rewards for our fixed design where $\eta_t\sim\mathcal{N}(0,1)$.
Let $\widehat{\theta}=(\sum_{t=1}^T x_tx_t^\top)^{-1}\sum_{t=1}^T x_ty_t$. Now we observe that $\widehat{\theta}$ is distributionally equivalent to $\theta+(\sum_{t=1}^T x_tx_t^\top)^{-1/2}\eta$ where $\eta\sim\mathcal{N}(0,I_d)$. This enables us to apply Borell-TIS inequality on the gaussian process $V_x=x^\top(\widehat{\theta}-\theta)$ over $R_{i_*}$ and obtain the following with probability $1-\delta/2$:
\begin{equation*}
    \Big|\max_{x\in R_{i_*}}\langle x,\widehat{\theta}-\theta\rangle\Big|\leq \frac{1}{\sqrt{T}}\cdot\mathbb{E}_{\eta\sim\mathcal{N}(0,I_d)}\left[\max_{x\in R_{i_*}}x^\top A_T^{-1/2}\eta\right]+\frac{1}{\sqrt{T}}\cdot\sqrt{2\max_{x\in\mathcal{X}}\|x\|_{A_T^{-1}}^2\log(4/\delta)}\leq \sqrt{\frac{2\tau(A_T)}{T}}
\end{equation*}

Let us compute $x^{(1)},x^{(2)},\ldots,x^{(d)}$ such that $x^{(i)}\in\arg\max_{x\in R_i}\langle x,\widehat{\theta}\rangle$. If we choose $T=1440(\frac{d}{\varepsilon^2}\log(4/\delta)+w(\lambda_1,\mathcal{X},\mathcal{R})^2/\varepsilon^2)$, then $|\max_{x\in R_{i_*}}\langle x,\widehat{\theta}-\theta\rangle|\leq \varepsilon/4$ with probability $1-\delta/2$. This implies that $\langle x^{(i_*)},\theta\rangle\geq \langle x_*,\theta\rangle-\varepsilon/2$ with probability $1-\delta/2$. 

If we run the Median Elimination algorithm from \cite{even2002pac} by treating each $x^{(i)}$ as an arm of a stochastic MAB instance with mean $\langle x^{(i)},\theta\rangle$, we get an index $\hat i$ after additional $\left(\frac{d\log(1/\delta)}{\varepsilon^2}\right)$ samples such that with probability $1-\delta/2$, we have $\max_{i\in[d]}\langle x^{(i)}-x^{(\hat i)},\theta\rangle\leq \varepsilon/2$. We output $x^{(\hat i)}$ as the candidate best arm for $\mathcal{X}$ and due to union bound, we have $\langle x_*-x^{(\hat i)},\theta\rangle\leq \varepsilon$.

In certain cases, if the partition is chosen appropriately, our adaptive algorithm can improve the sample complexity by logarithmic factors. For instance, consider a finite set $\mathcal{X}\subset R^d$. For simplicity of presentation, let us assume that $|\mathcal{X}|$ is multiple of $d$. Let $\mathcal{R}=(R_1,\ldots,R_d)$ be a partition of $\mathcal{X}$ such that for all $i\in[d]$, we have $|R_i|=|\mathcal{X}|/d$. Using Proposition \ref{prop:partition-gaussian-width}, we can show that $w(\lambda_1,\mathcal{X},\mathcal{R})\leq O(\sqrt{d\log(|\mathcal{X}|/d)})$. This implies that our adaptive algorithm has an improved sample complexity of $O\left(\frac{d\log((|\mathcal{X}|/d)/\delta )}{\varepsilon^2}\right)$.

\section{$\ell_2$ Norm Estimation Algorithm}\label{appendix-l2-norm-estimation}
Let $\mathcal{X}$ denote the unit $\ell_2$ ball in $\mathbb{R}^d$, and let $\theta\in\mathbb{R}^d$ be the reward vector. Our goal is to estimate $r:=\|\theta\|_2$ using only samples obtained by querying points in $\mathcal{X}$. In Appendix~\ref{sec:ball-additive-estimate}, we analyze Algorithm~\ref{alg:ball-additive}. In Appendix~\ref{sec:ball-mult-estimate}, we study how to estimate $\|\theta\|_2$ up to a constant multiplicative factor. In Appendix~\ref{sec:ball-large-regime}, we consider the regime where $\|\theta\|_2\geq \sqrt{d}$. Finally, in Appendix~\ref{sec:ball-meta-algo}, we analyze a meta-algorithm that handles all regimes and provides a universal guarantee.

\subsection{Estimating with the Help of a Multiplicative Estimate $r_0$}\label{sec:ball-additive-estimate}
Begin by defining $r:=||\theta||_2$. We now assume that we are given a value $r_0$ such that $\varepsilon<r_0<2\sqrt{d}$ and $\frac{r}{2}< r_0\leq 2r$. In the next section, we show how to satisfy these assumptions. Now we analyse Algorithm \ref{alg:ball-additive}.

In this section we work with sub-exponential random variables. A mean zero $U$ is sub-exponential with parameters $(V, b)$ if
\[
\mathbb{E}[e^{\lambda U}] \leq \exp\left( \frac{\lambda^2 V}{2} \right) \quad \text{for } |\lambda| \leq \frac{1}{b}
\]
If $X \sim \mathcal{N}(\mu, \sigma^2)$ and $Y := X^2 - \mathbb{E}[X^2]$,  
then $Y$ is a sub-exponential with parameters $(V,b)=(8\left( \sigma^4 + \mu^2 \sigma^2 \right),4\sigma^2)$ due to Lemma \ref{lem:gaussian-sub-exp}.

If $U_1, \ldots, U_K$ are independent mean-zero, sub-exponential random variables with parameters $(V_1, b),\ldots,(V_K,b)$ respectively, then we get the following due to Lemma \ref{lem:sub-exp-additive}:
\[
 \Pr\left( \left| \frac{1}{K} \sum_{s=1}^{K} U_s \right| > t \right) 
\leq 2 \exp\left( -cK \min\left\{ \frac{t^2}{\bar{V}}, \frac{t}{b} \right\} \right)
\]
where $\bar{V} = \frac{1}{K} \sum_{k=1}^{K} V_k$ and $c>0$ is some absolute constant.

We now begin with some high-level analysis. Recall $x^{(k)} = \frac{(\varepsilon_1, \ldots, \varepsilon_d)}{\sqrt{d}}$ where $\varepsilon_i\overset{iid}{\sim} \{ \pm 1 \}$ and $ \mu_k = \langle x^{(k)}, \theta \rangle$. Then we have the following:
\[
\mathbb{E}[\mu^2_k] = \theta^\top \mathbb{E}\left[ x^{(k)} (x^{(k)})^\top \right] \theta 
= \theta^\top \left( \frac{1}{d} I_d \right) \theta = \frac{r^2}{d}
\]

Now conditioning on $x^{(k)}$ (hence $\mu_k$), we have
\[
\bar{y}_k \sim \mathcal{N}\left(\mu_k, \frac{1}{s}\right)
\Rightarrow \mathbb{E}[\bar{y}_k^2 \mid \mu_k] = \mu_k^2 + \frac{1}{s}\Rightarrow \mathbb{E}[Z_k \mid \mu_k] = d \mu_k^2
\]

We will now aim to bound
\[
\bar{Z} - r^2 
= \underbrace{\left( \frac{1}{K} \sum_{k=1}^{K} d\mu_k^2 - r^2 \right)}_{\text{Term 1}}
+ \underbrace{\left( \frac{1}{K} \sum_{k=1}^{K} \left( Z_k - d\mu_k^2 \right) \right)}_{\text{Term 2}}
\]

\textbf{Bounding Term 1:}
Note that if $r=0$, then  $\text{Term I}=d \mu_k^2 - r^2=0$. Hence, the non-trivial case is when $r>0$ which we focus on below.
Let $X_k := d \mu_k^2 - r^2$.  
We now show that $X_k$ is sub-exponential with parameters
\[
(V, b) = \left( C r^4, \, C r^2 \right)
\]
for some absolute constant $C$ with the help of Hanson–Wright inequality (see Lemma \ref{lem:hanson-wright}).

Recall that
\[
\mu_k = \left\langle \frac{1}{\sqrt{d}} \varepsilon^{(k)}, \theta \right\rangle,
\qquad \varepsilon_i^{(k)} \overset{iid}{\sim} \{ \pm 1 \}
\]

Now observe that:
\[
d \mu_k^2 = \left( \varepsilon^{(k)} \right)^\top A \varepsilon^{(k)}, \quad \text{where } A := \theta \theta^\top .
\]

As $\mathbb{E}[d \mu_k^2] = r^2$, we apply the Hanson–Wright inequality (we apply Lemma \ref{lem:hanson-wright} with $K = 1$, as Rademacher variables have unit $\psi_2$-norm) to get:
\[
\Pr\left( |X_k| > t \right) 
\leq 2 \exp\left( -c \min\left( \frac{t^2}{\|A\|_F^2}, \frac{t}{\|A\|} \right) \right)
\]

As $A = \theta \theta^\top$, we have
\[
\|A\|_F^2 = \sum_i \theta_i^4 + 2 \sum_{i < j} \theta_i^2 \theta_j^2 
= \left( \sum_i \theta_i^2 \right)^2 = \|\theta\|_2^4 = r^4 
\quad 
\]

Note that $A$ is a rank-1 matrix, and
\[
A \theta = \theta \theta^\top \theta = \|\theta\|_2^2 \theta
\]

Hence, $\theta$ is an eigenvector of $A$ with eigenvalue $\|\theta\|_2^2$.

As the operator norm is equal to the largest eigenvalue, we have
\[
\|A\| = \|\theta\|_2^2 = r^2
\]

Hence, we have:
\[
\Pr\left( |X_k| > t \right) \leq 2 \exp\left( -c \min\left( \frac{t^2}{r^4}, \frac{t}{r^2} \right) \right)
\]

Using the above inequality and applying Lemma \ref{lem:tail-to-sub-exp}, we get that $X_k$ is sub-exponential with parameters $(C r^4, C r^2)$ for some constant $C$.

Hence, we have:
\begin{equation}\label{eq:bounding-term-1}
    \Pr\left( \left| \frac{1}{K} \sum_{k=1}^{K} X_k \right| > t \right) 
\leq 2 \exp\left( -\frac{c}{C}\cdot K \min\left( \frac{t^2}{r^4}, \frac{t}{r^2} \right) \right)
\end{equation}

Choosing $t = \frac{r \varepsilon}{4}$, and $K = c_1 r_0^2 \varepsilon^{-2} \log \frac{4}{\delta}$ for some large constant $c_1$,  
we get:
\[
\left| \frac{1}{K} \sum_{k=1}^{K} d\mu_k^2 - r^2 \right| 
\leq \frac{r \varepsilon}{4} \quad \text{with probability } \geq 1 - \frac{\delta}{4}
\]

\textbf{Bounding Term 2:}  Let $W_k := Z_k - d\mu_k^2 = d\left( \bar{y}_k^2 - \frac{1}{s} - \mu_k^2 \right)$. Conditioning on $\mu_k$, we have:
\[
\sqrt{d} \cdot \bar{y}_k \mid \mu_k \sim \mathcal{N}(\sqrt{d}\cdot\mu_k, \sigma^2), \quad \text{with } \sigma^2 = \frac{d}{s}
\]

Hence $W_k \mid \mu_k$ is sub-exponential with parameters:
\[
V(\mu_k) = 8 \left( \frac{d^2}{s^2} + \mu_k^2 \cdot \frac{d^2}{s} \right), \qquad b = \frac{4d}{s}
\]

Conditioning on $\mu_{1:K} := \mu_1, \ldots, \mu_K$, we get:
\[
\Pr\left( \left| \frac{1}{K} \sum_{k=1}^{K} W_k \right| > t \,\middle|\, \mu_{1:K} \right) 
\leq 2 \exp\left( -cK \min\left( \frac{t^2}{\bar{V}}, \frac{t}{b} \right) \right)
\]

where $
\bar{V} := \frac{1}{K} \sum_{k=1}^{K} V(\mu_k)=8 \left( \frac{d^2}{s^2} + \frac{d^2}{s} \cdot \frac{1}{K} \sum_{k=1}^{K} \mu_k^2 \right).
$ We now aim to upper bound
$
\bar{V}.
$

Recall that $X_k = d \mu_k^2 - r^2$ is sub-exponential with parameters $(Cr^4, Cr^2)$.

Hence, plugging $t=r^2/2$ into Eq. \eqref{eq:bounding-term-1} yields the following, for a sufficiently large constant $c_1$:
\[
\Pr\left( \left| \frac{1}{K} \sum_{k=1}^{K} d \mu_k^2 - r^2 \right| > \frac{r^2}{2} \right) 
\leq \frac{\delta}{8}
\]


\bigskip

Hence, with probability at least $1 - \frac{\delta}{8}$, we have $
\bar{V} \leq \tau := 8 \left( \frac{d^2}{s^2} + \frac{3dr^2}{2s} \right).
$

\begin{align*}
    \Pr\left( \left| \frac{1}{K} \sum_{k=1}^{K} W_k \right| > t \right)&= \Pr\left( \left| \frac{1}{K} \sum_{k=1}^{K} W_k \right| > t ,\, \bar{V} > \tau \right) +\Pr\left( \left| \frac{1}{K} \sum_{k=1}^{K} W_k \right| > t ,\, \bar{V} \leq \tau \right)  \\
    &\leq \Pr(\bar{V} > \tau) + \Pr\left( \left| \frac{1}{K} \sum_{k=1}^{K} W_k \right| > t \,\middle|\, \bar{V} \leq \tau \right)\cdot \Pr(\bar{V} \leq \tau)\\
    &\leq \frac{\delta}{8} + \underbrace{\Pr\left( \left| \frac{1}{K} \sum_{k=1}^{K} W_k \right| > t \,\middle|\, \bar{V} \leq \tau \right)}_{\text{Term III}}
\end{align*}

Now we have:
\begin{equation}\label{appendix-eq:bounding-term-3}
    \text{Term III}=\Pr\left( \left| \frac{1}{K} \sum_{k=1}^{K} W_k \right| > t \,\middle|\, \bar{V} \leq \tau \right) 
\leq 2 \exp\left( -cK \min\left( \frac{t^2}{\tau}, \frac{t}{b} \right) \right)
\end{equation}

As $s = \frac{c_0 d}{r_0^2}$, we have the following for some absolute constant $C'$:
\[
b = \left( \frac{4d}{s} \right) = \left( \frac{4}{c_0} \cdot {r_0^2} \right)
\quad \text{and} \quad 
\tau = 8 \left( \frac{r_0^4}{c_0^2} + \frac{3r^2 r_0^2}{2c_0} \right)
\leq C' r_0^4 
\]

Also, with 
\[
K = c_1 r_0^2 \varepsilon^{-2} \log \frac{4}{\delta}
\quad \text{,} \quad 
t = \frac{r \varepsilon}{4} \quad \text{ and }\quad\frac{r}{2}\leq r_0\leq 2r
\]
we get that the Term III is at most $\frac{\delta}{8}$ after plugging the above parameters into Eq. \eqref{appendix-eq:bounding-term-3}.

Hence, with probability at least $1 - \delta/2$, we have 
$
\left| \bar{Z} - r^2 \right| \leq \frac{r \varepsilon}{2}. 
$

Let us now assume that $\left| \bar{Z} - r^2 \right| \leq \frac{r \varepsilon}{2}$. As $\bar{Z}\geq 0$ we have $\hat{r}=\sqrt{\bar{Z}}$, which implies the following:
    \[
    |\hat{r} - r| = \left| \sqrt{\bar{Z}} - r \right| 
    = \left| \bar{Z} - r^2 \right| \bigg/ \left( \sqrt{\bar{Z}} + r \right)
    \leq \frac{\frac{r \varepsilon}{2}}{r}
    = \frac{\varepsilon}{2} < \varepsilon
    \]

\subsection{Estimating $r$ up to a Constant Factor}\label{sec:ball-mult-estimate}

\begin{algorithm2e}[H]
\caption{Adaptive multi-scale test for $r=\|\theta\|_2$}
\label{alg:multiscale-test}
\LinesNumbered
\DontPrintSemicolon
\SetKwInOut{Input}{Input}
\SetKwInOut{Output}{Output}
\SetKwFunction{Test}{Test}
\SetKwFunction{Stat}{Statistic}

\Input{$\varepsilon\in(0,1]$, $\delta\in(0,1/3)$, dimension $d$, large absolute constants $c_0,c_1>0$}
\Output{An estimate $r_0$}

\BlankLine
\SetKwProg{Fn}{Function}{}{end}
\Fn{\Stat{$t_j,\delta_j$}}{
    $s_j \leftarrow c_0 \dfrac{d}{t_j^2}$\;
    $K_j \leftarrow c_1 \log\!\left(\dfrac{1}{\delta_j}\right)$\;
    \For{$k\leftarrow 1$ \KwTo $K_j$}{
        Draw a Rademacher unit vector
$x^{(k)} = \frac{(\varepsilon_1, \dots, \varepsilon_d)}{\sqrt{d}}
\quad \text{where } \varepsilon_i \overset{iid}{\sim} \{ \pm 1 \}$\;
        \For{$\ell\leftarrow 1$ \KwTo $s_j$}{
            Observe $y_{k,\ell} = \langle x^{(k)},\theta\rangle + \eta_{k,\ell}$ where $\eta_{k,\ell}\sim\mathcal{N}(0,1)$ i.i.d.\;
        }
        $\bar y_k \leftarrow \dfrac{1}{s_j}\sum_{\ell=1}^{s_j} y_{k,\ell}$\;
        $Z_k \leftarrow d\left(\bar y_k^2 - \dfrac{1}{s}\right)$\;
    }
    $U(t_j) \leftarrow \dfrac{1}{K_j}\sum_{k=1}^{K_j} Z_k$\;
    \KwRet{$U(t_j)$}\;
}

\BlankLine
\Fn{\Test{$t_j,\delta_j$}}{
    $U(t_j) \leftarrow$ \Stat{$t_j,\delta_j$}\;
    \eIf{$U(t_j) \ge \dfrac{3}{2}t_j^2$}{
        \KwRet{$H_1$} \tcp*{$H_1$ is the hypothesis that $r \ge 2t_j$}
    }{
        \KwRet{$H_0$} \tcp*{$H_0$ is the hypothesis that $r \le t_j$}
    }
}

\BlankLine
$j \leftarrow 0$\;
\While{$2^j\cdot\varepsilon<2\sqrt{d}$}{
    $t_j \leftarrow 2^j\cdot\varepsilon$\;
    $\delta_j \leftarrow \dfrac{\delta}{2^{j+2}}$\;
    outcome $\leftarrow$ \Test{$t_j,\delta_j$}\;
    \If{outcome $= H_0$}{
        \textbf{break}\;
    }
    $j \leftarrow j+1$\;
}
$r_0 \leftarrow t_j$\;
\KwRet{$r_0$}\;
\end{algorithm2e}

We aim to output $r_0$ such that, with high probability, it satisfies meaningful properties that depend on the value of $r$. We discuss these properties in detail toward the end of this section.

Now we analyse the algorithm. Fix $j\geq 0$. Recall that under $\texttt{Test}(t_j,\delta_j)$, we return the hypothesis  $H_1 \, (r \geq 2t_j)$ if $U(t_j) \geq \frac{3}{2} t_j^2$ otherwise we return the hypothesis $H_0\, (r \leq t_j)$. 

We now compute the probability of error under the hypothesis $H_0\, (r \leq t_j)$. Error under the hypothesis $H_0$ ($r \leq t_j$) means:
\[
U(t_j) \geq \frac{3}{2} t_j^2 
\quad \text{whereas} \quad 
\mathbb{E}[U(t_j)] = r^2 \leq t_j^2
\]

This requires an upward deviation of at least:
\[
U(t_j) - \mathbb{E}[U(t_j)] \geq \Delta_0 := \frac{3}{2} t_j^2 - r^2 \geq \frac{1}{2} t_j^2
\]

Recall:
\[
U(t_j) - r^2 = \left( \frac{1}{K_j} \sum_k d\mu_k^2 - r^2 \right) + \frac{1}{K_j} \sum_k (Z_k - d\mu_k^2)
\]

Let:
\[
a_0 := \frac{1}{8} t_j^2, \quad \nu_0 := \frac{1}{8} t_j^2
\]

Then the error under $H_0$ satisfies:
\[
\left\{ \text{error under } H_0 \right\} 
\subseteq \left\{ \left| \frac{1}{K_j} \sum_k d\mu_k^2 - r^2 \right| >a_0 \right\}
\cup \left\{ \left| \frac{1}{K_j} \sum_k (Z_k - d\mu_k^2) \right| > \nu_0 \right\}
\]

\bigskip

Recall that $X_k := d \mu_k^2 - r^2$ is sub-exponential with parameters
\[
(V, b) = \left( C r^4, \, C r^2 \right)
\]
for some absolute constant $C$. Now observe that 

Hence, we have the following for some absolute constant $c$:
\[
\Pr\left( \left| \frac{1}{K_j} \sum_{k=1}^{K_j} X_k \right| > a_0 \right) 
\leq 2 \exp\left( -cK_j \min\left( \frac{a_0^2}{r^4}, \frac{a_0}{r^2} \right) \right)
\]

Note that:
\[
\min\left\{ \frac{a_0^2}{r^4}, \frac{a_0}{r^2} \right\} =\min\left\{ \frac{t_j^4}{64r^4}, \frac{t_j^2}{8r^2} \right\} \geq \frac{1}{64}
\]

As $K_j = c_1 \log \left( \frac{1}{\delta_j} \right)$, then for a large universal constant $c_1$, we have the following:
\[
\left| \frac{1}{K_j} \sum_{k=1}^{K_j} d\mu_k^2 - r^2 \right| 
\leq a_0 \quad \text{with probability } \geq 1 - \frac{\delta_j}{4}
\]

Let $W_k := Z_k - d\mu_k^2$

Conditioning on $\mu_k$, we have:
\[
\sqrt{d} \cdot \bar{y}_k \mid \mu_k \sim \mathcal{N}(\sqrt{d}\cdot\mu_k, \sigma^2), \quad \text{with } \sigma^2 = \frac{d}{s_j}
\]

Recall $W_k \mid \mu_k$ is sub-exponential with parameters:
\[
V(\mu_k) = 8 \left( \frac{d^2}{s_j^2} + \mu_k^2 \cdot \frac{d^2}{s_j} \right), \qquad b = \frac{4d}{s_j}
\]

and therefore we had the following for some absolute constant $c$:
\[
\Pr\left( \left| \frac{1}{K_j} \sum_{k=1}^{K_j} W_k \right| > \nu_0 \right)\leq \Pr(\bar{V}>\tau) + \underbrace{2 \exp\left( -cK_j \min\left( \frac{\nu_0^2}{\tau}, \frac{\nu_0}{b} \right) \right)}_{\text{Term I}}
\]

where 
\[
\bar{V} := \frac{1}{K_j} \sum_{k=1}^{K_j} V(\mu_k),\quad \nu_0 = \frac{t_j^2}{8}, \quad b = \frac{4t_j^2}{c_0}, \quad \tau := 8 \left( \frac{d^2}{s_j^2} + \frac{3dr^2}{2s_j}\right) \leq C't_j^4
\]
for some absolute constant $C'$.

Recall that $X_k = d \mu_k^2 - r^2$ is sub-exponential with parameters $(Cr^4, Cr^2)$. As $K_j = c_1 \log \left( \frac{1}{\delta_j} \right)$, we have for some absolute constant $c$ and for a large universal constant $c_1$:
\[
\Pr\left( \left| \frac{1}{K_j} \sum_{k=1}^{K_j} d \mu_k^2 - r^2 \right| > \frac{r^2}{2} \right) 
\leq 2 \exp(-cK_j) \leq \frac{\delta_j}{4}
\]
Hence, we have $\Pr(\bar{V}>\tau)\leq\frac{\delta_j}{4}$.

As $ \frac{\nu_0}{b}=\frac{c_0}{32}, 
\frac{\nu_0^2}{\tau} \geq\frac{1}{64C'}, $ and $ K_j = c_1 \log \left( \frac{1}{\delta_j} \right)$, then for a large universal constant $c_1$, we have the following:
\[
\Pr\left( \left| \frac{1}{K_j} \sum_{k=1}^{K_j} W_k \right| > \nu_0 \right)\leq \Pr(\bar{V}>\tau) +\text{Term I}\leq \frac{\delta_j}{4}+\frac{\delta_j}{4}=\frac{\delta_j}{2}.
\]

Hence, we have the following:
\[
\Pr(\text{error under } H_0) \leq \Pr\left( \left| \frac{1}{K_j} \sum_{k=1}^{K_j} X_k \right| > a_0 \right)+\Pr\left( \left| \frac{1}{K_j} \sum_{k=1}^{K_j} W_k \right| > \nu_0 \right)\leq \frac{3}{4} \delta_j
\]

We next compute the probability of error under the hypothesis $H_1$ ($r \geq 2t_j$). Error under the hypothesis $H_1$ ($r \geq 2t_j$) means:

\[
U(t_j) \leq \frac{3}{2} t_j^2 
\quad \text{whereas} \quad 
\mathbb{E}[U(t_j)] = r^2 \geq 4t_j^2
\]

This requires a downward deviation of at least:
\[
\mathbb{E}[U(t_j)] - U(t_j) \geq \Delta_1 := r^2 - \frac{3}{2} t_j^2 \geq \frac{5}{8} r^2
\]

Recall:
\[
U(t_j) - r^2 = \left( \frac{1}{K_j} \sum_k d\mu_k^2 - r^2 \right) + \frac{1}{K_j} \sum_k (Z_k - d\mu_k^2)
\]

Let:
\[
a_1 := \frac{1}{8} r^2, \quad \nu_1 := \frac{1}{8} r^2
\]

Then the error under $H_1$ satisfies:
\[
\left\{ \text{error under } H_1 \right\} 
\subseteq \left\{ \left| \frac{1}{K_j} \sum_k d\mu_k^2 - r^2 \right| >a_1 \right\}
\cup \left\{ \left| \frac{1}{K_j} \sum_k (Z_k - d\mu_k^2) \right| > \nu_1 \right\}
\]

Recall that $X_k := d \mu_k^2 - r^2$ is sub-exponential with parameters
\[
(V, b) = \left( C r^4, \, C r^2 \right)
\]
for some absolute constant $C$. Now observe that 

Hence, we have the following for some absolute constant $c$:
\[
\Pr\left( \left| \frac{1}{K_j} \sum_{k=1}^{K_j} X_k \right| > a_1 \right) 
\leq 2 \exp\left( -cK_j \min\left( \frac{a_1^2}{r^4}, \frac{a_1}{r^2} \right) \right)
\]

Note that:
\[
\min\left\{ \frac{a_1^2}{r^4}, \frac{a_1}{r^2} \right\} =\min\left\{ \frac{r^4}{64r^4}, \frac{r^2}{8r^2} \right\} = \frac{1}{64}
\]

As $K_j = c_1 \log \left( \frac{1}{\delta_j} \right)$, then for a large universal constant $c_1$, we have the following:
\[
\left| \frac{1}{K_j} \sum_{k=1}^{K_j} d\mu_k^2 - r^2 \right| 
\leq a_1 \quad \text{with probability } \geq 1 - \frac{\delta_j}{4}
\]

Let $W_k := Z_k - d\mu_k^2$

Conditioning on $\mu_k$, we have:
\[
\sqrt{d} \cdot \bar{y}_k \mid \mu_k \sim \mathcal{N}(\sqrt{d}\cdot\mu_k, \sigma^2), \quad \text{with } \sigma^2 = \frac{d}{s_j}
\]

Recall $W_k \mid \mu_k$ is sub-exponential with parameters:
\[
V(\mu_k) = 8 \left( \frac{d^2}{s_j^2} + \mu_k^2 \cdot \frac{d^2}{s_j} \right), \qquad b = \frac{4d}{s_j}
\]

and therefore we had the following for some absolute constant $c$:
\[
\Pr\left( \left| \frac{1}{K_j} \sum_{k=1}^{K_j} W_k \right| > \nu_1 \right)\leq \Pr(\bar{V}>\tau) + \underbrace{2 \exp\left( -cK_j \min\left( \frac{\nu_1^2}{\tau}, \frac{\nu_1}{b} \right) \right)}_{\text{Term II}}
\]

where 
\[
\bar{V} := \frac{1}{K_j} \sum_{k=1}^{K_j} V(\mu_k),\quad \nu_1 = \frac{r^2}{8}, \quad b = \frac{4t_j^2}{c_0}\leq\frac{r^2}{c_0}, \quad \tau := 8 \left( \frac{d^2}{s_j^2} + \frac{3dr^2}{2s_j}\right) \leq C'r^4
\]
for some absolute constant $C'$.

Recall that $X_k = d \mu_k^2 - r^2$ is sub-exponential with parameters $(Cr^4, Cr^2)$. As $K_j = c_1 \log \left( \frac{1}{\delta_j} \right)$, we have the following for some absolute constant $c$ and for a large universal constant $c_1$:
\[
\Pr\left( \left| \frac{1}{K_j} \sum_{k=1}^{K_j} d \mu_k^2 - r^2 \right| > \frac{r^2}{2} \right). 
\leq 2 \exp(-cK_j) \leq \frac{\delta_j}{4}
\]
 Hence, we have $\Pr(\bar{V}>\tau)\leq\frac{\delta_j}{4}$.

As $ \frac{\nu_1}{b}\geq\frac{c_0}{8}, 
\frac{\nu_1^2}{\tau} \geq\frac{1}{64C'}, $ and $ K_j = c_1 \log \left( \frac{1}{\delta_j} \right)$, then for a large universal constant $c_1$, we have the following:
\[
\Pr\left( \left| \frac{1}{K_j} \sum_{k=1}^{K_j} W_k \right| > \nu_1 \right)\leq \Pr(\bar{V}>\tau) +\text{Term II}\leq \frac{\delta_j}{4}+\frac{\delta_j}{4}=\frac{\delta_j}{2}.
\]

Hence, we have:
\[
\Pr(\text{error under } H_1) \leq \frac{\delta_j}{2}+\frac{\delta_j}{4}= \frac{3}{4} \delta_j
\]

We now establish the guarantees of our algorithm. We divide it into $4$ cases.

\textbf{Case 1: $r\leq \varepsilon$:} In this case, for $j=0$, we have $r\leq t_j$ which implies that the hypothesis $H_0$ holds. Which implies with probability at least $1-\frac{3\delta}{16}$, we return $r_0=\varepsilon$.

\textbf{Case 2: $\varepsilon<r< \sqrt{d}$:} In this case, for all $j$ such that $2t_j\leq r$, $H_1$ always holds. Consider the index $j_*$ such that $t_{j_*}\leq r<2t_{j_*}$. If we return $r_0=t_{j_*}$, then $r/2<r_0\leq r$. If we instead proceed the index $j_*+1$, then $H_0$ holds and if we terminate and return $r_0=t_{j_*+1}=2t_{j_*}$, then $r<r_0\leq 2r$. Hence, with probability at least $1-\sum_{j=1}^\infty 3\delta_j/4=1-\frac{3\delta}{8}$, we return $r/2<r_0\leq 2r$.

\textbf{Case 3: $\sqrt{d}\leq r<4\sqrt{d}$:} In this case, for all $j$ such that $2t_j\leq r$, $H_1$ always holds. If there exists an index $j_*$ such that $t_{j_*}<2\sqrt{d}$ and $t_{j_*}\leq r<2t_{j_*}$ and if we return $r_0=t_{j_*}$, then $r/2<r_0\leq r$. If we instead proceed to the index $j_*+1$, then $H_0$ holds and if we terminate and return $r_0=t_{j_*+1}=2t_{j_*}$, then $r<r_0\leq 2r$. Hence, in this scenario, with probability at least $1-\sum_{j=1}^\infty 3\delta_j/4=1-\frac{3\delta}{8}$, we return $r/2<r_0\leq 2r$.

On the other hand, if there is no index $j_*$ such that $t_{j_*}<2\sqrt{d}$ and $t_{j_*}\leq r<2t_{j_*}$, then $H_1$ always holds for all $j$ such that $2^j\varepsilon<2\sqrt{d}$. This implies that with probability at least $1-\sum_{j=1}^\infty 3\delta_j/4=1-\frac{3\delta}{8}$, we return $r_0\geq 2\sqrt{d}$.

\textbf{Case 4: $r\geq 4\sqrt{d}$:} In this case, for all $j$ such that $2^j\varepsilon<2\sqrt{d}$, $H_1$ always holds. This implies that with probability at least $1-\sum_{j=1}^\infty 3\delta_j/4=1-\frac{3\delta}{8}$, we return $r_0\geq 2\sqrt{d}$.

Now we establish the sample complexity of our algorithm. The sample complexity is upper bounded as
\[
\sum_{j=0}^{\lceil\log_2(2\sqrt{d}/\varepsilon)\rceil}s_jt_j\leq\mathcal{O}\left(\sum_{j=0}^\infty(d/\varepsilon)^22^{-j}\log(2^{j+2}/\delta)\right)=\mathcal{O}\left(\frac{d\log(1/\delta)}{\varepsilon^2}\right).
\]


\subsection{Estimation in the Large-Norm Regime}\label{sec:ball-large-regime}
Recall that observe
\[
y_t \;=\; \langle x_t,\theta\rangle + \eta_t, \qquad t = 1,\dots,n,
\]
where $\theta \in \mathbb{R}^d$ and the noise variables are i.i.d.\ $\eta_t \sim \mathcal{N}(0,1)$. Let $r := \|\theta\|_2$. In this section we assume the ``large norm'' condition
\begin{equation}
r^2 \;\ge\; d.
\label{eq:large-norm-assumption}
\end{equation}

Our goal is to construct an estimator $\hat r$ such that, for suitable absolute
constant $C > 0$,
\[
n \;\ge\; C\,\frac{d}{\varepsilon^2}\log\frac{1}{\delta}
\quad\text{ and }\quad
\Pr\big(|\hat r - r| > \varepsilon\big) \le \delta
\]
for all $\theta$ satisfying \eqref{eq:large-norm-assumption}. We first describe out algorithm below.

\begin{algorithm2e}[H]
\caption{Algorithm for Large-Norm Regime}
\label{alg:large-regime}
\DontPrintSemicolon
\SetKwInOut{Input}{Input}
\SetKwInOut{Output}{Output}
\LinesNumbered
\Input{dimension $d$, sample size $n$}
\BlankLine
\For{$t \leftarrow 1$ \KwTo $n$}{
    Draw a Rademacher unit vector $x_t = \frac{(\varepsilon_1, \dots, \varepsilon_d)}{\sqrt{d}}\quad \text{where } \varepsilon_i \overset{iid}{\sim} \{ \pm 1 \}$.\;
    Observe $y_t$ from the model $y_t = \langle x_t,\theta\rangle + \eta_t$ with $\eta_t\sim\mathcal{N}(0,1)$ i.i.d.\;
}
Form $X \in \mathbb{R}^{n\times d}$ whose $t$-th row is $x_t^\top$, and $y \leftarrow (y_1,\dots,y_n)^\top$\;

\If{$X^\top X$ is invertible}{
    $\hat\theta \leftarrow (X^\top X)^{-1} X^\top y$\;
    $\Sigma \leftarrow (X^\top X)^{-1}$\;
}
\Else{
$\hat r\gets \sqrt{d}$\;
\KwRet{$\hat r$}\;
}

$\hat r \leftarrow \sqrt{\max\{\|\hat\theta\|_2^2 - \operatorname{tr}(\Sigma),0\}}$\;

\KwRet{$\hat r$}\;
\end{algorithm2e}

We now start analysing our algorithm. Let us consider the case when $X^\top X$ is invertible. We later show that this holds with high probability. Recall that the least-squares estimator is defined as:
\[
\hat\theta := (X^\top X)^{-1} X^\top y.
\]

Define
\[
\Sigma := (X^\top X)^{-1}, \qquad \Delta := \hat\theta - \theta = \Sigma X^\top \eta.
\]

Conditioning on $X$ and using that $\eta \sim \mathcal{N}(0,I_n)$, we have
\begin{equation}
\Delta \mid X \sim \mathcal{N}(0,\Sigma).
\label{eq:Delta-distribution}
\end{equation}

Conditioning on $X$ and using \eqref{eq:Delta-distribution}, we get
\[
\mathbb{E}[\Delta \mid X] = 0,
\qquad
\mathbb{E}[\|\Delta\|_2^2 \mid X] = \operatorname{tr}(\Sigma).
\]

We wish to estimate $r^2 = \|\theta\|_2^2$. We first have the following:
\[
\|\hat\theta\|_2^2
= \|\theta + \Delta\|_2^2
= \|\theta\|_2^2 + 2\theta^\top\Delta + \|\Delta\|_2^2
= r^2 + 2\theta^\top\Delta + \|\Delta\|_2^2.
\]

Define the debiased estimator
\begin{equation}
\widehat{R} := \|\hat\theta\|_2^2 - \operatorname{tr}(\Sigma).
\label{eq:r2-hat-def}
\end{equation}

Then, we have
\[
\mathbb{E}[\widehat{R} \mid X] = r^2,
\qquad
\mathbb{E}[\widehat{R}] = r^2.
\]

Let
\begin{equation}
Z := \widehat{R} - r^2
= 2\theta^\top\Delta + \big(\|\Delta\|_2^2 - \operatorname{tr}(\Sigma)\big).
\label{eq:Z-def}
\end{equation}

We will now derive a tail bound for $Z$ conditional on $X$. Let $g \sim \mathcal{N}(0,I_d)$, and observe that $\Delta$ is distributionally equivalent to $\Sigma^{1/2} g$. Then \eqref{eq:Z-def} becomes distributionally equivalent to
\[
\tilde Z
= 2\theta^\top\Sigma^{1/2} g
 + \big(g^\top\Sigma g - \operatorname{tr}(\Sigma)\big).
\]

Define
\[
L := 2\theta^\top\Sigma^{1/2} g,
\qquad
Q := g^\top\Sigma g - \operatorname{tr}(\Sigma),
\]

so that
\[
\tilde Z = L + Q.
\]

Conditioning on $X$, the random variable $L$ is Gaussian with mean zero and variance
\[
\operatorname{Var}(L \mid X)
= 4\|\Sigma^{1/2}\theta\|_2^2
= 4\theta^\top\Sigma\theta.
\]

Therefore for any $u>0$,
\begin{equation}
\Pr\big(|L|\ge u \,\big|\, X\big)
\le 2\exp\Big(-\frac{u^2}{8\theta^\top\Sigma\theta}\Big).
\label{eq:L-tail}
\end{equation}

We now bound the term $Q$. We use the Hanson--Wright inequality in the Gaussian case:
if $g \sim \mathcal{N}(0,I_d)$ and $A \in \mathbb{R}^{d\times d}$ is symmetric, then
there exists an absolute constant $c_0 > 0$ such that for all $u>0$,
\begin{equation}
\Pr\big(|g^\top A g - \operatorname{tr}(A)| \ge u\big)
\le 2\exp\left(
  -c_0 \min\left\{
      \frac{u^2}{\|A\|_F^2},
      \frac{u}{\|A\|_{\mathrm{op}}}
  \right\}
\right).
\label{eq:HW}
\end{equation}

Applying \eqref{eq:HW} with $A = \Sigma$, and using
$\|\Sigma\|_F^2 = \operatorname{tr}(\Sigma^2)$, we obtain
\begin{equation}
\Pr\big(|Q|\ge u \,\big|\, X\big)
\le 2\exp\left(
  -c_0 \min\left\{
      \frac{u^2}{\operatorname{tr}(\Sigma^2)},
      \frac{u}{\|\Sigma\|_{\mathrm{op}}}
  \right\}
\right).
\label{eq:Q-tail}
\end{equation}

For any $t>0$,
\[
\{|\tilde Z|\ge t\}
= \{|L+Q|\ge t\}
\subseteq \Big\{|L|\ge \frac{t}{2}\Big\} \cup \Big\{|Q|\ge \frac{t}{2}\Big\},
\]

hence, by the union bound and the distributional equivalence between $\tilde Z$ and $Z$, we get
\begin{equation}
\Pr\big(|Z|\ge t \,\big|\, X\big)
\le \Pr\Big(|L|\ge \frac{t}{2} \Big| X\Big)
 +  \Pr\Big(|Q|\ge \frac{t}{2} \Big| X\Big).
\label{eq:Z-union}
\end{equation}

Using \eqref{eq:L-tail} with $u = t/2$, we get
\[
\Pr\Big(|L|\ge \frac{t}{2} \Big| X\Big)
\le 2\exp\Big(-\frac{(t/2)^2}{8\theta^\top\Sigma\theta}\Big)
= 2\exp\Big(-\frac{t^2}{32\theta^\top\Sigma\theta}\Big).
\]

Using \eqref{eq:Q-tail} with $u = t/2$, we get
\[
\Pr\Big(|Q|\ge \frac{t}{2} \Big| X\Big)
\le 2\exp\left(
  -c_0 \min\left\{
      \frac{t^2}{4\operatorname{tr}(\Sigma^2)},
      \frac{t}{2\|\Sigma\|_{\mathrm{op}}}
  \right\}
\right).
\]

Thus \eqref{eq:Z-union} implies
\begin{equation}
\Pr\big(|Z|\ge t \,\big|\, X\big)
\le
2\exp\Big(-\frac{t^2}{32\theta^\top\Sigma\theta}\Big)
+
2\exp\left(
  -c_0 \min\left\{
      \frac{t^2}{4\operatorname{tr}(\Sigma^2)},
      \frac{t}{2\|\Sigma\|_{\mathrm{op}}}
  \right\}
\right).
\label{eq:Z-two-terms}
\end{equation}

Define
\[
A_1 := \frac{t^2}{32\theta^\top\Sigma\theta},\quad
A_2 := c_0 \frac{t^2}{4\operatorname{tr}(\Sigma^2)},\quad
A_3 := c_0 \frac{t}{2\|\Sigma\|_{\mathrm{op}}}.
\]

Then \eqref{eq:Z-two-terms} can be written as
\[
\Pr(|Z|\ge t\mid X)
\le 2e^{-A_1} + 2e^{-\min\{A_2,A_3\}}
\le 4e^{-\min\{A_1,A_2,A_3\}}.
\]

For any $a,b>0$ we have
\[
\min\left\{\frac{t^2}{a},\frac{t^2}{b}\right\}
= \frac{t^2}{\max\{a,b\}}
\ge \frac{t^2}{a+b}.
\]
Applying this with $a = 32\theta^\top\Sigma\theta$ and
$b = \frac{4}{c_0}\operatorname{tr}(\Sigma^2)$, and absorbing constants, we obtain
for some absolute $c>0$:
\begin{equation}
\Pr(|Z|\ge t\mid X)
\le 4\exp\left(
  -c \min\left\{
    \frac{t^2}{4\theta^\top\Sigma\theta + 2\operatorname{tr}(\Sigma^2)},
    \frac{t}{\|\Sigma\|_{\mathrm{op}}}
  \right\}
\right).
\label{eq:Z-HW-type}
\end{equation}

Standard results on sub-Gaussian random matrices from the Section 4.7 of \cite{vershynin2018high} gives the following.
Since $\mathbb{E}[x_t x_t^\top] = I_d/d$ and the rows are i.i.d.\ sub-Gaussian, there exists
an absolute constant $C_0>1$ such that if
\begin{equation}
n \;\ge\; C_0\big(d + \log\tfrac{2}{\delta}\big),
\label{eq:n-rm-condition}
\end{equation}

then with probability at least $1-\delta/2$,
\begin{equation}
\frac{1}{2d}I_d
\;\preceq\;
\frac{X^\top X}{n}
\;\preceq\;
\frac{2}{d}I_d.
\label{eq:XTX-bounds}
\end{equation}

Multiplying \eqref{eq:XTX-bounds} by $n$ and inverting the inequalities yields
\begin{equation}
\frac{d}{2n}I_d \;\preceq\; \Sigma \;\preceq\; \frac{2d}{n}I_d.
\label{eq:Sigma-sandwich}
\end{equation}

From \eqref{eq:Sigma-sandwich} we obtain, on this ``good event'' for $X$,
\begin{align}
\|\Sigma\|_{\mathrm{op}}
&= \lambda_{\max}(\Sigma)
 \le \frac{2d}{n},
\label{eq:Sigma-op-bound}
\\
\theta^\top\Sigma\theta
&\le \|\Sigma\|_{\mathrm{op}} \|\theta\|_2^2
 \le \frac{2d}{n} r^2,
\label{eq:theta-Sigma-theta-bound}
\\
\operatorname{tr}(\Sigma^2)
&= \sum_{i=1}^d \lambda_i^2 \le d \lambda_{\max}^2
 \le d\left(\frac{2d}{n}\right)^2
 = \frac{4d^3}{n^2}.
\label{eq:Sigma2-trace-bound}
\end{align}

We now use \eqref{eq:theta-Sigma-theta-bound} and \eqref{eq:Sigma2-trace-bound} in
\eqref{eq:Z-HW-type}.
On the good event for $X$,
\begin{align*}
4\theta^\top\Sigma\theta + 2\operatorname{tr}(\Sigma^2)
&\le 4 \cdot \frac{2d}{n}r^2 + 2\cdot\frac{4d^3}{n^2} \\
&= \frac{8d}{n}r^2 + \frac{8d^3}{n^2}.
\end{align*}
Using $r^2\ge d$ and $n\ge d$, we have
\[
\frac{d^3}{n^2} \le \frac{d^2}{n} \le \frac{d}{n}r^2,
\]
hence
\[
\frac{8d^3}{n^2} \le 8\,\frac{d}{n}r^2.
\]
Therefore
\begin{equation}
4\theta^\top\Sigma\theta + 2\operatorname{tr}(\Sigma^2)
\;\le\; \frac{8d}{n}r^2 + 8\,\frac{d}{n}r^2
\;=\; 16\,\frac{d}{n}r^2.
\label{eq:denominator-bound}
\end{equation}

Let us choose
\[
t = \varepsilon r.
\]
On the good event,
\[
\min\left\{\frac{t^2}{4\theta^\top\Sigma\theta + 2\operatorname{tr}(\Sigma^2)},\frac{t}{\|\Sigma\|_{\mathrm{op}}}\right\}
\;\ge\;
\frac{\varepsilon^2 r^2}{16\,\frac{d}{n}r^2}
= \frac{n\varepsilon^2}{16d}.
\]
Using \eqref{eq:Z-HW-type} we obtain the following for some absolute constant $c'>0$:
\begin{equation}
\Pr\big(|Z|\ge \varepsilon r \mid X\big)
\le 4\exp\left(-c'\,\frac{n\varepsilon^2}{d}\right)
\quad\text{on the good event for }X.
\label{eq:Z-large-norm-cond}
\end{equation}

If we choose
\[
n \;\ge\; C_1\,\frac{d}{\varepsilon^2}\log\frac{4}{\delta}
\]

for a sufficiently large absolute constant $C_1$, then the right-hand side of
\eqref{eq:Z-large-norm-cond} is at most $\delta/2$.
Combining with \eqref{eq:n-rm-condition} and the fact that the good event for $X$
has probability at least $1-\delta/2$, we conclude that
\begin{equation}
\Pr\big(|\widehat{R} - r^2| \ge \varepsilon r\big) \le \delta.
\label{eq:r2-hat-concentration}
\end{equation}

Recall that our estimator for $r$ is
\begin{equation}
\hat r := \sqrt{\max\{\widehat{R},0\}}.
\label{eq:r-hat-def}
\end{equation}

On the event
\[
|\widehat{R} - r^2| \le \varepsilon r
\]
and assuming $\varepsilon \le r/2$ (which trivially holds for $d\geq 4$), we have
\[
\widehat{R} \ge r^2 - \varepsilon r \ge r^2 - \tfrac{r^2}{2} = \tfrac{r^2}{2} > 0,
\]
so $\hat r = \sqrt{\widehat{R}}$, and therefore
\[
|\hat r - r|
= \big|\sqrt{\widehat{R}} - \sqrt{r^2}\big|
= \frac{|\widehat{R} - r^2|}{\sqrt{\widehat{R}} + r}
\le \frac{|\widehat{R} - r^2|}{r}
\le \varepsilon.
\]

Hence
\[
\big\{|\widehat{R} - r^2| \le \varepsilon r\big\}
\subseteq
\big\{|\hat r - r| \le \varepsilon\big\}.
\]

Combining with \eqref{eq:r2-hat-concentration}, we obtain
\[
\Pr\big(|\hat r - r| > \varepsilon\big)
\le \Pr\big(|\widehat{R} - r^2| > \varepsilon r\big)
\le \delta.
\]

Finally, we needed
\[
n \;\ge\; C_0\big(d + \log\tfrac{2}{\delta}\big)
\quad\text{and}\quad
n \;\ge\; C_1\,\frac{d}{\varepsilon^2}\log\frac{4}{\delta},
\]
so for some absolute constant $C_2>0$,
\[
n \;\ge\; C_2\,\frac{d}{\varepsilon^2}\log\frac{1}{\delta}
\quad\Longrightarrow\quad
\Pr\big(|\hat r - \|\theta\|_2| > \varepsilon\big)\le \delta
\]
for all $\theta$ satisfying $r^2 \ge d$.

\subsection{Meta Algorithm for $\ell_2$-Norm Estimation}\label{sec:ball-meta-algo}
In section, we describe a meta algorithm that uses Algorithms \ref{alg:ball-additive}, \ref{alg:multiscale-test}, and \ref{alg:large-regime}.

\begin{algorithm2e}[H]
\caption{Meta-algorithm to estimate $\|\theta\|_2^2$}
\label{alg:meta-algo}
\DontPrintSemicolon
\SetKwInOut{Input}{Input}
\SetKwInOut{Output}{Output}
\LinesNumbered
\Input{dimension $d$}
\BlankLine
Run the Algorithm \ref{alg:multiscale-test} and receive the estimate $r_0$\;
\If{$r_0=\varepsilon$}{
    $\hat r\gets r_0$\;
    \KwRet{$\hat r$}\;
}
\If{$r_0\geq2\sqrt{d}$}{
    Run the Algorithm \ref{alg:large-regime} and receive the estimate $\hat r$\;
    \KwRet{$\hat r$}\;
}
\If{$\varepsilon<r_0<2\sqrt{d}$}{
    Run the Algorithm \ref{alg:ball-additive} and receive the estimate $\hat r$\;
    \KwRet{$\hat r$}\;
}
\end{algorithm2e}

Now we claim that Algorithm \ref{alg:meta-algo} takes $\mathcal{O}\left(\frac{d\log(1/\delta)}{\varepsilon^2}\right)$ samples and returns an estimate $\hat{r}$ such that with probability at least $1-\delta$ we have $|r-\hat r|\leq \varepsilon$, where $r:=\|\theta\|_2$. 

First, consider the case where $r\leq \varepsilon$. We showed in Appendix \ref{sec:ball-mult-estimate} that with probability at least $1-\delta$, we have $r_0=\varepsilon$. Hence, probability at least $1-\delta$ we have $|r-\hat r|\leq \varepsilon$.

Next, consider the case where $\varepsilon<r< \sqrt{d}$. We showed in Appendix \ref{sec:ball-mult-estimate} that with probability at least $1-\delta/2$, we have $r/2<r_0\leq 2r$ and $r_0<2\sqrt{d}$. Conditioned on this good event, if $r_0=\varepsilon$, we have $\varepsilon<r<2\varepsilon$ and therefore $|\hat r-r|\leq \varepsilon$. On the other hand, conditioned on this good event, if $r_0>\varepsilon$ Algorithm \ref{alg:ball-additive} is run and it returns $\hat r$ such that probability at least $1-\delta/2$ we have $|r-\hat r|\leq \varepsilon$. Due to union bound, we return $\hat r$ such that with probability at least $1-\delta$ we have $|r-\hat r|\leq \varepsilon$.

Next, consider the case where $\sqrt{d}\leq r<4\sqrt{d}$. Due to the analysis in Appendix \ref{sec:ball-mult-estimate}, with probability at least $1-\delta/2$ we either run Algorithm \ref{alg:ball-additive} with an estimate $r_0$ satisfying $r/2<r_0\leq 2r$ or run the Algorithm \ref{alg:large-regime}. In either case, the algorithm being run returns $\hat r$ such that probability at least $1-\delta/2$ we have $|r-\hat r|\leq \varepsilon$. Due to union bound, we return $\hat r$ such that with probability at least $1-\delta$ we have $|r-\hat r|\leq \varepsilon$.

Finally, consider the case where $r\geq 4\sqrt{d}$. Due to the analysis in Appendix \ref{sec:ball-mult-estimate}, with probability at least $1-\delta/2$ we have $r_0\geq 2\sqrt{d}$. Hence, conditioned on this good event, Algorithm \ref{alg:large-regime} is run and it returns $\hat r$ such that probability at least $1-\delta/2$ we have $|r-\hat r|\leq \varepsilon$. Due to union bound, we return $\hat r$ such that with probability at least $1-\delta$ we have $|r-\hat r|\leq \varepsilon$.

The sample complexity guarantee follows as each algorithm run in the meta-algorithms takes $\mathcal{O}\left(\frac{d\log(1/\delta)}{\varepsilon^2}\right)$ samples.

\section{Tail to Sub-Exponential Technical Lemma}
\begin{lemma}\label{lem:tail-to-sub-exp}
    Assume $U$ is mean-zero and for all $t\ge 0$ and some absolute constant $c>0$, we have:
\[
\Pr(|U|>t)\le 2\exp\!\left(-c\min\!\left(\frac{t^2}{V},\frac{t}{b}\right)\right).
\]
Then there exists an absolute constant $C>0$ such that $U$ is sub-exponential with parameters
\[
\bigl(C(V+b^2),\, Cb\bigr),
\]
i.e.
\[
\mathbb E\bigl[e^{\lambda U}\bigr]\le \exp\!\left(\frac{\lambda^2\,C(V+b^2)}{2}\right)
\quad\text{for all }|\lambda|\le \frac{1}{Cb}.
\]
\end{lemma}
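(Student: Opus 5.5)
We bound the moment generating function of $U$ by expanding it in moments and controlling each moment through the tail hypothesis. Since $\mathbb E[U]=0$, for any $\lambda$
\[
\mathbb E\bigl[e^{\lambda U}\bigr]=1+\sum_{p\ge 2}\frac{\lambda^p\,\mathbb E[U^p]}{p!}\le 1+\sum_{p\ge 2}\frac{|\lambda|^p\,\mathbb E|U|^p}{p!},
\]
so it suffices to bound $\mathbb E|U|^p$ by something of the form $(C_1\sqrt{pV})^p+(C_1 p b)^p$. Interchanging summation and expectation is justified once the series is shown to converge, which the moment bound will give on the stated range of $\lambda$.

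\textbf{Step 1: moments from tails.} We use $\mathbb E|U|^p=\int_0^\infty p t^{p-1}\Pr(|U|>t)\,dt$. The tail hypothesis gives $\Pr(|U|>t)\le 2e^{-ct^2/V}+2e^{-ct/b}$, since the exponential of a minimum is at most the sum of the two exponentials. The Gaussian part integrates to $p\,(V/c)^{p/2}\Gamma(p/2)$ and the exponential part to $2p\,(b/c)^p\Gamma(p)$. Stirling's bound $\Gamma(x)\le x^x$ then yields
\[
\mathbb E|U|^p\le \bigl(C_1\sqrt{pV}\bigr)^p+\bigl(C_1 p b\bigr)^p
\]
for an absolute constant $C_1$ depending only on $c$.

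\textbf{Step 2: sum the series.} We use $p!\ge (p/e)^p$. The sub-Gaussian contribution is bounded by
\[
\sum_{p\ge2}\frac{(C_1e|\lambda|\sqrt{V})^p}{p^{p/2}} .
\]
The exponential contribution is bounded by
\[
\sum_{p\ge 2}(C_1e|\lambda| b)^p .
\]
If we restrict to $|\lambda|\le 1/(2C_1eb)$, the second sum is geometric and at most $2(C_1e|\lambda|b)^2\lesssim \lambda^2 b^2$.

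The first sum is the delicate one. When $|\lambda|\sqrt V\lesssim 1$, it is dominated by its $p=2$ term and is $O(\lambda^2V)$. When $|\lambda|\sqrt V$ is large, it behaves like $e^{O(\lambda^2V)}$. In that regime we cannot write the bound as $1+O(\lambda^2 V)$, but we only need $\mathbb E[e^{\lambda U}]\le \exp(C\lambda^2(V+b^2)/2)$. So we bound the first sum by $e^{C_2\lambda^2V}-1$ directly, using $x^p/p^{p/2}\le C_3^p x^p/\Gamma(p/2+1)$ and the even/odd splitting trick familiar from the sub-Gaussian MGF proof.

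\textbf{Step 3: combine.} Using $1+a+b\le e^{a+b}$ for $a,b\ge 0$, we obtain
\[
\mathbb E\bigl[e^{\lambda U}\bigr]\le \exp\bigl(C_4\lambda^2(V+b^2)\bigr)\qquad\text{for all }|\lambda|\le \frac{1}{Cb},
\]
after enlarging $C$ to absorb all the constants.

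\textbf{Main obstacle.} The hard part is Step 2's sub-Gaussian sum without any restriction relating $\lambda$ and $V$. If that bookkeeping becomes awkward, we fall back on a reduction. When $V\le b^2$, we have $\min(t^2/V,t/b)\ge \min(t^2/b^2,t/b)$, so the hypothesis holds with $V$ replaced by $b^2$ and only the $b$-scale is needed. When $V>b^2$, we rescale and apply the standard $\psi_1$-norm characterization from \cite{vershynin2018high}, Proposition 2.7.1, to the separate Gaussian and exponential tail pieces, then recombine.
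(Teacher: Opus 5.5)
Your proposal is correct and follows the paper's proof essentially step for step. Integrating the two-piece tail bound with Gamma-function estimates gives $\mathbb E|U|^p\le (C_1\sqrt{pV})^p+(C_1pb)^p$, and the MGF series then splits into a geometric sub-exponential sum (the only place $|\lambda|\le 1/(Cb)$ is needed) and a sub-Gaussian sum bounded by $e^{C_2\lambda^2V}-1$ for \emph{all} $\lambda$ via the even/odd splitting. So your fallback is unnecessary; as sketched, a plain $\psi_1$ argument would only give the range $|\lambda|\le 1/(C\sqrt V)$ when $V>b^2$.

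One small fix in Step~3, calling your sub-Gaussian sum $S_1$ and your geometric sum $S_2$. Combine them as $1+S_1+S_2\le (1+S_1)(1+S_2)\le e^{C_2\lambda^2V}e^{S_2}$, as the paper does. Inserting $S_1\le e^{C_2\lambda^2V}-1$ into $1+a+b\le e^{a+b}$ instead would yield a doubly exponential bound.
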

\begin{proof}
First, observe that for all $t\ge 0$, we have:
\begin{equation}\label{eq:sub-exp-1}
    \Pr(|U|>t)
\le 2e^{-c t^2/V}+2e^{-c t/b}.
\end{equation}

For any integer $k\ge 2$,
\[
\mathbb E|U|^k
= k\int_0^\infty t^{k-1}\Pr(|U|>t)\,dt,
\]

so by \eqref{eq:sub-exp-1}, we have:
\[
\mathbb E|U|^k
\le 2k\int_0^\infty t^{k-1}e^{-c t^2/V}\,dt
\;+\;2k\int_0^\infty t^{k-1}e^{-c t/b}\,dt.
\]

Using the definition of gamma function, we have the following:
\[
\int_0^\infty t^{k-1}e^{-c t^2/V}\,dt
=\frac12\left(\frac{V}{c}\right)^{k/2}\Gamma\!\left(\frac{k}{2}\right),
\]
\[
\int_0^\infty t^{k-1}e^{-c t/b}\,dt
=\left(\frac{b}{c}\right)^k\Gamma(k).
\]

Hence
\begin{equation}\label{eq:sub-exp-2}
    \mathbb E|U|^k
\le k\left(\frac{V}{c}\right)^{k/2}\Gamma\!\left(\frac{k}{2}\right)
+2k\left(\frac{b}{c}\right)^k\Gamma(k).
\end{equation}

Using the properties of gamma function from \cite{gubner2021gamma}, we have the following for all $x\geq 1$:
\begin{equation}\label{eq:sub-exp-3}
    \Gamma(x)\le 3\,x^{x-\frac12}e^{-x}.
\end{equation}

Apply \eqref{eq:sub-exp-3} to $x=k/2\ge 1$ and $x=k\ge 2$:
\[
\Gamma(k/2)\le 3\left(\frac{k}{2}\right)^{\frac{k}{2}-\frac12}e^{-k/2},
\qquad
\Gamma(k)\le 3\,k^{k-\frac12}e^{-k}.
\]
Plugging into \eqref{eq:sub-exp-2} and taking $k$-th roots gives:
\begin{equation}\label{eq:sub-exp-4}
    (\mathbb E|U|^k)^{1/k}
\le \left[k\left(\frac{V}{c}\right)^{k/2}\Gamma(k/2)\right]^{1/k}
+\left[2k\left(\frac{b}{c}\right)^k\Gamma(k)\right]^{1/k}\le 2\sqrt{\frac{Vk}{c}} \;+\; 2\frac{bk}{c}
\end{equation}

From \eqref{eq:sub-exp-4} we also get, using $(x+y)^k\le 2^{k-1}(x^k+y^k)$,
\begin{equation}\label{eq:sub-exp-5}
    \mathbb E|U|^k
\le 2^{k-1}\Bigl( (2\sqrt{Vk/c})^k + (2bk/c)^k\Bigr)
\end{equation}

Since $\mathbb E U=0$,
\[
\mathbb E e^{\lambda U}
=1+\sum_{k\ge 2}\frac{\lambda^k\mathbb E[U^k]}{k!}
\le 1+\sum_{k\ge 2}\frac{|\lambda|^k\mathbb E|U|^k}{k!}.
\]

Using \eqref{eq:sub-exp-5}, we get
\[
\mathbb E e^{\lambda U}
\le 1+\frac12\sum_{k\ge 2}\frac{(4|\lambda|)^k\Bigl( (V/c)^{k/2}k^{k/2} + (b/c)^k k^k\Bigr)}{k!}
=1+S_1+S_2,
\]
where
\[
S_1:=\frac12\sum_{k\ge 2}\frac{\bigl(4|\lambda|\sqrt{V/c}\bigr)^k\,k^{k/2}}{k!},
\qquad
S_2:=\frac12\sum_{k\ge 2}\frac{\bigl(4|\lambda|b/c\bigr)^k\,k^k}{k!}.
\]

First, we bound the term $1+S_2$. Use $k!\ge (k/e)^k$, so $k^k/k!\le e^k$. Then
\[
S_2
\le \frac12\sum_{k\ge 2}\bigl(4e|\lambda|b/c\bigr)^k.
\]
If
\begin{equation}\label{eq:sub-exp-6}
    |\lambda|\le \frac{c}{8eb},
\end{equation}
then $r:=4e|\lambda|b/c\le 1/2$, so
\[
S_2\le \frac12\sum_{k\ge 2}r^k
=\frac12\cdot \frac{r^2}{1-r}
\le r^2
\le 16e^2\frac{\lambda^2 b^2}{c^2}.
\]
Hence, using $1+x\le e^x$, we get:
\begin{equation}\label{eq:sub-exp-7}
    1+S_2\le \exp\!\left(16e^2\frac{\lambda^2 b^2}{c^2}\right).
\end{equation}
Next, we bound the term $1+S_1$.
Let
\[
a:=4|\lambda|\sqrt{V/c}.
\]
For $k\ge 2$, letting $m=\lfloor k/2\rfloor$, we have:
\[
k! \ge m!\left(\frac{k}{2}\right)^{k-m}.
\]
This implies
\begin{equation}\label{eq:sub-exp-8}
    \frac{k^{k/2}}{k!}
\le \frac{k^{k/2}}{m!(k/2)^{k-m}}
\le \frac{2^{\lceil k/2\rceil}}{m!}
\le \frac{2^{k/2+1}}{m!}.
\end{equation}
Now we consider both case on $k$, one where $k$ is odd and one where $k$ is even:
\begin{itemize}
  \item $k=2m$ with $m\ge 1$: by \eqref{eq:sub-exp-8}, we have
  \[
  \frac{a^{2m}(2m)^{m}}{(2m)!}\le \frac{a^{2m}2^{m}}{m!}.
  \]
  \item $k=2m+1$ with $m\ge 1$: by \eqref{eq:sub-exp-8}, we have
  \[
  \frac{a^{2m+1}(2m+1)^{m+1/2}}{(2m+1)!}
  \le \frac{a^{2m+1}2^{m+1}}{m!}.
  \]
\end{itemize}

Therefore
\begin{equation}\label{eq:sub-exp-9}
    S_1
\le \frac12\sum_{m\ge 1}\frac{(2a^2)^m}{m!}
\;+\; a\sum_{m\ge 1}\frac{(2a^2)^m}{m!}
= \Bigl(a+\frac12\Bigr)\bigl(e^{2a^2}-1\bigr).
\end{equation}

Using the inequalities $(x-\tfrac12)^2+\tfrac14\ge 0$ and $e^x\ge 1+x$, we have the following: 
\begin{equation}\label{eq:sub-exp-10}
    a+\frac12 \le 1+a^2 \le e^{a^2}.
\end{equation}


Combining \eqref{eq:sub-exp-9} and \eqref{eq:sub-exp-10}, we get:
\[
S_1 \le e^{a^2}(e^{2a^2}-1)=e^{3a^2}-e^{a^2}\le e^{3a^2}-1,
\]
so
\begin{equation}\label{eq:sub-exp-11}
    1+S_1 \le e^{3a^2}=\exp\!\left(48\frac{\lambda^2 V}{c}\right).
\end{equation}

Since $S_1,S_2\ge 0$,
\[
1+S_1+S_2 \le (1+S_1)(1+S_2).
\]

Using \eqref{eq:sub-exp-7} and \eqref{eq:sub-exp-11}, whenever \eqref{eq:sub-exp-6} holds, we have
\[
\mathbb E e^{\lambda U}
\le \exp\!\left(48\frac{\lambda^2 V}{c}\right)\,
\exp\!\left(16e^2\frac{\lambda^2 b^2}{c^2}\right)
= \exp\!\left(\frac{\lambda^2}{2}\Bigl(\frac{96}{c}V+\frac{32e^2}{c^2}b^2\Bigr)\right).
\]

Define
\[
b' := \frac{8e}{c}\,b,
\qquad
V' := \frac{96}{c}V+\frac{32e^2}{c^2}b^2.
\]
Then (6) is exactly $|\lambda|\le 1/b'$, and we have shown
\[
\mathbb E\bigl[e^{\lambda U}\bigr] \le \exp\!\left(\frac{\lambda^2 V'}{2}\right)
\quad\text{for all }|\lambda|\le \frac{1}{b'}.
\]
So $U$ is sub-exponential with parameters $(V',b')$.

Finally, if one desires a single constant multiplying both $(V+b^2)$ and $b$, one can take
\[
C \;:=\; \max\!\left\{\frac{96}{c},\,\frac{32e^2}{c^2},\,\frac{8e}{c}\right\},
\]
so that $V' \le C(V+b^2)$ and $b'\le Cb$. This gives exactly the stated form:
\[
\mathbb E\bigl[e^{\lambda U}\bigr] \le \exp\!\left(\frac{\lambda^2\,C(V+b^2)}{2}\right)
\quad\text{for }|\lambda|\le \frac{1}{Cb}.
\]
\end{proof}
\section{Gaussian Width Properties}\label{appendix:gaussian-width-properties}

For a design distribution $\lambda$ over the set $\mathcal{X}$, let
\[
\Sigma(\lambda):=\mathbb E_{x\sim\lambda}[xx^\top],\qquad
f(\lambda):=\mathbb E_{g\sim\mathcal N(0,I_d)}\Big[\sup_{x\in\mathcal X}\langle x,\Sigma(\lambda)^{-1/2}g\rangle\Big].
\]
Equivalently, we have
\[
f(\lambda)=\mathbb E_{g\sim\mathcal N(0,I_d)}\Big[\sup_{x\in\mathcal X}\langle \Sigma(\lambda)^{-1/2}x, g\rangle\Big].
\]
Recall that $w(\mathcal{X}):=\inf_{\lambda \in \triangle_{\mathcal{X}}}f(\lambda)$. Now we prove the following propositions. 

\begin{proposition}
    $w(\mathcal{X})\leq d$.
\end{proposition}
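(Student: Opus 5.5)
To prove $w(\mathcal{X})\le d$, it suffices to exhibit one design $\lambda$ with $f(\lambda)\le d$, since $w(\mathcal{X})=\inf_\lambda f(\lambda)$. I will take $\lambda=\lambda_2$, the $G$-optimal design from the G-optimal design lemma in Appendix~\ref{appendix:technical-lemmas}. It satisfies $\max_{x\in\mathcal{X}}\|x\|^2_{\Sigma(\lambda_2)^{-1}}=d$, and $\Sigma(\lambda_2)$ is invertible because $\mathrm{span}(\mathcal{X})=\mathbb{R}^d$.

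The argument is Cauchy--Schwarz followed by Jensen. For each $x\in\mathcal{X}$ and $g\in\mathbb{R}^d$,
\[
\langle \Sigma(\lambda_2)^{-1/2}x, g\rangle \le \|\Sigma(\lambda_2)^{-1/2}x\|_2\,\|g\|_2 = \|x\|_{\Sigma(\lambda_2)^{-1}}\|g\|_2 \le \sqrt{d}\,\|g\|_2 .
\]
Taking the supremum over $x$ and then the expectation over $g\sim\mathcal{N}(0,I_d)$ gives
\[
f(\lambda_2)\le \sqrt{d}\,\mathbb{E}\|g\|_2 \le \sqrt{d}\,\sqrt{\mathbb{E}\|g\|_2^2}=\sqrt{d}\cdot\sqrt{d}=d .
\]
Hence $w(\mathcal{X})\le f(\lambda_2)\le d$.

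There is no real obstacle here. The only point to check is that the $G$-optimal value is exactly $d$ (Kiefer--Wolfowitz), which the cited lemma supplies along with attainment by a finitely supported $\lambda_2$. Without attainment, a near-optimal design would give $d+o(1)$, which still suffices after taking the infimum. Measurability of the supremum is harmless: $\mathcal{X}$ is compact and the map $x\mapsto\langle x,\cdot\rangle$ is continuous, so the supremum is a maximum over a compact set.
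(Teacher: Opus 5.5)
Your proof is correct and follows the paper's argument essentially step for step. Both take the $G$-optimal design and apply Cauchy--Schwarz to get $\sup_{x\in\mathcal{X}}\langle \Sigma(\lambda_G)^{-1/2}x,g\rangle\le\sqrt{d}\,\|g\|_2$, then finish with Jensen via $\mathbb{E}\|g\|_2\le\sqrt{\mathbb{E}\|g\|_2^2}=\sqrt{d}$.
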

\begin{proof}

Let $\lambda_G$ be a G-optimal design, that is a distribution that minimizes $\sup_{x\in\mathcal X} x^\top \Sigma(\lambda)^{-1}x$.
\[
\lambda_G \in \arg\min_\lambda\ \sup_{x\in\mathcal X} x^\top \Sigma(\lambda)^{-1}x.
\]

For compact $\mathcal X$ with $\mathrm{span}(\mathcal X)=\mathbb R^d$, we have the following:
\[
\sup_{x\in\mathcal X} x^\top \Sigma(\lambda_G)^{-1}x \ \le\ d.
\]

Equivalently,
\[
\sup_{x\in\mathcal X}\|\Sigma(\lambda_G)^{-1/2}x\|_2 \ \le\ \sqrt d.
\]

 For any fixed $g$, we have the following:
\[
\sup_{x\in\mathcal X}\langle x,\Sigma(\lambda_G)^{-1/2}g\rangle
=\sup_{x\in\mathcal X}\langle \Sigma(\lambda_G)^{-1/2}x, g\rangle
\le \Big(\sup_{x\in\mathcal X}\|\Sigma(\lambda_G)^{-1/2}x\|_2\Big)\,\|g\|_2
\le \sqrt d\,\|g\|_2.
\]

Take expectation:
\[
f(\lambda_G)\le \sqrt d\ \mathbb E\|g\|_2\leq \sqrt d\ \sqrt{\mathbb E\|g\|_2^2}=d.
\]

Therefore, we have the following:
\[
 w(\mathcal{X})=\inf_{\lambda\in\triangle_{\mathcal{X}}} f(\lambda) \le f(\lambda_G) \le d.\
\]
\end{proof}

\begin{proposition}
    For a finite set $\mathcal{X}$ with $|\mathcal X|=m$, we have $w(\mathcal{X})\leq O(\sqrt{d\log m})$.
\end{proposition}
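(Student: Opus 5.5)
We reuse the G-optimal design argument from the preceding proposition and replace the Cauchy--Schwarz step with the standard maximal inequality for finitely many Gaussians. Let $\lambda_G$ be a G-optimal design, so that $\sup_{x\in\mathcal X} x^\top \Sigma(\lambda_G)^{-1}x \le d$; this exists by the G-optimal design lemma for compact spanning $\mathcal X$, and a finite set is compact. Since $w(\mathcal X)=\inf_\lambda f(\lambda)\le f(\lambda_G)$, it suffices to bound
\[
f(\lambda_G)=\mathbb E_{g\sim\mathcal N(0,I_d)}\Big[\max_{x\in\mathcal X} Y_x\Big],\qquad Y_x:=\langle \Sigma(\lambda_G)^{-1/2}x, g\rangle .
\]

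Each $Y_x$ is a centered Gaussian with variance $\sigma_x^2=\|\Sigma(\lambda_G)^{-1/2}x\|_2^2=x^\top\Sigma(\lambda_G)^{-1}x\le d$. The family is finite with $m$ members and is typically correlated, but the maximal inequality needs no independence. For any $\beta>0$, Jensen's inequality applied to $\exp$ gives
\[
\exp\!\Big(\beta\,\mathbb E\max_x Y_x\Big)\le \mathbb E \exp\!\Big(\beta\max_x Y_x\Big)\le \sum_{x\in\mathcal X}\mathbb E e^{\beta Y_x}=\sum_{x} e^{\beta^2\sigma_x^2/2}\le m\,e^{\beta^2 d/2}.
\]
Taking logarithms yields $\mathbb E\max_x Y_x\le \frac{\log m}{\beta}+\frac{\beta d}{2}$. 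Choosing $\beta=\sqrt{2\log m/d}$ gives $\mathbb E\max_x Y_x\le \sqrt{2d\log m}$.

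Hence $w(\mathcal X)\le f(\lambda_G)\le\sqrt{2d\log m}=O(\sqrt{d\log m})$. Two small points remain. First, when $m=1$ the choice $\beta=0$ is degenerate, but then the maximum has mean $0$, so the bound holds trivially. Since $\mathrm{span}(\mathcal X)=\mathbb R^d$ forces $m\ge d$, for $d\ge2$ we in fact have $\log m\ge\log 2>0$. Second, the only real input beyond the maximal inequality is the uniform variance bound $d$ from G-optimality, which is already stated earlier in the paper. No step is a genuine obstacle; the key observation is simply that the G-optimal design controls every $\sigma_x$ simultaneously.
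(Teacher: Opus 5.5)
Your proof is correct and follows the paper's route. Both arguments take the G-optimal design $\lambda_G$, so that every $\langle \Sigma(\lambda_G)^{-1/2}x, g\rangle$ is a centered Gaussian with variance at most $d$, and then apply a maximal inequality over the $m$ points. The only difference is that the paper uses a union bound on the tails integrated beyond $t_0=\sqrt{2d\log m}$, which adds an extra $d/t_0$ term; your Jensen/MGF argument gives the cleaner bound $\sqrt{2d\log m}$ directly and also handles the degenerate case $m=1$.
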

\begin{proof}
Define $u_x:=\Sigma(\lambda_G)^{-1/2}x$. Then $\|u_x\|_2^2\le d$ for all $x\in\mathcal{X}$.

If $g\sim \mathcal{N}(0,I_d)$, then for each $x\in\mathcal{X}$, $Z_x:=\langle u_x,g\rangle$ is Gaussian with variance $\|u_x\|_2^2\le d$, so for $t\ge 0$,
\[
\Pr(Z_x\ge t)\le \exp\!\left(-\frac{t^2}{2d}\right).
\]

Now by applying the union bound, we get the following:
\[
\Pr\Big(\max_{x\in\mathcal X} Z_x \ge t\Big)\le m\exp\!\left(-\frac{t^2}{2d}\right).
\]

Using the fact $\mathbb E[\max_{x\in\mathcal{X}} Z_x]=\int_0^\infty \Pr(\max_{x\in\mathcal{X}} Z_x\ge t)\,dt$ and choosing $t_0=\sqrt{2d\log m}$, we get the following:
\[
\mathbb E[\max_{x\in\mathcal{X}} Z_x]\le t_0+\int_{t_0}^\infty m e^{-t^2/(2d)}\,dt\leq t_0+m\cdot \frac{d}{t_0} e^{-t_0^2/(2d)}
= t_0+ \frac{d}{t_0}\leq O(\sqrt{d\log m}).
\]

Therefore, we have the following:
\[
 w(\mathcal{X})=\inf_{\lambda\in\triangle_{\mathcal{X}}} f(\lambda) \le f(\lambda_G) \le O(\sqrt{d\log |\mathcal{X}|}).\
\]
\end{proof}

Next, recall the definition of $\lambda_1$, $\mathcal{R}$, $w(\lambda_1,\mathcal{R},\mathcal{X})$ for the finite set $\mathcal{X}$ in Appendix \ref{appendix:adaptive-version}.
\begin{proposition}\label{prop:partition-gaussian-width}
    For a finite set $\mathcal{X}$ with $|\mathcal X|=m\geq d$, we have $w(\lambda_1,\mathcal{R},\mathcal{X})\leq O(\sqrt{d\log(m/d)})$.
\end{proposition}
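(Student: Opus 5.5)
The plan is to reduce the region-wise bound to a uniform variance bound for the Gaussian process induced by $\lambda_1$, and then apply a union bound inside each region. Write $A_1:=A(\lambda_1;\mathcal{X})$ and, for $\eta\sim\mathcal{N}(0,I_d)$, set $Z_x:=\langle x,A_1^{-1/2}\eta\rangle$. Each $Z_x$ is a centered Gaussian with variance $\sigma_x^2=\|x\|^2_{A_1^{-1}}$. Since $|R_i|=m/d$ for every $i\in[d]$, the standard maximal inequality for finitely many centered Gaussians gives
\[
\mathbb{E}\Big[\max_{x\in R_i}Z_x\Big]\le \Big(\max_{x\in R_i}\sigma_x\Big)\sqrt{2\log(m/d)}
\]
(with $\log(m/d)$ replaced by $(\log(m/d))_+$ when $m/d$ is small). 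So it suffices to prove $\max_{x\in\mathcal{X}}\|x\|^2_{A_1^{-1}}=O(d)$. For the G-optimal design $\lambda_G$ this is exactly the Kiefer--Wolfowitz bound $\le d$ used in the propositions above. The real work is transferring it to $\lambda_1$, which only minimizes the full width $f(\lambda)$.

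For this transfer, the first step is a crude bound on increments, which requires no optimality. For $x,y\in\mathcal{X}$ we have $\mathbb{E}\max_{z\in\mathcal{X}}Z_z\ge \mathbb{E}\max(Z_x,Z_y)=\mathbb{E}[Z_y]+\mathbb{E}[(Z_x-Z_y)_+]=\mathrm{sd}(Z_x-Z_y)/\sqrt{2\pi}$. Hence every increment has standard deviation at most $\sqrt{2\pi}\,w(\mathcal{X})=O(\sqrt{d\log m})$, using the finite-set proposition. Centering at any $x_0\in R_i$ and running the same union bound on the increments gives the region bound $O(\sqrt{d\log m\cdot\log(m/d)})$. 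Trivially we also have $\mathbb{E}\max_{R_i}Z_x\le w(\mathcal{X})$. This is the fallback estimate.

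The second step, which is the main obstacle, is to upgrade this to $\sigma_x^2=O(d)$ using the optimality of $\lambda_1$. I would first try a first-order (KKT) argument. The map $A\mapsto \mathbb{E}\max_x\langle x,A^{-1/2}g\rangle$ is monotone and $(-1/2)$-homogeneous, in the sense used in Lemma~\ref{lem:rounding-single}. Stationarity of $\lambda_1$ against the perturbation $\lambda_t=(1-t)\lambda_1+t\,\delta_z$, for each $z\in\mathcal{X}$, should give an inequality of the form $\mathbb{E}\big[\langle x^*_g,\,D_{A_1}(z z^\top)\,g\rangle\big]\le \tfrac12 f(\lambda_1)$. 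Here $x^*_g$ is the maximizer and $D_{A_1}$ is the derivative of $A\mapsto A^{-1/2}$. I would then lower bound the left side by a constant times $\|z\|_{A_1^{-1}}\cdot$ (something comparable to $f(\lambda_1)/\sqrt d$). The idea is Gaussian integration by parts in the direction $A_1^{-1/2}z$, which relates this expectation to $\|z\|_{A_1^{-1}}$ times the correlation of $x^*_g$ with that direction. If the derivative term is too unwieldy, I would instead compare $\lambda_1$ with the mixture $\tfrac12\lambda_1+\tfrac12\lambda_G$. By Sudakov--Fernique its width is at most $\sqrt2 f(\lambda_1)$, and its variances are at most $2d$. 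I would then try to show that $\lambda_1$ being a minimizer forces the variances of $\lambda_1$ itself to be within a constant of $d$, arguing by contradiction: a point $z$ with $\sigma_z^2\gg d$ would let a small shift of mass toward $z$ strictly decrease $f$.

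Once $\max_x\sigma_x^2\le Cd$ is established, the first display gives $w(\lambda_1,\mathcal{R},\mathcal{X})=\max_i\mathbb{E}\max_{x\in R_i}Z_x\le \sqrt{2Cd\log(m/d)}=O(\sqrt{d\log(m/d)})$, as claimed.
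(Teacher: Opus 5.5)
\textbf{Gap: the uniform variance bound for $\lambda_1$ is false.} Your first display is correct. Combined with the Kiefer--Wolfowitz bound $\|x\|^2_{A(\lambda_G)^{-1}}\le d$, it is exactly the paper's argument: set $u_x=\Sigma(\lambda_G)^{-1/2}x$ and union-bound over the $m/d$ points of $R_i$. The gap is your second step. You never derive the lemma $\max_{x\in\mathcal X}\|x\|^2_{A_1^{-1}}=O(d)$ for the width-minimizer $\lambda_1$; the stationarity inequality and its lower bound are only announced. Moreover, the lemma is false in general. Here is a counterexample.
\begin{itemize}
\item Take $d=n+1$ and $\mathcal X=S\cup\{e_d\}$, where $S$ is a finite $\frac12$-net of the unit sphere of $\mathrm{span}(e_1,\dots,e_n)$.
\item If a design puts mass $w$ on $e_d$, then $A$ is block diagonal and $Z_{e_d}=w^{-1/2}g_d$ is independent of $M_S:=\max_{s\in S}Z_s$.
\item We have $M_S\ge\frac78\|A_S^{-1/2}g_{1:n}\|_2$ with $\mathrm{tr}(A_S)\le 1$. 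A Laplace-transform bound then gives $M_S\ge 7n/16$ outside an event of probability $e^{-\Omega(n)}$, so $\mathbb E M_S\ge c\,n$.
\item Lower the mass on $e_d$ to $w'=20\log n/n^2$ and rescale the $S$-part. This lowers $\mathbb E M_S$ by at least $\frac{w-w'}{2}\mathbb E M_S$, which is bounded below by a constant whenever $w\ge\kappa/n$.
\item The extra cost $\mathbb E(w'^{-1/2}g_d-M_S)_+$ is $o(1)$.
\end{itemize}
Hence every minimizer has $w=o(1/n)$, that is, $\|e_d\|^2_{A_1^{-1}}=1/w\gg d$.

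This is exactly the case your plan cannot handle. The point $e_d$ essentially never attains the maximum, so perturbing the design along $e_d$ barely moves the maximizer. Consequently, for $z=e_d$ the left side of your stationarity inequality is at most $\frac12 f(\lambda_1)$, whereas $\|e_d\|_{A_1^{-1}}f(\lambda_1)/\sqrt d\gg f(\lambda_1)$. So the proposed integration-by-parts lower bound cannot hold. For the same reason, shifting mass toward a high-variance point increases $f$ rather than decreasing it, so your contradiction argument fails too. The example does not refute the proposition itself, since there $\log(m/d)\asymp n$; it refutes only your intermediate lemma.

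Your fallback bounds do not rescue the argument. They give $O(\sqrt{d\log m\,\log(m/d)})$ and $w(\mathcal X)=O(\sqrt{d\log m})$, which fall short of $O(\sqrt{d\log(m/d)})$ exactly in the regime where the statement has content, $\log(m/d)\ll\log m$. For example, when $m=2d$ the statement requires $O(\sqrt d)$.

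\textbf{Comparison with the paper.} You correctly identified that passing from $\lambda_G$ to $\lambda_1$ is where the difficulty lies. The paper does not carry out this passage either. Its last line bounds $w(\lambda_1,\mathcal R,\mathcal X)$ by a region width computed under $\lambda_G$, and that does not follow from $\lambda_1$ minimizing the full width. The remedy is to change the design, not to prove a uniform variance bound for $\lambda_1$. The algorithm in Appendix~\ref{appendix:adaptive-version} samples from $\lambda_0=\frac12\lambda_1+\frac12\lambda_2$, and the paper already shows $\|x\|^2_{A(\lambda_0)^{-1}}\le 2d$ for all $x$. Your first display therefore gives $w(\lambda_0,\mathcal R,\mathcal X)=O(\sqrt{d\log(m/d)})$ directly. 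That is all the sample-complexity calculation needs, once $T$ is chosen in terms of $w(\lambda_0,\mathcal R,\mathcal X)$ instead of $w(\lambda_1,\mathcal R,\mathcal X)$. Whether the bound holds for $\lambda_1$ itself is settled by neither your argument nor the paper's.
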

\begin{proof}
For simplicity of presentation, assume that $m$ is a multiple of $d$. Fix an index $i\in [d]$. Define $u_x:=\Sigma(\lambda_G)^{-1/2}x$. Then $\|u_x\|_2^2\le d$ for all $x\in\mathcal{X}$.

If $g\sim \mathcal{N}(0,I_d)$, then for each $x\in\mathcal{X}$, $Z_x:=\langle u_x,g\rangle$ is Gaussian with variance $\|u_x\|_2^2\le d$, so for $t\ge 0$,
\[
\Pr(Z_x\ge t)\le \exp\!\left(-\frac{t^2}{2d}\right).
\]

Now by applying the union bound, we get the following:
\[
\Pr\Big(\max_{x\in R_i} Z_x \ge t\Big)\le (m/d)\exp\!\left(-\frac{t^2}{2d}\right).
\]

Using the fact $\mathbb E[\max_{x\in R_i} Z_x]=\int_0^\infty \Pr(\max_{x\in R_i} Z_x\ge t)\,dt$ and choosing $t_0=\sqrt{2d\log (m/d)}$, we get the following:
\[
\mathbb E[\max_{x\in R_i} Z_x]\le t_0+\int_{t_0}^\infty (m/d) e^{-t^2/(2d)}\,dt\leq t_0+(m/d)\cdot \frac{d}{t_0} e^{-t_0^2/(2d)}
= t_0+ \frac{d}{t_0}\leq O(\sqrt{d\log (m/d)}).
\]

Therefore, we have the following:
\[
w(\lambda_1,\mathcal{R},\mathcal{X}) \le \max_{i\in[d]}\mathbb E[\max_{x\in R_i} Z_x] \le O(\sqrt{d\log(m/d)}).\
\]
\end{proof}

\begin{proposition}
    $w(\mathcal{X})\geq \Omega(\sqrt{d\log d})$.
\end{proposition}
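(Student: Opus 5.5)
The plan is to fix an arbitrary design $\lambda\in\triangle_{\mathcal{X}}$ with $\Sigma(\lambda)$ invertible (otherwise $f(\lambda)=+\infty$ by convention and there is nothing to prove). I will show $f(\lambda)\ge c\sqrt{d\log d}$ for an absolute constant $c>0$, and then take the infimum over $\lambda$. Write $u_x:=\Sigma(\lambda)^{-1/2}x$, so that $f(\lambda)=\mathbb{E}_g[\sup_{x\in\mathcal{X}}\langle u_x,g\rangle]$. This is the expected supremum of the centered Gaussian process $X_x=\langle u_x,g\rangle$, whose canonical metric is $\|u_x-u_y\|_2$. The key structural fact is isotropy:
\[
\mathbb{E}_{x\sim\lambda}[u_xu_x^\top]=\Sigma(\lambda)^{-1/2}\Sigma(\lambda)\Sigma(\lambda)^{-1/2}=I_d .
\]
Consequently, for every subspace $S$ of dimension $k$ we have $\mathbb{E}_{x\sim\lambda}\|P_{S^\perp}u_x\|_2^2=\mathrm{tr}(P_{S^\perp})=d-k$.

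The first step is to build a well-separated subset of $\{u_x\}$ greedily. Set $S_0=\{0\}$. For $j=1,\dots,m:=\lfloor d/2\rfloor$, use the identity above to pick $x_j\in\mathcal{X}$ with $\|P_{S_{j-1}^\perp}u_{x_j}\|_2^2\ge d-(j-1)\ge d/2$; such an $x_j$ exists because the $\lambda$-average of this quantity equals $d-(j-1)$. Then set $S_j=\mathrm{span}(S_{j-1},u_{x_j})$. For $i<j$ we have $u_{x_i}\in S_{j-1}$, hence
\[
\|u_{x_j}-u_{x_i}\|_2\ \ge\ \|P_{S_{j-1}^\perp}(u_{x_j}-u_{x_i})\|_2=\|P_{S_{j-1}^\perp}u_{x_j}\|_2\ \ge\ \sqrt{d/2}.
\]
So $\{u_{x_1},\dots,u_{x_m}\}$ is a $\sqrt{d/2}$-separated set of size $m\ge d/2-1$.

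The second step applies Sudakov's minoration to the finite subprocess $(X_{x_j})_{j\le m}$. Sudakov's minoration states that a centered Gaussian process indexed by an $\epsilon$-separated set of $N$ points (in the canonical metric) has $\mathbb{E}\sup\ge c'\epsilon\sqrt{\log N}$; this applies to the one-sided supremum, with no absolute value needed. Since restricting to a subset can only decrease the supremum, this gives
\[
f(\lambda)\ \ge\ \mathbb{E}\max_{j\le m}\langle u_{x_j},g\rangle\ \ge\ c'\sqrt{d/2}\,\sqrt{\log(d/2-1)}=\Omega(\sqrt{d\log d})
\]
for $d\ge 4$; the small-$d$ cases are absorbed into the constant. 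Because the bound is uniform in $\lambda$, it passes to $w(\mathcal{X})=\inf_\lambda f(\lambda)$.

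The main obstacle is the separation construction. The tempting argument that the $u_x$ have average squared norm $d$ is not enough, since the points might cluster. The greedy projection step, which uses isotropy relative to every subspace rather than only the trace, is exactly what forces a packing of size $\Theta(d)$ at scale $\Theta(\sqrt d)$. Sudakov's minoration is a standard external tool not stated in the paper. If one prefers to avoid it, a fallback is a Slepian/Sudakov--Fernique comparison: the increments satisfy $\|u_{x_i}-u_{x_j}\|\ge\sqrt{d/2}$, so comparing with $\sqrt{d/4}$ times $m$ i.i.d.\ standard Gaussians gives the same $\Omega(\sqrt{d\log d})$ via $\mathbb{E}\max_{j\le m}\xi_j\asymp\sqrt{\log m}$.
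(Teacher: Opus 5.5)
Your proposal is correct and follows essentially the same route as the paper. Both use the isotropy $\mathbb{E}[UU^\top]=I_d$ to greedily build $\lfloor d/2\rfloor$ points of $\{\Sigma^{-1/2}x\}$ with pairwise separation $\sqrt{d/2}$ via projections onto orthogonal complements, and your Sudakov--Fernique fallback (comparison with $\sqrt{d/4}$ times i.i.d.\ standard Gaussians) is exactly the paper's final step; invoking Sudakov's minoration instead just packages that same comparison.
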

\begin{proof}
Fix any $\lambda$ with $\Sigma:=\Sigma(\lambda)\succ 0$.
Define the set
\[
T := \{\Sigma^{-1/2}x:\ x\in\mathcal X\}\subset\mathbb R^d,
\]
and define the Gaussian process $Z_t:=\langle g,t\rangle$ for $g\sim\mathcal N(0,I_d)$ over $T$.
Then we have:
\[
f(\lambda)=\mathbb E\Big[\sup_{t\in T} Z_t\Big].
\]
Let $U:=\Sigma^{-1/2}X$ where $X\sim\lambda$. Since $\mathbb E[XX^\top]=\Sigma$,
\[
\mathbb E[UU^\top]=\Sigma^{-1/2}\,\mathbb E[XX^\top]\,\Sigma^{-1/2}=I_d.
\]

Let $m:=\lfloor d/2\rfloor$ and $r:=\sqrt{d/2}$.
We now claim that there exists $t_1,\dots,t_m\in T$ such that $\|t_i-t_j\|_2\ge r$ for all $i\ne j$.

We construct them greedily. First pick $t_1$ arbitrarily. Now assume that $t_1,\dots,t_k$ have been chosen for some $1\le k<m$.
Let $V_k:=\mathrm{span}\{t_1,\dots,t_k\}$, let $P_k$ be the orthogonal projection onto $V_k$,
and let $Q_k:=I-P_k$ be the orthogonal projection onto $V_k^\perp$.
Then $Q_k$ is symmetric and idempotent ($Q_k^2=Q_k$). Moreover, for an orthogonal projection, the trace equals the dimension of the subspace it projects onto. Note that $Q_k$ projects onto $V_k^\perp$, which has dimension $d-\dim(V_k)$. Therefore
\[
\mathrm{tr}(Q_k)=\dim(V_k^\perp)=d-\dim(V_k)\ge d-k,
\]
since $\dim(V_k)\le k$.

Since $Q_k$ is an orthogonal projection,
\[
\|Q_k U\|_2^2 = U^\top Q_k^\top Q_k U=U^\top Q_k^2 U= U^\top Q_k U.
\]
Using $\mathrm{tr}(ab)=ab$ for scalars and cyclicity of trace, we have the following:
\[
\mathbb E\|Q_k U\|_2^2
= \mathbb E[U^\top Q_k U]
= \mathbb E[\mathrm{tr}(Q_k U U^\top)]
= \mathrm{tr}\big(Q_k \mathbb E[UU^\top]\big)
= \mathrm{tr}(Q_k I_d)
= \mathrm{tr}(Q_k)
\ge d-k.
\]
Therefore there exists a point $t_{k+1}\in T$ with
\[
\|Q_k t_{k+1}\|_2^2 \ge d-k.
\]
For any $i\le k$, we have $t_i\in V_k$, which implies $Q_k t_i=0$. As orthogonal projections cannot increase the norm, we have the following:
\[
\|t_{k+1}-t_i\|_2 \ge \|Q_k(t_{k+1}-t_i)\|_2
= \|Q_k t_{k+1}\|_2
\ge \sqrt{d-k}
\ge \sqrt{d/2}=r.
\]
This completes the proof of our claim.

Let $S:=\{t_1,\dots,t_m\}$. Recall that $\|t_i-t_j\|_2\ge r$ for all $i\neq j$.
Now define $Z_i:=\langle g,t_i\rangle$ where $g\sim \mathcal{N}(0,1)$.
Then $(Z_i)_{i=1}^m$ is a centered Gaussian vector and for $i\ne j$, we have:
\[
\mathbb E[(Z_i-Z_j)^2] = \|t_i-t_j\|_2^2 \ge r^2.
\]
Let $\xi_1,\dots,\xi_m$ be i.i.d.\ $\mathcal N(0,1)$ and set $Y_i:=(r/\sqrt{2})\xi_i$.
Then for $i\ne j$, we have:
\[
\mathbb E[(Y_i-Y_j)^2] = \frac{r^2}{2}\mathbb E[(\xi_i-\xi_j)^2]=\frac{r^2}{2}\cdot 2=r^2.
\]
Due to Sudakov–Fernique inequality, we have the following:
\[
\mathbb E\Big[\sup_{t\in T} Z_t\Big]\geq\mathbb E\Big[\max_{1\le i\le m} Z_i\Big]
\ge
\mathbb E\Big[\max_{1\le i\le m} Y_i\Big]
=
\frac{r}{\sqrt{2}}\,\mathbb E\Big[\max_{1\le i\le m}\xi_i\Big]\geq \Omega(\sqrt{d\log d}).
\]

This implies that $w(\mathcal{X})\geq \Omega(\sqrt{d\log d})$.
\end{proof}

\begin{proposition}
    When $\mathcal{X}$ is $\{-1,+1\}^d$, $\{0,+1\}^d$ or a unit $\ell_2$-ball, we have $w(\mathcal{X})\geq\Omega(d)$. For $m\leq d/21$, we have $w(\mathcal{X})\geq \Omega(\sqrt{md})$ when $\mathcal{X}$ is an $m$-set.
\end{proposition}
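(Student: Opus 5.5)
The plan is to fix an arbitrary design $\lambda$ with $\Sigma:=\Sigma(\lambda)\succ 0$ (if $\Sigma$ is singular, $f(\lambda)=+\infty$). I would write $h:=\Sigma^{-1/2}g\sim\mathcal N(0,\Sigma^{-1})$, so that $f(\lambda)=\mathbb E[\sup_{x\in\mathcal X}\langle x,h\rangle]$. For each set, this supremum has a closed form in terms of $h$. I would then lower bound its expectation using only the diagonal, or trace, information on $\Sigma$ that the geometry of $\mathcal X$ forces. Taking the infimum over $\lambda$ then gives the claims.

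\emph{Unit ball.} Here $\sup_{x}\langle x,h\rangle=\|h\|_2$. Since $\|x\|_2\le1$, we have $\mathrm{tr}(\Sigma)=\mathbb E\|x\|_2^2\le 1$. For a Gaussian vector, $\mathbb E\|h\|_2\ge c\sqrt{\mathbb E\|h\|_2^2}=c\sqrt{\mathrm{tr}(\Sigma^{-1})}$; this follows, for instance, from Gaussian concentration of the norm or from Kahane--Khintchine. By Cauchy--Schwarz on the eigenvalues, $\mathrm{tr}(\Sigma^{-1})\,\mathrm{tr}(\Sigma)\ge d^2$, hence $\mathrm{tr}(\Sigma^{-1})\ge d^2$ and $f(\lambda)\ge c\,d$.

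\emph{Hypercubes.} For $\{-1,+1\}^d$ we have $\sup_x\langle x,h\rangle=\|h\|_1$. For $\{0,1\}^d$ it is $\sum_i h_i^+$, whose expectation is $\tfrac12\mathbb E\|h\|_1$ by symmetry. Then $\mathbb E\|h\|_1=\sqrt{2/\pi}\sum_i\sqrt{(\Sigma^{-1})_{ii}}$. I would use the standard inequality $(\Sigma^{-1})_{ii}\ge 1/\Sigma_{ii}$, which follows from Cauchy--Schwarz: $1=(e_i^\top e_i)^2\le (e_i^\top\Sigma e_i)(e_i^\top\Sigma^{-1}e_i)$. Since $\Sigma_{ii}=\mathbb E[x_i^2]\le 1$ in both cases, each term is at least $1$, giving $f(\lambda)\ge c\,d$.

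\emph{$m$-sets.} This is the main obstacle, because the supremum is the sum of the $m$ largest coordinates of $h$, and a single-coordinate bound is too weak. Let $p_i:=\Sigma_{ii}=\Pr_\lambda(x_i=1)$, so that $\sum_ip_i=m$. The key observation is that for any $m$ disjoint pairs $\{j_\ell,k_\ell\}$, the top-$m$ sum dominates $\sum_\ell\max(h_{j_\ell},h_{k_\ell})$, since these are $m$ distinct coordinates. Moreover,
\[
\mathbb E\max(h_j,h_k)=\tfrac12\mathbb E|h_j-h_k|=\tfrac{1}{\sqrt{2\pi}}\sqrt{(e_j-e_k)^\top\Sigma^{-1}(e_j-e_k)}.
\]
By the same Cauchy--Schwarz trick, this variance is at least $\|e_j-e_k\|^4/\bigl((e_j-e_k)^\top\Sigma(e_j-e_k)\bigr)\ge 4/(p_j+p_k)$, using $\Sigma_{jk}=\Pr(x_j=x_k=1)\ge0$. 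Now pair up the coordinates arbitrarily into $\lfloor d/2\rfloor$ pairs. Since $\sum_i p_i=m$, Markov's inequality implies that at least half of the pairs satisfy $p_j+p_k\le 4m/\lfloor d/2\rfloor\lesssim 8m/d$. The hypothesis $m\le d/21$ guarantees there are at least $m$ such good pairs. Summing over $m$ good pairs gives $f(\lambda)\ge c\,m\sqrt{d/m}=c\sqrt{md}$. The one thing to double-check is the constant bookkeeping: that $d/21$ indeed leaves $m$ good pairs after the floor and Markov losses, and otherwise I would adjust the Markov threshold.
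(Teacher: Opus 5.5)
Your proof is correct, and it takes a genuinely different route from the paper.

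\textbf{The paper's route.} The paper never evaluates $w(\mathcal{X})$ for these sets; it sandwiches it. The non-adaptive algorithm of Theorem~\ref{thm:adaptive-upper-bound-algo} succeeds with $O\bigl((d\log(1/\delta)+w(\mathcal{X})^2)/\varepsilon^2\bigr)$ samples. At constant $\delta$, the adaptive lower bounds of Table~\ref{table:1} force $\Omega(d^2/\varepsilon^2)$, resp.\ $\Omega(md/\varepsilon^2)$, samples. These are the \cite{chen2024assouad}-based bound for the ball, the sketched hypercube bound, and the $m$-set theorem, whose hypothesis $d-m+1\ge 20m$ is where the $d/21$ comes from. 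Comparing the two gives the claim.

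\textbf{Your route.} You bound $\mathbb{E}\sup_{x\in\mathcal{X}}\langle x,h\rangle$, with $h\sim\mathcal{N}(0,\Sigma^{-1})$, directly. You use only the trace/diagonal constraints the geometry imposes on $\Sigma$, the inequality $v^\top\Sigma^{-1}v\ge\|v\|_2^4/(v^\top\Sigma v)$, and the pairing trick for the top-$m$ sum. All steps check out, including $\Sigma_{jk}\ge 0$ and the count of good pairs under $m\le d/21$.

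\textbf{What each buys.} Your argument is self-contained and elementary, gives explicit constants, and does not lean on external or only sketched lower bounds. It also avoids a wrinkle in the paper's comparison: at constant $\delta$ that comparison only yields $w(\mathcal{X})^2\ge c\,md-Cd$, so small $m$ must be covered by the generic $\Omega(\sqrt{d\log d})$ bound. Your argument also shows the $d/21$ restriction is an artifact, since your pairing needs only $m\le\lfloor d/2\rfloor/2$. The paper's route, in exchange, costs nothing beyond lower bounds it proves anyway. It also makes the conceptual point that these width bounds are forced by information-theoretic limits.

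\textbf{One minor caveat.} Lipschitz concentration alone gives only $(\mathbb{E}\|h\|_2)^2\ge\mathrm{tr}(\Sigma^{-1})-\|\Sigma^{-1}\|_{\mathrm{op}}$. That route therefore needs a separate case when one eigenvalue dominates. The cleanest justification of $\mathbb{E}\|h\|_2\ge c\sqrt{\mathrm{tr}(\Sigma^{-1})}$ is $\mathbb{E}\|h\|_2^4\le 3(\mathbb{E}\|h\|_2^2)^2$ combined with H\"older, which gives $c=1/\sqrt{3}$.
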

\begin{proof}
    Recall that in Section \ref{sec:upper}, we described an algorithm with an upper bound of $\mathcal{O}\left(\frac{d\log(1/\delta)}{\varepsilon^2}+\frac{w(\mathcal{X})^2}{\varepsilon^2}\right).$
    When $\mathcal{X}$ is $\{-1,+1\}^d$, $\{0,+1\}^d$ or a unit $\ell_2$-ball, our adaptive lower bound results in Table \ref{table:1} imply that $w(\mathcal{X})\geq \Omega(d)$. Similarly when $\mathcal{X}$ is an $m$-set, our adaptive lower bound results in Table \ref{table:1} imply that $w(\mathcal{X})\geq \Omega(\sqrt{md})$ for $m\leq d/21$.
\end{proof}

\section{Gaussian Width Calculations for Multi-Task MAB}\label{appendix:gaussian-width-multi-task-MAB}
Recall that \(d=\sum_{j=1}^m d_j\). Consider the blocks \(B_j:=\{d_{1:j-1}+1,\dots,d_{1:j}\}\).  Recall that the action set is
\begin{equation}\label{eq:action-set}
\mathcal{X}:=\Big\{x\in\{0,1\}^d:\ \forall j\in[m],\ \sum_{i\in B_j}x_i=1\Big\}.\nonumber
\end{equation}

Let $r := d-m+1$. Since $\mathcal X$ is $r$-dimensional, it is contained in an $r$-dimensional linear subspace
$ \operatorname{span}(\mathcal X)\subseteq \mathbb R^d$.
Let $U\in\mathbb R^{d\times r}$ be a matrix whose columns form an orthonormal basis of $\operatorname{span}(\mathcal X)$, that is $U^\top U = I_r$ and $\operatorname{range}(U)=\operatorname{span}(\mathcal X)$.
Define the coordinate representation of $\mathcal X$ in $\mathbb R^r$ by
\[
\mathcal X_r := \{\,U^\top x : x\in \mathcal X\,\}\subseteq \mathbb R^r.
\]
Then the maps
\[
x \mapsto U^\top x \quad\text{and}\quad z \mapsto Uz
\]
are inverse to each other on $\mathcal X$ and $\mathcal X_r$, respectively. In particular, for every $x\in \mathcal X$,
\[
x = U(U^\top x),
\]
so $U^\top x$ is the unique coordinate vector of $x$ w.r.t the orthonormal basis $\{U_1,\ldots,U_r\}$ where $U_j$ is the $j$-th column of $U$.

We now construct one such matrix $U$. Define the vector $\mu\in\mathbb R^d$ by setting, for each $i\in\{1,\ldots,d\}$,
\[
\mu_i := \frac{1}{d_j}\quad \text{where $j$ is the unique index such that } i\in B_j.
\]
Then, we have:
\[
S := \|\mu\|_2^2 = \sum_{i=1}^d \mu_i^2 = \sum_{j=1}^m d_j\left(\frac{1}{d_j}\right)^2
= \sum_{j=1}^m \frac{1}{d_j},
\]
and we define the unit vector:
\[
u_0 := \frac{\mu}{\|\mu\|_2} = \frac{\mu}{\sqrt S}.
\]
We now define a matrix \(H_n\in\mathbb R^{n\times(n-1)}\). For \(n\ge 2\), define \((H_n)_{r,k}\) for \(\ell\in[n]\), \(k\in[n-1]\) by
\begin{equation}\label{eq:helmert-def}
(H_n)_{\ell,k}=
\begin{cases}
\frac{1}{\sqrt{k(k+1)}} & 1\le \ell\le k,\\[4pt]
-\frac{k}{\sqrt{k(k+1)}} & \ell=k+1,\\[4pt]
0 & \ell\ge k+2.\nonumber
\end{cases}
\end{equation}
Then, we have:
\begin{equation}\label{eq:helmert-props}
H_n^\top H_n=I_{n-1},\qquad H_n^\top \mathbf 1_n=\mathbf{0}_{n-1},
\qquad
H_n H_n^\top = I_n-\frac1n\mathbf 1_n\mathbf 1_n^\top.
\end{equation}
Let \(s_j:=d_{1:j-1}\). We define the entries of the matrix \(Q_j\in\mathbb R^{d\times(d_j-1)}\) as:
\begin{equation}\label{eq:Qj-embed}
(Q_j)_{i,k}=
\begin{cases}
(H_{d_j})_{\,i-s_j,\ k} & i\in B_j,\\
0 & i\notin B_j,
\end{cases}
\qquad k\in[d_j-1].
\end{equation}
Now we define:
\begin{equation}\label{eq:U-def}
U:=\big[u_0\ \ Q_1\ \cdots\ Q_m\big]\in\mathbb R^{d\times r},\qquad
r:=1+\sum_{j=1}^m(d_j-1)=d-m+1.\nonumber
\end{equation}
As \(Q_j,Q_\ell\) have disjoint supports  for \(j\neq \ell\) and due to Eq. \eqref{eq:helmert-props}, we get:
\begin{equation}\label{eq:Q-orth}
Q_j^\top Q_j = H_{d_j}^\top H_{d_j}=I_{d_j-1},\qquad
Q_j^\top Q_\ell=0\ (j\neq \ell).\nonumber
\end{equation}
For any matrix $A$, let $A_{i,:}$ denote the $i$-th row of the matrix $A$. Since \(u_0\) is constant on block \(B_j\), we have:
\begin{align*}\label{eq:u0-Qj-orth}
u_0^\top Q_j
&=\sum_{i\in B_j}(u_0)_i (Q_j)_{i,:}
=\frac{1/d_j}{\sqrt S}\sum_{r=1}^{d_j}(H_{d_j})_{r,:}
=\frac{1/d_j}{\sqrt S}\,\mathbf 1_{d_j}^\top H_{d_j}
=\mathbf{0}_{n-1}.
\end{align*}
Hence
\begin{equation}\label{eq:U-orth}
U^\top U = I_r.\nonumber
\end{equation}
Next, we prove that \(\operatorname{range}(U)=\operatorname{span}(\mathcal X)\). Towards that, define
\begin{equation}\label{eq:Uj-def}
R_j:=\Big\{v\in\mathbb R^d:\operatorname{supp}(v)\subseteq B_j,\ \mathbf 1_{B_j}^\top v=0\Big\}.\nonumber
\end{equation}
where $\mathbf 1_{B_j}$ is the indicator vector of the block $B_j$. From \eqref{eq:helmert-props}–\eqref{eq:Qj-embed}, we get:
\begin{equation}\label{eq:range-Qj}
\operatorname{range}(Q_j)=R_j.\nonumber
\end{equation}
Also \(\mu\in\operatorname{span}(\mathcal{X})\) and $\forall j,\ \forall k\in[d_j],
e_{s_j+k}-e_{s_j+1} \in \operatorname{span}(\mathcal{X})$, so \(R_j\subseteq \operatorname{span}(\mathcal X)\) (since \(\{e_{s_j+k}-e_{s_j+1}\}_{k=2}^{d_j}\) spans \(R_j\)). Thus
\begin{equation}\label{eq:span-inclusion-1}
\operatorname{span}(\mu)\oplus\Big(\bigoplus_{j=1}^m R_j\Big) \subseteq \operatorname{span}(\mathcal X).\nonumber
\end{equation}
where $\oplus$ means direct sum of vector spaces.

Conversely, for any \(x\in \mathcal X\), we have:
\begin{equation}\label{eq:span-inclusion-2}
\forall j\in[m]:\ \mathbf 1_{B_j}^\top(x-\mu)=1-1=0
\quad\Rightarrow\quad x-\mu\in \bigoplus_{j=1}^m R_j,
\quad\Rightarrow\quad x\in \operatorname{span}(\mu)\oplus\Big(\bigoplus_{j=1}^m R_j\Big).\nonumber
\end{equation}
Hence
\begin{equation}\label{eq:spanX-rangeU}
\operatorname{span}(\mathcal X)=\operatorname{span}(\mu)\oplus\Big(\bigoplus_{j=1}^m R_j\Big)
=\operatorname{range}(U),\text{ and indeed } \dim(\operatorname{span}(\mathcal X))=r=d-m+1.\nonumber
\end{equation}
Now the gaussian width term $w(\mathcal{X})$ is defined using the coordinate representation of $\mathcal X$ in $\mathbb R^r$ as follows:
\[
w(\mathcal{X}):=\inf_{\lambda\in\triangle_{\mathcal{X}_r}}\mathbb E_{g_r\sim \mathcal{N}(0,I_r)}\Big[\sup_{x\in \mathcal{X}}\ \langle U^\top x,\ A(\lambda;\mathcal{X}_r)^{-1/2}g_r\rangle\Big]
\]
Fix a probability distribution $\lambda$ on $\mathcal X_r$ such that, if $Z\sim\lambda$ and we map to $\mathcal X$ via
\[
X := UZ \in \mathcal X \subseteq \mathbb R^d,
\]
then the following holds: for each block $B_k$, exactly one coordinate index $j_k \in B_k$ is selected uniformly at random, and the selections $\{j_k\}_{k=1}^m$ are independent across blocks. In other words, viewed in $\mathbb R^d$, $X$ is distributed as a vector obtained by choosing one coordinate uniformly within each block, independently across blocks.  Now define:
\begin{equation}\label{eq:sigma-def}
\Sigma_r:=\mathbb E_{x\sim\lambda}[xx^\top],\nonumber
\end{equation}
where $\Sigma_r\in \mathbb{R}^{r\times r}$. Now the Gaussian width of $\mathcal{X}$ is upper bounded as follows:
\begin{equation}\label{eq:gaussian-width}
w(\mathcal{X})\leq w(\mathcal{X}_r;\Sigma_r):=\mathbb E_{g_r\sim \mathcal{N}(0,I_r)}\Big[\sup_{x\in \mathcal{X}}\ \langle U^\top x,\ \Sigma_r^{-1/2}g_r\rangle\Big]\nonumber
\end{equation}
Observe that \(\Sigma_r := U^\top \Sigma U\) where $\Sigma:=\mathbb{E}_{x\sim\lambda}[Ux(Ux)^\top]$. Due to the definition of $\lambda$, for \(i\in B_j\), \(k\in B_\ell\), we have:
\begin{equation}\label{eq:Sigma-entries}
\Sigma_{ik}=
\begin{cases}
\frac1{d_j}\mathbf 1\{i=k\} & j=\ell,\\[3pt]
\frac1{d_j d_\ell} & j\neq \ell.
\end{cases}\nonumber
\end{equation}
Let \(v\in R_j\) (\(\mathbf 1_{B_j}^\top v=0\), support in \(B_j\)). For \(i\in B_j\),
\begin{align*}\label{eq:Sigma-on-Uj}
(\Sigma v)_i
&=\sum_{k\in B_j}\frac1{d_j}\mathbf 1\{i=k\}v_k
+\sum_{\ell\neq j}\sum_{k\in B_\ell}\frac1{d_j d_\ell}v_k
=\frac1{d_j}v_i+0,
\end{align*}
so \(\Sigma v=\frac1{d_j}v\). Therefore
\begin{equation}\label{eq:Q-blocks}
Q_j^\top \Sigma Q_j=\frac1{d_j}Q_j^\top Q_j=\frac1{d_j}I_{d_j-1},
\qquad
Q_j^\top \Sigma Q_\ell=0\ (j\neq \ell).\nonumber
\end{equation}
For \(i\in B_j\),
\begin{align*}\label{eq:Sigma-mu}
(\Sigma\mu)_i
&=\sum_{k\in B_j}\frac1{d_j}\mathbf 1\{i=k\}\frac1{d_j}
+\sum_{\ell\neq j}\sum_{k\in B_\ell}\frac1{d_j d_\ell}\frac1{d_\ell}\\
&=\frac1{d_j^2}+\sum_{\ell\neq j}\frac1{d_j d_\ell}
=\frac1{d_j}\sum_{\ell=1}^m\frac1{d_\ell}
=S\,\mu_i.
\end{align*}
Thus \(\Sigma\mu=S\mu\) and, since \(\|u_0\|=1\),
\begin{equation}\label{eq:u0-eigs}
u_0^\top\Sigma u_0 = S,\qquad u_0^\top\Sigma Q_j = 0.\nonumber
\end{equation}
Hence, in the explicit basis \(U\),
\begin{equation}\label{eq:Sigma_r}
\Sigma_r
:=U^\top\Sigma U
=
\operatorname{diag}\Big(S,\ \frac1{d_1}I_{d_1-1},\ \dots,\ \frac1{d_m}I_{d_m-1}\Big),
\quad
\Sigma_r^{-1/2}
=
\operatorname{diag}\Big(S^{-1/2},\ \sqrt{d_1}\,I_{d_1-1},\ \dots,\ \sqrt{d_m}\,I_{d_m-1}\Big).
\end{equation}

\bigskip

\subsection*{D) Gaussian Width in \(r=d-m+1\) Dimensions and Evaluation}

Let \(g_r\sim\mathcal N(0,I_r)\), write \(g_r=(g_0,g^{(1)},\dots,g^{(m)})\) with
\(g^{(j)}\sim\mathcal N(0,I_{d_j-1})\) independent. Recall that
\begin{equation}\label{eq:width-intrinsic}
w(\mathcal X)\leq\mathbb E_{g_r\sim\mathcal{N}(0,I_r)}\Big[\sup_{x\in \mathcal X}\ \langle U^\top x,\ \Sigma_r^{-1/2}g_r\rangle\Big].\nonumber
\end{equation}
Also, we have
\begin{equation}\label{eq:u0-x}
\langle u_0,x\rangle=\frac{\langle \mu,x\rangle}{\sqrt S}
=\frac{\sum_{j=1}^m \frac1{d_j}}{\sqrt S}
=\sqrt S,\qquad \forall x\in \mathcal X,\nonumber
\end{equation}
so, from \eqref{eq:Sigma_r}, we have
\begin{align*}
\langle U^\top x,\Sigma_r^{-1/2}g_r\rangle
&=
S^{-1/2}g_0\langle u_0,x\rangle
+\sum_{j=1}^m \sqrt{d_j}\,\langle g^{(j)},Q_j^\top x\rangle\\
&=
g_0+\sum_{j=1}^m \sqrt{d_j}\,\langle g^{(j)},Q_j^\top x\rangle.
\end{align*}
Therefore
\begin{align}\label{eq:width-reduce}
w(\mathcal X)\leq
\mathbb E[g_0]
+\sum_{j=1}^m \sqrt{d_j}\ \mathbb E\Big[\max_{k\in[d_j]}\ \langle g^{(j)},H_{d_j}^\top e_k\rangle\Big]
&=
\sum_{j=1}^m \sqrt{d_j}\ \mathbb E\Big[\max_{k\in[d_j]}\ \langle g^{(j)},H_{d_j}^\top e_k\rangle\Big].
\end{align}
Let \(h^{(j)}:=H_{d_j}g^{(j)}\in\mathbb R^{d_j}\). Then we have
\begin{equation}\label{eq:h-coords}
\langle g^{(j)},H_{d_j}^\top e_k\rangle = e_k^\top H_{d_j}g^{(j)} = (h^{(j)})_k,\nonumber
\end{equation}
and by \eqref{eq:helmert-props},
\begin{equation}\label{eq:h-law}
h^{(j)}\sim\mathcal N\Big(0,\ H_{d_j}H_{d_j}^\top\Big)
=\mathcal N\Big(0,\ I_{d_j}-\frac1{d_j}\mathbf 1\mathbf 1^\top\Big).\nonumber
\end{equation}
If \(Z^{(j)}\sim\mathcal N(0,I_{d_j})\) and \(\bar Z^{(j)}:=\frac1{d_j}\mathbf 1^\top Z^{(j)}\), then
\begin{equation}\label{eq:center-proj}
\Big(I_{d_j}-\frac1{d_j}\mathbf 1\mathbf 1^\top\Big)Z^{(j)}
=Z^{(j)}-\bar Z^{(j)}\mathbf 1
\sim \mathcal N\Big(0,\ I_{d_j}-\frac1{d_j}\mathbf 1\mathbf 1^\top\Big),\nonumber
\end{equation}
so
\begin{equation}\label{eq:max-shift}
\max_{k\le d_j} (h^{(j)})_k\ \stackrel{d}{=}\ \max_{k\le d_j}\big((Z^{(j)})_k-\bar Z^{(j)}\big)
=\max_{k\le d_j} (Z^{(j)})_k-\bar Z^{(j)}.\nonumber
\end{equation}
Taking expectations and using \(\mathbb E[\bar Z^{(j)}]=0\),
\begin{equation}\label{eq:max-equality}
\mathbb E\Big[\max_{k\le d_j} (h^{(j)})_k\Big]
=
\mathbb E\Big[\max_{k\le d_j} Z_k\Big]\leq \mathcal{O}(\sqrt{\log d_i}),
\qquad Z_k\stackrel{iid}{\sim}\mathcal N(0,1).\nonumber
\end{equation}
Thus, due to Eq. \eqref{eq:width-reduce} and the calculations above, we have
\begin{equation}\label{eq:width-final}
w(\mathcal{X})\leq O\left(\sum_{j=1}^m \sqrt{d_j\log d_j}\right).\nonumber
\end{equation}

\section{Lower Bounds for Structured Sets}\label{appendix-structured-sets-lower-bound}
First, in section \ref{appendix-multi-task-lower bound}, we present the lower bound for the multi-task MAB problem. Next, in section \ref{appendix-multi-task-lower bound}, we present the lower bound for hypercubes. Finally, in section \ref{appendix-m-sets-lower-bound}, we present the lower bound for $m$-sets.
\subsection{Lower Bound for Multi-task MAB}\label{appendix-multi-task-lower bound}


Recall that in the Multi-task MAB, the set of arms $\mathcal{X}$ is defined as follows: 
\begin{equation*}
    \mathcal{X}=\left\{\mathbf{x}\in \{0,1\}^d: \forall j\in [m] \sum_{i=d_{1:j-1}+1}^{d_{1:j}} \mathbf{x}[i]=1\right\}
\end{equation*}
where $d_i\geq 2$, $d=\sum_{i=1}^md_i$, $d_{1:j}=\sum_{i=1}^jd_i$ and $d_{1:0}=0$ for all $j\in [m]$.

We first focus on the case when $d_j\geq 100$ for all $j\in[m]$.

Let us fix one regret minimizing algorithm, say $\mathcal{A}$ and assume that $\mathcal{A}$ is deterministic given the reward realizations. Let us assume that $\mathcal{A}$ terminates after taking $T:=\frac{(\sum_{i=1}^m\sqrt{d_i})^2}{20000\varepsilon^2}$ samples (if the algorithm terminates before this, we can take additional dummy samples). Let $\widehat{x}\in \mathcal{X}$ be the arm recommended by $\mathcal{A}$ after sampling the arms $\mathbf{x}_1,\ldots,\mathbf{x}_T$ and terminating. 

Later in this section, we construct a set of input instances $\mathcal{I}$. Each instance $I\in \mathcal{I}$ is associated with a reward vector $\theta_I\in \mathbb{R}^d$ such that the following hold:
\begin{itemize}
    \item If $\mathcal{A}$ samples an arm $x\in \mathcal{X}$, it observes $\langle x,\theta_I\rangle+\eta$ where $\eta\sim \mathcal{N}(0,1)$.
    \item For all $x\in \mathcal{X}$, we have $\langle x,\theta_I\rangle\geq 0$.
    \item We have $\max_{x\in \mathcal{X}}\langle x,\theta_I\rangle=10\varepsilon$.
\end{itemize}

Suppose we show that $\mathbb{E}_{I\sim Unif(\mathcal{I})}[\langle \widehat{x}, \theta_I\rangle]\leq 7\varepsilon$, where $\mathbb E_{I}[\cdot]$ is the expectation under the instance $I$. Then by Yao's lemma, we have for any randomized algorithm, the recommended arm $\widehat{x}$ satisfies $\min_{I\in \mathcal{I}}\mathbb E_{I}[\langle \widehat{x},\theta_I\rangle]\leq 7\varepsilon$. Then due to markov's inequality, we have that with probability at least $0.1$, there exists an instance on which the randomized algorithm does not recommend an $\varepsilon$-best arm.

Now we focus on showing that $\mathbb{E}_{I\sim Unif(\mathcal{I})}[\langle \widehat{x}, \theta_I\rangle]\leq 7\varepsilon$. We begin with construction instances $I$ with their corresponding reward vector $\theta_I$. These instances include both the set of all input instances and a set of alternate instances that will be used to argue the performance of the algorithm $\mathcal{A}$.


Let $\tilde{\mathcal{X}}=\left\{\mathbf{x}\in \{0,1\}^d: \forall j\in [m] \sum_{i=d_{1:j-1}+1}^{d_{1:j}} \mathbf{x}[i]\leq 1\right\}$.
First we describe an instance $I_{\tilde{\mathbf{x}}}$, where $\tilde{\mathbf{x}}\in \tilde{\mathcal{X}}$. In this instance, we define the associated reward vector $\theta_{I_{\tilde{\mathbf{x}}}}$ as follows. For all $j\in [m]$ and $i\in \{d_{1:j-1}+1,\ldots,d_{1:j}\}$, we have $\theta_{I_{\tilde{\mathbf{x}}}}[i]=10\varepsilon_j\cdot \tilde{\mathbf{x}}[i]$ where $\varepsilon_j=\frac{\varepsilon\cdot\sqrt{d_j}}{\sum_{s\in[m]}\sqrt{d_s}}$. Observe that for all $x\in\mathcal{X}$, we have $\langle x,\theta_{I_{\tilde x}}\rangle\geq 0$. Also observe that $\max_{x\in\mathcal{X}}\langle x,\theta_{I_{\tilde x}}\rangle=10\varepsilon$ if $\tilde{x}\in \mathcal{X}$.

Let $r^{(j)}_{\tilde x}:=\mathbb{E}_{I_{\tilde x}}[\sum_{i=d_{1:j-1}+1}^{d_{1:j}}\widehat{x}[i]\cdot \theta_{I_{\tilde x}}[i]]$. Observe that $\mathbb{E}_{I_{\tilde x}}[\langle \widehat{x},\theta_{I_{\tilde x}}\rangle]=\sum_{j=1}^m r^{(j)}_{\tilde x}$.


    Let $\mathcal{I}=\bigcup_{\mathbf{x}\in\mathcal{X}}I_{\mathbf{x}}$. Fix an index $j\in[m]$. We now show that $\mathbb{E}_{I_{\mathbf{x}'}\sim \mathrm{Unif}(\mathcal{I})}[\sum_{i=d_{1:j-1}+1}^{d_{1:j}}\widehat{x}[i]\cdot \theta_{I_{\mathbf{x}'}}[i]]\leq 7\varepsilon_j$. Let $$\mathcal{X}^{(j)}:=\Bigg\{\mathbf{x}\in \{0,1\}^d: \forall i\in[m]\setminus\{j\}\; \sum_{s=d_{1:i-1}+1}^{d_{1:i}}\mathbf{x}[s]=1,\; \sum_{s=d_{1:j-1}+1}^{d_{1:j}}\mathbf{x}[s]=0\Bigg\}.$$ For any $\mathbf{x}\in \mathcal{X}^{(j)}$, let $\mathbf{x}^{(i)}$ be the vector in $\mathcal{X}$ such that $\mathbf{x}^{(i)}[s]=\mathbf{x}[s]$ for all $s\notin \{d_{1:j-1}+1,\ldots,d_{1:j}\}$ and $\mathbf{x}^{(i)}[d_{1:j-1}+i]=1$.

    Let us fix $\mathbf{x}\in \mathcal{X}^{(j)}$. Now we claim that there is a set $\mathcal{S}_{\mathbf{x}}\subseteq[d_j]$ with at least $d_j/3$ indices such that for each $i\in \mathcal{S}_\mathbf{x}$, we have $r^{(j)}_{\mathbf{x}^{(i)}} \leq \varepsilon_j$.
    
    Before we prove our claim, we first show that if our claim holds true, then we have that $$\mathbb{E}_{I_{\mathbf{x}'}\sim \mathrm{Unif}(\mathcal{I})}[\sum_{i=d_{1:j-1}+1}^{d_{1:j}}\widehat{x}[i]\cdot \theta_{I_{\mathbf{x}'}}[i]]\leq 7\varepsilon_j.$$ Now we have the following:
    \begin{align*}
        \mathbb{E}_{I_{\mathbf{x}'}\sim \mathrm{Unif}(\mathcal{I})}[\sum_{i=d_{1:j-1}+1}^{d_{1:j}}\widehat{x}[i]\cdot \theta_{I_{\mathbf{x}'}}[i]]&=\frac{1}{\prod_{s=1}^md_s}\sum_{\mathbf{x}\in\mathcal{X}^{(j)}}\sum_{i=1}^{d_j} r^{(j)}_{\mathbf{x}^{(i)}} \\
        &= \frac{1}{\prod_{s=1}^md_s}\sum_{\mathbf{x}\in\mathcal{X}^{(j)}}\left(\sum_{i\in \mathcal{S}_{\mathbf{x}}}r^{(j)}_{\mathbf{x}^{(i)}}+\sum_{i\in [d_j]\setminus\mathcal{S}_{\mathbf{x}}}r^{(j)}_{\mathbf{x}^{(i)}}\right)\\
        &\leq \frac{1}{\prod_{s=1}^md_s}\sum_{\mathbf{x}\in\mathcal{X}^{(j)}}\left(\frac{d_j}{3}\cdot \varepsilon_j+\frac{2d_j}{3}\cdot 10\varepsilon_j\right)\\
        &\leq \frac{1}{\prod_{s=1}^md_s}\sum_{\mathbf{x}\in\mathcal{X}^{(j)}}7d_j\varepsilon_j  \\
        & = \frac{7\varepsilon_j}{\prod_{s=1}^md_s} \cdot \prod_{s\neq j}d_s\cdot d_j \\
        & = 7\varepsilon_j
    \end{align*}

    Now we prove our claim. We use the adaptive KL chain rule (Lemma \ref{kl-chain-rule}) in our analysis.

    Let $\mathbb{P}_I$ denote the probability law instances under an instance $I$. For an instance $I_{\mathbf{x}^{(i)}}$, let $f_i(\ell_1,\ldots,\ell_T)$ denote the joint PDF for the tuple of reward values observed by $\mathcal{A}$ in each round under the probability law $\mathbb{P}_{I_{\mathbf{x}^{(i)}}}$. Observe that our sample space is $\Omega=\mathbb{R}^T$. This is a valid sample space as $\mathcal{A}$ is deterministic and the probability density function of the reward values in round $t$ only depends on the reward values it observed in the previous rounds. Similarly for the alternate instance  $I_{\mathbf{x}}$, let $f_0(\ell_1,\ldots,\ell_T)$ denote the joint PDF for the tuple of loss values observed by $\mathcal{A}$ in each round under the probability law $\mathbb{P}_{I_{\mathbf{x}}}$.

    First observe that the instances $I_{\mathbf{x}^{(i)}}$ and $I_{\mathbf{x}}$ only differ at index $d_{1:j-1}+i$. For each $\omega\in \Omega$, let $\mathbf{x}_{1,\omega},\mathbf{x}_{2,\omega},\ldots, \mathbf{x}_{T,\omega}$ be the sequence of arms chosen by $\mathcal{A}$ on $\omega$.
    Conditioning on a set of outcomes $X_1=\omega_1,X_2=\omega_2,\ldots,X_{t-1}=\omega_{t-1}$, we have $X_t\sim \mathcal{N}(\mu_i,1)$ for the instance $I_{\mathbf{x}^{(i)}}$ and $X_t\sim \mathcal{N}(\mu_0,1)$ for the instance $I_{\mathbf{x}}$ where $\mu_i-\mu_0=10\varepsilon_j\cdot \mathbf{x}_{t,\omega}[d_{1:j-1}+i]$. Let $T_i=\sum_{t=1}^T\mathbf{x}_t[d_{1:j-1}+i]$.  For each $\omega\in \Omega$, let $T_{i,\omega}=\sum_{t=1}^T\mathbf{x}_{t,\omega}[d_{1:j-1}+i]$. Note that $T_i$ is a random variable and $T_{i,\omega}$ is a fixed value. Let $X_{-t}$ denote $X_1,\ldots,X_{t-1}$ and $\omega_{-t}$ denote $\omega_1,\ldots,\omega_{t-1}$. Now we have the following:
    \begin{align*}
         KL(f_0,f_i)&=\int\limits_{\omega\in \Omega}f_0(\omega)\left(KL(f_0(X_1),f_i(X_1))+\sum_{t=2}^T KL(f_0(X_t|X_{-t}=\omega_{-t}),f_i(X_t|X_{-t}=\omega_{-t}))\right)d\omega\\
         &=50\varepsilon_j^2\int\limits_{\omega\in \Omega}f_0(\omega)\sum_{t=1}^T\mathbf{x}_{t,\omega}[d_{1:j-1}+i]\;d\omega\\
         &=50\varepsilon_j^2 \int\limits_{\omega\in \Omega}f_0(\omega)T_{i,\omega}\;d\omega\\
         &= 50\varepsilon_j^2\cdot \mathbb{E}_{I_{\mathbf{x}}}[T_{i}]
    \end{align*}

    Now observe that $\sum_{i=1}^{d_j}\mathbb{E}_{I_{\mathbf{x}}}[T_i]=T$. Hence, there exists a set $\mathcal{S}_\mathbf{x}^{(1)}\subseteq[d_j]$ with at least $2d_j/3$ indices such that for each $i\in \mathcal{S}_x^{(1)}$, we have $\mathbb{E}_{I_{\mathbf{x}}}[T_i]\leq \frac{3T}{d_j}$. Similarly, there exists a set $\mathcal{S}_\mathbf{x}^{(2)}\subseteq[d_j]$ with at least $2d_j/3$ indices such that for each $i\in \mathcal{S}_x^{(2)}$, we have $\mathbb{P}_{I_{\mathbf{x}}}[\widehat{x}[d_{1:j-1}+i]=1]\leq \frac{3}{d_j}$. Now we define $\mathcal{S}_\mathbf{x}:=\mathcal{S}_x^{(1)}\cap \mathcal{S}_x^{(2)}$. Observe that $|\mathcal{S}_\mathbf{x}|\geq d_j/3$. Next for each $i\in \mathcal{S}_x$, we have $\mathbb{E}_{I_{\mathbf{x}}}[T_i]\leq \frac{3T}{d_j}$ and $\mathbb{P}_{I_{\mathbf{x}}}[\widehat{x}[d_{1:j-1}+i]=1]\leq \frac{3}{d_j}$. Now for each $i\in \mathcal{S}_\mathbf{x}$, we have $KL(f_0,f_i)\leq\frac{150\varepsilon_j^2T}{d_j}=\frac{3}{400}$. 

    Fix $i\in \mathcal{S}_x$. Let $A_i$ be the event that $\widehat{x}[d_{1:j-1}+i]=1$. Now due to Pinsker's inequality we have the following:
    \begin{align*}
        \mathbb{P}_{I_{\mathbf{x}^{(i)}}}(A_i)&\leq  \mathbb{P}_{I_{\mathbf{x}}}(A_i)+\sqrt{\frac{KL(f_0,f_i)}{2}}\\
        &\leq \frac{3}{d_j}+\sqrt{\frac{3}{800}}\\
        &\leq\frac{3}{100}+\sqrt{\frac{3}{800}}\tag{as $d_j\geq 100$}\\
        &< \frac{1}{10}
    \end{align*}

    Hence, we have $r^{(j)}_{\mathbf{x}^{(i)}} < \frac{1}{10}\cdot 10\varepsilon_j= \varepsilon_j$.

    Now, we conclude the proof by showing that $\mathbb{E}_{I\sim Unif(\mathcal{I})}[\langle \widehat{x}, \theta_I\rangle]\leq 7\varepsilon$:
    \begin{align*}
        \mathbb{E}_{I\sim Unif(\mathcal{I})}[\langle \widehat{x}, \theta_I\rangle]&=\sum_{j=1}^m \mathbb{E}_{I_{\mathbf{x}'}\sim \mathrm{Unif}(\mathcal{I})}[\sum_{i=d_{1:j-1}+1}^{d_{1:j}}\widehat{x}[i]\cdot \theta_{I_{\mathbf{x}'}}[i]]\\
        &=7\sum_{j=1}^m \varepsilon_j\\
        &=7\varepsilon\cdot\frac{\sum_{j=1}^m \sqrt{d_j}}{\sum_{s=1}^m \sqrt{d_s}}\tag{as $\varepsilon_j=\frac{\varepsilon\cdot\sqrt{d_j}}{\sum_{s=1}^m \sqrt{d_s}}$}\\
        &=7\varepsilon
    \end{align*}

\subsubsection{$d_j<100$ Case}

For the simplicity of presentation let us focus on the case $d_j=2$. This analysis can be easily extended to the other cases easily.
For the case $d_j=2$, the construction of the hard instances remain exactly the same as that of the $d_j\geq 100$ case. The analysis for this case only differs at the end. Consider an index $i\in\{1,2\}$. Observe that $KL(f_0,f_i)\leq50\varepsilon_j^2T=\frac{1}{200}$. Recall that $A_i$ is the event that $\widehat{x}[d_{1:j-1}+i]=1$. Now due to Pinsker's inequality we have the following:
    \begin{align*}
        \mathbb{P}_{I_{\mathbf{x}^{(i)}}}(A_i)\leq  \mathbb{P}_{I_{\mathbf{x}}}(A_i)+\sqrt{\frac{KL(f_0,f_i)}{2}}\leq \mathbb{P}_{I_{\mathbf{x}}}(A_i)+\frac{1}{20}\\
    \end{align*}
Recall that $r^{(j)}_{\tilde x}:=\mathbb{E}_{I_{\tilde x}}[\sum_{i=d_{1:j-1}+1}^{d_{1:j}}\widehat{x}[i]\cdot \theta_{I_{\tilde x}}[i]]$. Now we have the following:
    \begin{align*}
        \mathbb{E}_{I_{\mathbf{x}'}\sim \mathrm{Unif}(\mathcal{I})}[\sum_{i=d_{1:j-1}+1}^{d_{1:j}}\widehat{x}[i]\cdot \theta_{I_{\mathbf{x}'}}[i]]&=\frac{1}{\prod_{s=1}^md_s}\sum_{\mathbf{x}\in\mathcal{X}^{(j)}}\sum_{i=1}^{2} r^{(j)}_{\mathbf{x}^{(i)}} \\
        &\leq \frac{10\varepsilon_j}{\prod_{s=1}^md_s}\sum_{\mathbf{x}\in\mathcal{X}^{(j)}}\left(\mathbb{P}_{I_{\mathbf{x}}}(A_1)+\mathbb{P}_{I_{\mathbf{x}}}(A_2)+0.1\right)\\
        & = \frac{10\varepsilon_j}{\prod_{s=1}^md_s} \cdot\frac{1.1}{2}\cdot \prod_{s\neq j}d_s\cdot d_j \\
        & = 5.5\varepsilon_j
    \end{align*}

\subsection{Hypercubes}\label{appendix-hypercubes-lower-bound}
For the $\{0,1\}^d$ hypercube, we consider the family of hard instances $\{I_{\mathbf{x}}\}_{\mathbf{x}\in\{0,1\}^d}$ with corresponding reward vectors $\{\theta_{\mathbf{x}}\}_{\mathbf{x}\in\{0,1\}^d}$. For each $\mathbf{x}\in\{0,1\}^d$, define $\theta_{\mathbf{x}}\in\mathbb{R}^d$ coordinate-wise by
$(\theta_{\mathbf{x}})_i=\mathbbm{1}\{\mathbf{x}_i=1\}\cdot \frac{10\varepsilon}{d}-\mathbbm{1}\{\mathbf{x}_i=0\}\cdot \frac{10\varepsilon}{d}$.
An analysis analogous to our $d_j=2$ argument for multi-task MAB yields a lower bound of $\Omega(d^2/\varepsilon^2)$. The only minor difference is that, in the multi-task setting, we reasoned about the expected value of $\langle x,\theta\rangle$, whereas here one must instead work with the expected gap $\langle x,\theta\rangle-\min_{x\in\{0,1\}^d}\langle x,\theta\rangle$.

For the $\{-1,+1\}^d$ hypercube, we similarly consider the family of hard instances $\{I_{\mathbf{x}}\}_{\mathbf{x}\in\{-1,+1\}^d}$ with corresponding reward vectors $\{\theta_{\mathbf{x}}\}_{\mathbf{x}\in\{-1,+1\}^d}$. For each $\mathbf{x}\in\{-1,+1\}^d$, define $\theta_{\mathbf{x}}\in\mathbb{R}^d$ coordinate-wise by
$(\theta_{\mathbf{x}})_i=\mathbf{x}_i\cdot \frac{5\varepsilon}{d}$.
An analysis analogous to our $d_j=2$ argument for multi-task MAB again yields a lower bound of $\Omega(d^2/\varepsilon^2)$. As above, the only difference is that the argument proceeds via the expected gap $\langle x,\theta\rangle-\min_{x\in\{-1,+1\}^d}\langle x,\theta\rangle$ rather than expectation of $\langle x,\theta\rangle$ itself.

\subsection{$m$-Sets Lower Bound}\label{appendix-m-sets-lower-bound}
\begin{theorem}\label{appendix:thm-m-sets}
Let us denote the $m$-sets by
\[
\mathcal X=\{x\in\{0,1\}^d:\|x\|_0=m\}.
\]
In each round $t=1,\dots,T$, the learner chooses $x_t\in\mathcal X$ and observes
\[
y_t=\langle x_t,\theta\rangle+\eta_t,\qquad \eta_t\sim \mathcal N(0,1)\ \text{i.i.d.}
\]
Suppose $d-m+1\ge 20m$. Then there exists a universal constant $c>0$ such that for any (possibly adaptive and randomized) algorithm,
if
\[
T \le c\,\frac{m(d-m+1)}{\varepsilon^2},
\]
then there exists an instance $\theta$ for which the algorithm fails to output an $\varepsilon$-optimal arm with probability
at least $2/9$.
\end{theorem}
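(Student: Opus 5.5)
Following the intuition of Section~\ref{sec:log-adaptive} and the multi-task argument of Appendix~\ref{appendix-multi-task-lower bound}, I will reduce the $m$-set problem to a ``localized'' $(d-m+1)$-armed bandit problem and then average. Set $n:=d-m+1$ and $\gamma:=10\varepsilon/m$. For a planted $m$-subset $S\subseteq[d]$, let $\theta_S:=\gamma\mathbf 1_S$. Then the best arm is $\mathbf 1_S$ with value $10\varepsilon$, and $\langle x,\theta_S\rangle=\gamma|x\cap S|$. An $\varepsilon$-optimal output must satisfy $|\widehat x\cap S|\ge 9m/10$. Thus it suffices to show that, with $S$ uniform among $m$-subsets, $\mathbb E|\widehat x\cap S|\le 7m/10$ whenever $T\le c\,mn/\varepsilon^2$. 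Markov's inequality applied to $m-|\widehat x\cap S|$, together with Yao's principle as in the multi-task proof, will then give a failure probability bounded below by a constant, which I expect to tune to $2/9$.

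\textbf{Localization.} Write $\mathbb E|\widehat x\cap S|=\sum_{k}\Pr(k\in\widehat x,\,k\in S)$. To bound this sum I will pair each planted set with alternatives that differ in a single coordinate. Fix an $(m-1)$-set $R$. The $n=d-m+1$ sets $S=R\cup\{i\}$ with $i\notin R$ are the candidate completions, and the alternate instance is $\theta_R=\gamma\mathbf 1_R$. Under $\theta_R$, let $N_i$ be the expected number of pulls of arms containing $i$. Because every arm has exactly $m$ ones, $\sum_{i\notin R}N_i\le mT$. The KL chain rule (Lemma~\ref{kl-chain-rule}) then gives
\[
\mathrm{KL}(\mathbb P_{\theta_R}\|\mathbb P_{\theta_{R\cup\{i\}}})=\tfrac{\gamma^2}{2}N_i .
\]
By counting, at least $2n/3$ of the indices $i$ satisfy $N_i\le 3mT/n$. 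Since $\widehat x$ contains at most $m$ coordinates outside $R$ and $n\ge 20m$, at least $2n/3$ of the indices satisfy $\Pr_{\theta_R}(i\in\widehat x)\le 3m/n\le 3/20$. For the $n/3$ indices in both sets, Pinsker's inequality gives
\[
\Pr_{\theta_{R\cup\{i\}}}(i\in\widehat x)\le 3/20+\sqrt{\gamma^2\cdot 3mT/(4n)}=3/20+\sqrt{75\,T\varepsilon^2/(mn)},
\]
which is below $1/5$ once $c$ is small.

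\textbf{Averaging.} This step is where I expect the main difficulty. Each $S$ arises from exactly $m$ choices of $R=S\setminus\{k\}$, and the event ``the added coordinate $i$ is detected'' corresponds to $k\in\widehat x\cap S$. Summing over $R$ and $i$ therefore gives
\[
\sum_S\mathbb E_S|\widehat x\cap S|=\sum_R\sum_{i\notin R}\Pr_{R\cup\{i\}}(i\in\widehat x).
\]
For each fixed $R$, I bound the inner sum by $(n/3)(1/5)+(2n/3)\cdot 1$, so the average over $i$ is at most $11/15$. Dividing by the number of $S$ then gives $\mathbb E|\widehat x\cap S|\le (11/15)m$, with the constants adjusted so that this falls below $9m/10$ by a constant fraction, again via Markov. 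The hardest part is making this double-counting exact, namely that the uniform measure over pairs $(R,i)$ matches $m$ times the uniform measure over $S$ with a uniformly chosen $k\in S$. A second concern is that the fraction of ``good'' indices has to be large enough for the Markov margin to survive. If the $1/3$ fraction turns out to be too weak, I will replace the $3$ in the counting step by a larger constant, using $n\ge 20m$ to keep the $\Pr_{\theta_R}(i\in\widehat x)$ bound small.
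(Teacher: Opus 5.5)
Your proposal is correct and follows essentially the same route as the paper: the same planted instances $\theta_S=\frac{10\varepsilon}{m}\mathbf 1_S$, the same one-coordinate alternatives $\theta_R$ for $(m-1)$-sets $R$, and the same counting/KL-chain-rule/Pinsker step. Your $(R,i)\leftrightarrow(S,k)$ double counting is exactly the paper's equivalent-sampling lemma for $(B,I)$, and your final step is the paper's Markov bound on $|\widehat x\cap S|$ (not on $m-|\widehat x\cap S|$, as you wrote at the start). As you anticipated, the constant $3$ only gives failure probability $1-\frac{11/15}{9/10}=5/27<2/9$. The paper instead uses $4$: at least $n/2$ indices are good with $\Pr_{\theta_R}(i\in\widehat x)\le 4m/n\le 0.2$ and a Pinsker term $\le 0.2$, giving an average of $0.5\cdot 0.4+0.5=0.7$ and hence exactly $2/9$.
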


\begin{proof}
We first describe the family of hard instances. For each $S\subset[d]$ with $|S|=m$, define
\[
\theta^{(S)}_i=
\begin{cases}
\Delta & i\in S,\\
0 & i\notin S,
\end{cases}
\qquad\text{where}\qquad \Delta:=\frac{10\varepsilon}{m}.
\]
Let $S$ be uniform over all $m$-sized subsets of $[d]$ and the environment parameter be $\theta=\theta^{(S)}$.
For any output $\hat x\in\mathcal X$, write $\hat S:=\mathrm{supp}(\hat x)$ so $|\hat S|=m$. Then
\[
\langle \hat x,\theta^{(S)}\rangle=\Delta\,|\hat S\cap S|.
\]
The optimal value is $\max_{x\in\mathcal X}\langle x,\theta^{(S)}\rangle=\Delta|S|=m\Delta=10\varepsilon$.
Thus on any instance, $\varepsilon$-success implies $\langle \hat x,\theta^{(S)}\rangle\ge 9\varepsilon$.

By Yao's lemma it suffices to fix an arbitrary deterministic algorithm and analyze its error under the random draw of $S$.

Given $S$, let $I$ be uniform on $S$ (an auxiliary random variable), and define $B:=S\setminus\{I\}$ so $|B|=m-1$.
Conditioning on $(S,\hat S)$,
\[
\mathbb P(I\in \hat S\mid S,\hat S)=\frac{|\hat S\cap S|}{|S|}=\frac{|\hat S\cap S|}{m},
\]
so by the tower rule
\begin{equation}\label{eq:overlap-identity}
\mathbb E[|\hat S\cap S|]=m\,\mathbb P(I\in \hat S).
\end{equation}
Therefore,
\begin{equation}\label{eq:value-vs-I}
\mathbb E[\langle \hat x,\theta^{(S)}\rangle]
=\Delta\,\mathbb E[|\hat S\cap S|]
=m\Delta\,\mathbb P(I\in \hat S)
=10\varepsilon\cdot \mathbb P(I\in \hat S).
\end{equation}
So it is enough to show $\mathbb P(I\in \hat S)\le 0.7$, which would imply
$\mathbb E[\langle \hat x,\theta^{(S)}\rangle]\le 7\varepsilon$.


We now prove the following technical lemma.
\begin{lemma}[Equivalent sampling]\label{appendix-lem-equivalent-sampling}
The joint law of $(B,I,S)$ defined above is the same as:
(i) draw $B$ uniformly among all $(m-1)$-subsets of $[d]$,
(ii) draw $I$ uniformly from $[d]\setminus B$,
(iii) set $S=B\cup\{I\}$.
In particular, conditional on $B$, $I\sim \mathrm{Unif}([d]\setminus B)$.
\end{lemma}
\begin{proof}
Fix $b\subset[d]$ with $|b|=m-1$ and $i\notin b$.
Under the original procedure,
\[
\mathbb P(B=b,I=i)
=\mathbb P(S=b\cup\{i\})\cdot \mathbb P(I=i\mid S=b\cup\{i\})
=\frac{1}{\binom{d}{m}}\cdot \frac{1}{m}.
\]
Under the alternative procedure,
\[
\mathbb P(B=b,I=i)
=\mathbb P(B=b)\cdot \mathbb P(I=i|B=b)=\frac{1}{\binom{d}{m-1}}\cdot \frac{1}{d-m+1}=\frac{1}{\binom{d}{m}}\cdot \frac{1}{m}.
\]
Hence, both sampling procedures have the same joint law.
\end{proof}

Fix any $B$ and define the alternative instance $\theta^{(B)}$ by
$\theta^{(B)}_j=\Delta$ if $j\in B$ and $0$ otherwise.
For each $i\notin B$, define
\[
N_i:=\sum_{t=1}^T x_t(i)\qquad\text{and}\qquad A_i:=\{i\in \hat S\}.
\]
Because every played action has exactly $m$ ones,
\[
\sum_{i=1}^d N_i=mT \quad\Rightarrow\quad \sum_{i\notin B}\mathbb E_{\theta^{(B)}}[N_i]\le mT.
\]
Because $|\hat S|=m$ always,
\[
\sum_{i\notin B}\mathbb P_{\theta^{(B)}}(A_i)\le m.
\]
Let $n:=d-m+1=|[d]\setminus B|$. Due to the above two inequalities, there exists a set
$G_B\subseteq [d]\setminus B$ with $|G_B|\ge n/2$ such that for all $i\in G_B$,
\begin{equation}\label{eq:good-bounds}
\mathbb E_{\theta^{(B)}}[N_i]\le \frac{4mT}{n},
\qquad
\mathbb P_{\theta^{(B)}}(A_i)\le \frac{4m}{n}.
\end{equation}

Fix $i\in G_B$. Compare $\theta^{(B)}$ with $\theta^{(B\cup\{i\})}$.
They differ only in coordinate $i$ by $\Delta$.

Let $P_B$ and $P_{B\cup\{i\}}$ be the probability law under
$\theta^{(B)}$ and $\theta^{(B\cup\{i\})}$, respectively.
For Gaussian noise with variance $1$, the one-step KL between $\mathcal N(\mu,1)$ and $\mathcal N(\mu+\Delta,1)$
equals $\Delta^2/2$.
Using the adaptive KL chain rule (Lemma \ref{kl-chain-rule}) in the same way as we did the multi-task MAB, we get
\begin{equation}\label{eq:kl}
\mathrm{KL}(P_B\|P_{B\cup\{i\}})
=\frac{\Delta^2}{2}\,\mathbb E_{\theta^{(B)}}[N_i]
\le \frac{\Delta^2}{2}\cdot \frac{4mT}{n}
=2\Delta^2\frac{mT}{n},
\end{equation}
where we used \eqref{eq:good-bounds}.
Pinsker's inequality gives, for the event $A_i$,
\[
\mathbb P_{\theta^{(B\cup\{i\})}}(A_i)
\le \mathbb P_{\theta^{(B)}}(A_i) + \sqrt{\tfrac12\,\mathrm{KL}(P_B\|P_{B\cup\{i\}})}.
\]
Combining with \eqref{eq:good-bounds} and \eqref{eq:kl},
\begin{equation}\label{eq:pinsker-bound}
\mathbb P_{\theta^{(B\cup\{i\})}}(i\in \hat S)
\le \frac{4m}{n} + \Delta\sqrt{\frac{mT}{n}}.
\end{equation}
Assume $n\ge 20m$ so that $4m/n\le 0.2$.
Also assume  $T \le \frac{1}{2500}\cdot \frac{mn}{\varepsilon^2}.$ Since $\Delta=10\varepsilon/m$, we have
\[
\Delta\sqrt{\frac{mT}{n}}
=\frac{10\varepsilon}{m}\sqrt{\frac{mT}{n}}
\le \frac{10\varepsilon}{m}\sqrt{\frac{m}{n}\cdot \frac{1}{2500}\frac{mn}{\varepsilon^2}}
=0.2.
\]
Plugging into \eqref{eq:pinsker-bound} yields, for every $i\in G_B$,
\begin{equation}\label{eq:good-i-prob}
\mathbb P_{\theta^{(B\cup\{i\})}}(i\in \hat S)\le 0.4.
\end{equation}

Due to Lemma \ref{appendix-lem-equivalent-sampling}, conditional on $B$, $I$ is uniform on $[d]\setminus B$, and if $I=i$ then the instance is
$\theta^{(B\cup\{i\})}$. Therefore, we have:
\begin{equation}\label{eq:avg}
\mathbb P(I\in \hat S\mid B)
=\frac{1}{n}\sum_{i\in[d]\setminus B}\mathbb P_{\theta^{(B\cup\{i\})}}(i\in \hat S).
\end{equation}
Split the sum into $G_B$ and its complement. As $|G_B|\ge n/2$,
applying \eqref{eq:good-i-prob} for each $i\in G_B$, and using the trivial upper bound of $1$ for $i\notin G_B$, we get:
\[
\mathbb P(I\in \hat S\mid B)
\le \frac{1}{n}\Big(\frac{n}{2}\cdot 0.4 + \frac{n}{2}\cdot 1\Big)=0.7.
\]
Averaging over $B$ gives
\begin{equation}\label{eq:PI}
\mathbb P(I\in \hat S)\le 0.7.
\end{equation}

From \eqref{eq:value-vs-I} and \eqref{eq:PI}, we get:
\[
\mathbb E[\langle \hat x,\theta^{(S)}\rangle]\le 10\varepsilon\cdot 0.7=7\varepsilon.
\]
On the other hand, success implies $\langle \hat x,\theta^{(S)}\rangle\ge 9\varepsilon$, and the value is always
nonnegative, hence
\[
\mathbb E[\langle \hat x,\theta^{(S)}\rangle]\ge 9\varepsilon\cdot \mathbb P(\text{success}).
\]
Therefore $\mathbb P(\text{success})\le 7/9$, so $\mathbb P(\text{fail})\ge 2/9$ under uniform distribution over the hard instances.
By Yao's principle, there exists a fixed instance $\theta$ on which the algorithm fails with probability at least $2/9$.
This proves the theorem with a suitable universal constant $c$ (for instance $c=1/2500$).
\end{proof}
\section{Unit Ball Lower Bound}\label{appendix-ball-lower-bound}
Fix $\varepsilon\in (0,1]$, $\delta\in(0,1)$ and an $(\varepsilon,\delta)$-PAC algorithm. Let the algorithm run for $T:=\frac{H_1\log(1/\delta)+H_2}{\varepsilon^2}$ rounds and output $\hat x\in \mathcal{X}:=\mathbb{B}_d$.
Define the simple regret for any $\theta$ as
\[
R_T(\theta)\;:=\;\max_{x\in \mathcal{X}}\langle x,\theta\rangle-\langle \hat x,\theta\rangle \;\ge 0.
\]
Observe that $R_T(\theta)\leq 2\|\theta\|$.

As the algorithm is $(\varepsilon,\delta)$-PAC, we have $\mathbb P(R_T(\theta)>\varepsilon)\le \delta$. Hence, taking expectation, we have
\begin{equation}\label{eq:ball-simple-regret-upper}
    \mathbb{E}[R_T(\theta)]\leq \varepsilon+2\|\theta\|\cdot\delta 
\end{equation}

Consider a small enough $\varepsilon$ such that $T\geq d^2$.
Now due to \cite{chen2024assouad}, there exists a $\theta\in\mathbb{R}^d$ such that $\|\theta\|_2\leq c_0d/\sqrt{T}$ and $\mathbb E[R_T(\theta)]\geq c_1d/\sqrt{T}$ for some absolute constants $c_0,c_1\in (0,1)$. Let us consider now consider such a $\theta$. If we choose $\delta=c_1/4$, we have $\varepsilon\geq \frac{c_1d}{2\sqrt{T}}$ due to \eqref{eq:ball-simple-regret-upper}. As $H_1\leq H_2$ and $T=\frac{H_1\log(1/\delta)+H_2}{\varepsilon^2}$, we have $H_2\geq c_2 d^2$ for some absolute constant $c_2>0$.

Let us also consider the case where stopping time $\tau$ is randomized as it will be used in the next section. Due to \cite{chen2024assouad}, there exists a $\theta\in\mathbb{R}^d$ such that $\|\theta\|_2\leq c_0d/\sqrt{T}$ and $\mathbb E[R_T(\theta)]\geq c_1d/\sqrt{T}$ for some absolute constants $c_0,c_1\in (0,1)$. Consider one such $\theta$. Let assume that $\mathbb{E}[\tau]=T_0:=\frac{H_1\log(1/\delta)+H_2}{\varepsilon^2}$ with $0<H_1\leq H_2$. Let us consider $T=\frac{1}{\delta}\cdot T_0 $. If the algorithm terminates before $T$ rounds, we pad the remaining rounds using dummy samples. Now we have the following due to Markov's inequality:
\[
   \mathbb E[R_T(\theta)]\leq \varepsilon +2 \|\theta\|( \mathbb{P}(\tau>T)+\mathbb{P}(R_T(\theta)>\varepsilon,\tau\leq T))\leq \varepsilon +4\delta\cdot \|\theta\|
\]
Hence, for small enough absolute constant $\delta$, we have $H_2\geq c_3\cdot d^2$ for an absolute constant $c_3$.



\section{Polynomial Separation Instance's Non-Adaptive Lower Bound}\label{appendix-poly-sep-lower-bound}
Recall $\mathcal{X}=\bigcup_{i=1}^k \mathcal{X}_i\subseteq \mathbb{R}^{kd}$, where for each $i\in[k]$ we define the \emph{$i$-th block of coordinates} as $B_i:=\{(i-1)d+1,\ldots,id\}$ and
\[
\mathcal{X}_i:=\Bigl\{x\in\mathbb{R}^{kd}:\ \mathrm{supp}(x)\subseteq B_i,\ \|x\|_2\le 1\Bigr\}.
\]
Consider any (possibly randomized) non-adaptive $(\varepsilon,\delta)$-PAC algorithm that uses $T=\frac{H_1\log(1/\delta)+H_2}{\varepsilon^2}$ samples with $0<H_1\le H_2$, and let $\tau_i:=\sum_{t=1}^T \mathbbm{1}\{X_t\in\mathcal{X}_i\}$ denote the (random) number of samples allocated to block $i$, so that $\sum_{i=1}^k \tau_i=T$ and hence there exists $i^\star\in[k]$ with $T_{i^\star}:=\mathbb{E}[\tau_{i^\star}]\le T/k$. Fix such an $i^\star$ and define a hard instance $\theta\in\mathbb{R}^{kd}$ supported only on block $B_{i^\star}$ as follows: set $\theta_j=0$ for all $j\notin B_{i^\star}$, and on the coordinates in $B_{i^\star}$ set $(\theta_j)_{j\in B_{i^\star}}=\vartheta$ where $\vartheta\in\mathbb{R}^d$ is chosen according to the unit-ball lower bound from the previous section (\cite{chen2024assouad} together with the padding and Markov's inequality argument), so that for some absolute constants $c_0,c_1\in(0,1)$ one has $\|\vartheta\|_2\le c_0\cdot \sqrt{\delta}\cdot d/\sqrt{T_{i^\star}}$ and any algorithm that receives $T_{i^\star}/\delta$ informative samples on this block satisfies 
\[
c_1\cdot \sqrt{\delta}\cdot d/\sqrt{T_{i^\star}} \le\mathbb{E}[R_{T_{i^\star}/\delta}(\vartheta)]\le \varepsilon +4\delta\cdot\|\vartheta\| 
\]
For our chosen $\theta$, any action $x\in\mathcal{X}_j$ with $j\neq i^\star$ satisfies $\langle x,\theta\rangle=0$, so samples taken outside block $i^\star$ are uninformative; moreover, if the algorithm outputs $\hat x\in\mathcal{X}_j$ with $j\neq i^\star$, then $\langle \hat x,\theta\rangle=0$, which is the same value as outputting the zero vector on block $i^\star$. Therefore the $(\varepsilon,\delta)$-PAC requirement for sufficiently small absolute constant $\delta$ forces $H_2\ge c_3\cdot kd^2$ for an absolute constant $c_3>0$.

$H_1\geq \Omega(kd)$ follows directly from Theorem~\ref{thm:adaptive-lower-bound}.


\end{document}